\def\isarxivversion{1} 
\newtheorem{theorem}{Theorem}[section]
\newtheorem{lemma}[theorem]{Lemma}
\newtheorem{definition}[theorem]{Definition}
\newtheorem{remark}[theorem]{Remark}
\newtheorem{claim}[theorem]{Claim}
\renewcommand{\tilde}{\widetilde}
\newcommand{\wt}{\widetilde}
\newcommand{\ov}{\overline}
\newcommand{\GS}{\mathrm{GS}}
\newcommand{\ap}{\mathrm{ap}}
\newcommand{\dfp}{\mathrm{dp}}
\renewcommand{\d}{\mathrm{d}}
\DeclareMathOperator{\poly}{poly}
\DeclareMathOperator{\R}{{\mathbb R}}
\DeclareMathOperator{\N}{{\mathcal{N}}}
\DeclareMathOperator*{\E}{{\mathbb{E}}}
\DeclareMathOperator*{\var}{\mathrm{Var}}
\DeclareMathOperator*{\Var}{\mathrm{Var}}
\definecolor{b2}{RGB}{51,153,255}
\definecolor{mygreen}{RGB}{80,180,0}
\definecolor{mycy2}{RGB}{255,51,255}
\begin{document}

\ifdefined\isarxivversion

\title{Privacy-preserving Learning via Deep Net Pruning}

\date{}

\author{
Yangsibo Huang\thanks{\texttt{yangsibo@princeton.edu}. Princeton University.} \hspace{8mm}
\and
Yushan Su\thanks{\texttt{yushans@princeton.edu}. Princeton University.} \hspace{8mm}
\and
Sachin Ravi\thanks{\texttt{sachinravi14@gmail.com}. Princeton University.} \hspace{8mm}
\and
Zhao Song\thanks{\texttt{zhaos@ias.edu}. Princeton University and Institute for Advanced Study.} 
\and
\hspace{5mm}
Sanjeev Arora\thanks{\texttt{arora@cs.princeton.edu}. Princeton University.} \hspace{5mm}
\and
Kai Li\thanks{\texttt{li@cs.princeton.edu}. Princeton University.} \hspace{4mm}
}

\else

\icmltitlerunning{Privacy-preserving Learning via Deep Net Pruning}

\twocolumn[
\icmltitle{Privacy-preserving Learning via Deep Net Pruning}



\icmlsetsymbol{equal}{*}

\begin{icmlauthorlist}
\icmlauthor{Aeiau Zzzz}{equal,to}
\icmlauthor{Bauiu C.~Yyyy}{equal,to,goo}
\icmlauthor{Cieua Vvvvv}{goo}
\icmlauthor{Iaesut Saoeu}{ed}
\icmlauthor{Fiuea Rrrr}{to}
\icmlauthor{Tateu H.~Yasehe}{ed,to,goo}
\icmlauthor{Aaoeu Iasoh}{goo}
\icmlauthor{Buiui Eueu}{ed}
\icmlauthor{Aeuia Zzzz}{ed}
\icmlauthor{Bieea C.~Yyyy}{to,goo}
\icmlauthor{Teoau Xxxx}{ed}
\icmlauthor{Eee Pppp}{ed}
\end{icmlauthorlist}

\icmlaffiliation{to}{Department of Computation, University of Torontoland, Torontoland, Canada}
\icmlaffiliation{goo}{Googol ShallowMind, New London, Michigan, USA}
\icmlaffiliation{ed}{School of Computation, University of Edenborrow, Edenborrow, United Kingdom}

\icmlcorrespondingauthor{Cieua Vvvvv}{c.vvvvv@googol.com}
\icmlcorrespondingauthor{Eee Pppp}{ep@eden.co.uk}

\icmlkeywords{Machine Learning, ICML}

\vskip 0.3in
]

\fi

\ifdefined\isarxivversion
\begin{titlepage}
  \maketitle
  \begin{abstract}
This paper attempts to answer the question whether neural network pruning can be used as a tool to achieve differential privacy without losing much data utility. As a first step towards understanding the relationship between neural network pruning and differential privacy, this paper proves that pruning a given layer of the neural network is equivalent to adding a certain amount of differentially private noise to its hidden-layer activations. The paper also presents experimental results to show the practical implications of the theoretical finding and the key parameter values in a simple practical setting. These results show that neural network pruning can be an effective alternative to adding differentially private noise for neural networks.

  \end{abstract}
  \thispagestyle{empty}
\end{titlepage}

\else

\begin{abstract}

\end{abstract}

\fi


\section{Introduction}

Data privacy has become one of the top concerns in machine learning with deep neural networks, since there is an increasing demand to train deep net models on distributed, private data sets.  For example, hospitals are now training their automated diagnosis systems on private patients' data \cite{lst+16, ls17, dlr+18}; and advertisement providers are collecting users' online trajectories to optimize their learning-based recommendation algorithm \cite{cas16,yhc+18}. These private data, however, are usually decentralized in nature, and policies such as  the Health Insurance Portability and Accountability Act (HIPAA) \cite{act1996health} and the California Consumer Privacy Act (CCPA) \cite{ccpa} restrict the exchange of raw data among distributed users.

Various schemes have been proposed for privacy sensitive deep learning with distributed private data, where model updates \cite{kmy+16} or hidden-layer representations \cite{vgsr18} are exchanged  instead of the raw data. However, recent research identified that even if the raw data are kept private, sharing the model updates or hidden-layer activations can still leak sensitive information about the input, which we refer to as the victim. Such leakage can be: the victim's class, the victim's feature \cite{fjr15}, or even its original record \cite{mv15,db16,zlh19}.  Privacy leakage poses a severe threat to individuals whose private records have been collected to train the deep neural network.

Differential privacy (DP) \cite{dmns06} has emerged, during the past few years, as a standard framework to analyze privacy leakage. The core idea of achieving differential privacy is to add controlled noise to the output of a deterministic function. However, there is a long standing trade-off in adding noise to preserve privacy: as privacy leakage decreases, accuracy decreases too. For data without strong signals, an alternative that satisfies privacy guarantees without decreasing accuracy is strongly desired. 

\begin{figure}[h]
    \centering
    \subfloat[0]{\includegraphics[width=0.08\textwidth]{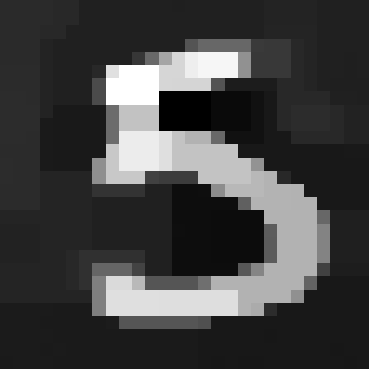}}\hspace{2mm}
    \subfloat[0.1]{\includegraphics[width=0.08\textwidth]{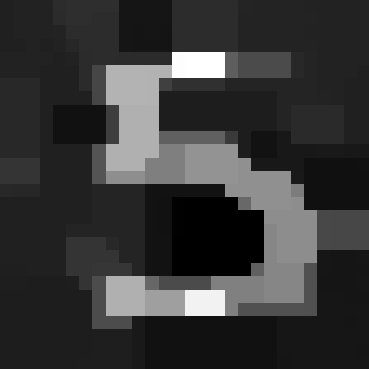}}\hspace{2mm}
    \subfloat[0.2]{\includegraphics[width=0.08\textwidth]{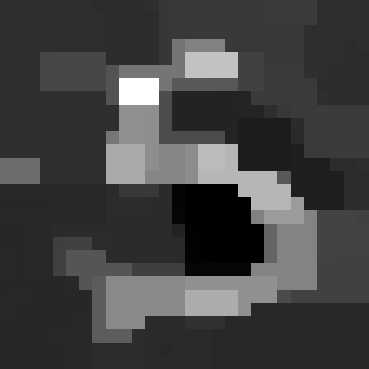}}\hspace{2mm}
    \subfloat[0.3]{\includegraphics[width=0.08\textwidth]{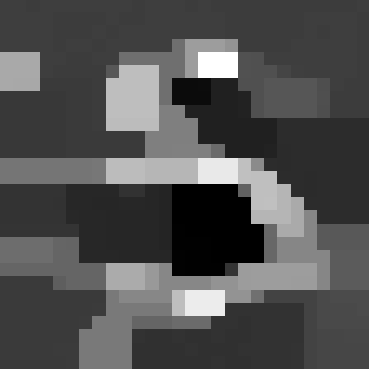}}\hspace{2mm}
    \subfloat[0.4]{\includegraphics[width=0.08\textwidth]{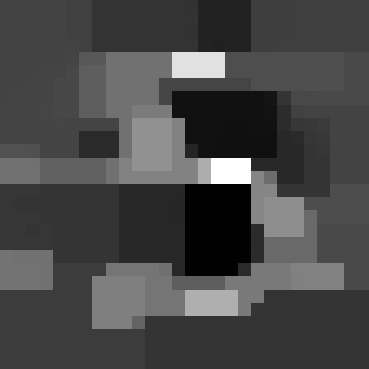}}\hspace{2mm}
    \subfloat[0.5]{\includegraphics[width=0.08\textwidth]{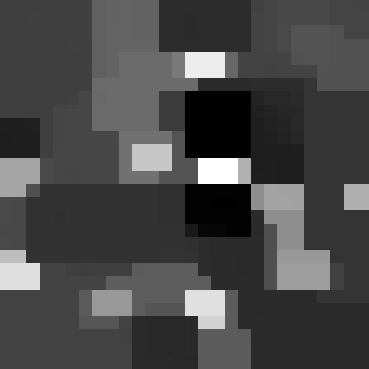}}\hspace{2mm}
    \subfloat[0.6]{\includegraphics[width=0.08\textwidth]{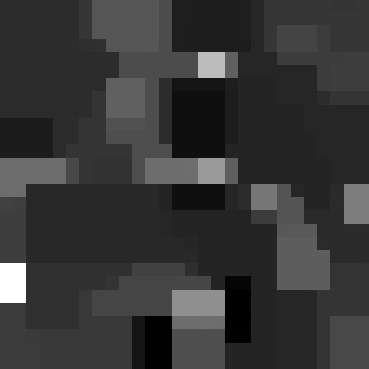}}\hspace{2mm}
    \subfloat[0.7]{\includegraphics[width=0.08\textwidth]{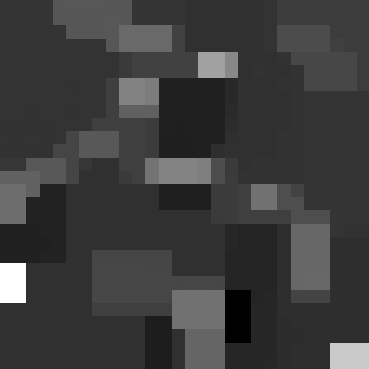}}\hspace{2mm}
    \subfloat[0.8]{\includegraphics[width=0.08\textwidth]{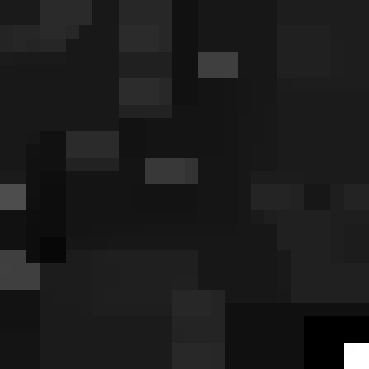}}\hspace{2mm}
    \subfloat[0.9]{\includegraphics[width=0.08\textwidth]{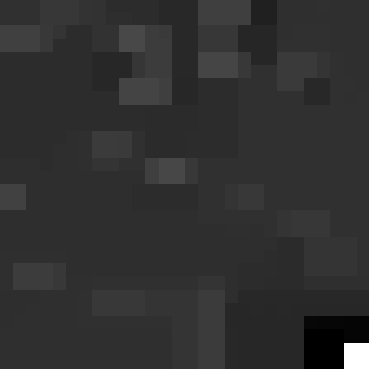}}\hspace{2mm}
    \caption{Visualization of inverted MNIST inputs from the Conv5 layer of LeNet-5 using the inversion algorithm in Section~\ref{sec:inversion_alg}.  We prune the network with different target sparsity, shown in caption (sparsity = the fraction of weights we set to 0). Pruning with higher sparsity seems to leak less information about the input.}
    \label{fig:vis_intro}

\end{figure}

In this work, we attempt to answer the question whether neural network pruning can be such an alternative. Two observations motivate this question.  First, pruning perturbs a neural network, which is somewhat similar to adding noise.  Figure~\ref{fig:vis_intro} shows that neural network pruning makes the inversion from hidden-layers harder as the sparsity of the pruned network increases. Second, \cite{hmd15,lkd+16} showed that they were able to prune a large percentage (e.g. 90$\%$) of the weights in several commonly used neural networks without reducing accuracy.

The approach taken in this paper is to prove the existence of an equivalence between pruning and adding differentially private noise.  By doing so, one can analyze privacy leakage of network pruning using the framework of differential privacy.  We have focused on a distributed learning setting in which a hidden-layer of activations will be used to communicate among distributed sites.  Thus, our study 
 explores the equivalence between neural network pruning and adding differentially private noise to activations from both theoretical and experimental perspectives.

The paper makes four contributions.  First, to the best of our knowledge, this is the first work that draws a connection between neural network pruning and differential privacy from a theoretical perspective. 

Second, we show that magnitude-based pruning algorithm is differentially private, if the width of the neural network is sufficiently wide. (see Section \ref{sec:main}).

Third, we show that in a synthetic setting with a practical differential budget, the width of the neural network needs to be only a few hundreds in order to make the theoretical equivalence hold (see Section \ref{sec:exp}).

Finally, our experiments with MNIST and CIFAR-10 show that, with the same target accuracy, magnitude pruning preserves substantially more privacy than adding random noise (the classical way to provide differential privacy) to neural network. 

\paragraph{Roadmap.}
 The rest of this paper is organized as follow. Section~\ref{sec:backgrounds} presents some backgrounds about differential privacy and neural network pruning. Section~\ref{sec:main} states our main theoretical result which shows the equivalence between differential privacy and magnitude-based pruning. Section~\ref{sec:related} covers relevant work in different privacy and neural network pruning, and also provides the intuition for possible connections between them from a perspective of sparse recovery. In Section \ref{sec:exp}, we run experiments on MNIST and CIFAR-10 datasets and observe that the experimental results match our theoretical findings. Finally, we conclude this work in Section \ref{sec:conclusion}.
 
 Appendix~\ref{sec:app_prob} presents several basic probability tools. Appendix~\ref{sec:app_conc} states some applications of concentration inequalities. Appendix~\ref{sec:app_anti} states some anti-contraction result and its generalization. Appendix~\ref{sec:app_sens} discusses about sensitivity. Appendix~\ref{sec:app_dp} finally proves our main result. Appendix~\ref{sec:app_exp} show several more experimental results.  
\section{Backgrounds}\label{sec:backgrounds}

\paragraph{Notations.}
For a positive integer $n$, we use $[n]$ to denote set $\{1,2,\cdots, n\}$. For vector $x \in \R^n$, we use $\| x \|_1$ to denote $\sum_{i=1}^n |x_i|$, $\| x \|_2$ to denote $( \sum_{i=1}^n x_i^2 )^{1/2}$, $\| x \|_{\infty}$ to denote $\max_{i \in [n]} |x_i|$. We use ${\cal N}(\mu,\sigma^2)$ to denote random Gaussian distribution. For a matrix $A$, we use $\| A \|$ to denote its spectral norm.

This section presents some backgrounds before theoretically establishing the equivalence between magnitude-based pruning and adding differentially private noise in Section~\ref{sec:main}. Section \ref{sec:dp} revisits the notion of $(\epsilon_{\dfp},\delta_{\dfp})$-differential privacy. Section \ref{sec:mag_prune} describes the magnitude pruning algorithm. 

\subsection{Differential privacy}
\label{sec:dp}
The classical definition of differential privacy is shown as follow:
\begin{definition}[$(\epsilon_{\dfp},\delta_{\dfp})$-differential privacy \cite{dkmmn06}]
\label{def:epsilon_delta_dp}
For a randomized function $h(x)$, we say $h(x)$ is $(\epsilon_{\dfp},\delta_{\dfp})$-differential privacy if for all $S \subseteq \mathrm{Range}(h)$ and for all $x,y$ with $\| x - y \|_1 \leq 1$
\begin{align*}
\Pr_h [ h(x) \in S ] \leq \exp(\epsilon_{\dfp}) \cdot \Pr_h [ h(y) \in S ] + \delta_{\dfp}.
\end{align*}
\end{definition}

Definition \ref{def:epsilon_delta_dp} says that, if there are two otherwise identical records $x$ and $y$, one with privacy-sensitive information in it, and one without it, and we normalize them such that $\|x-y\|_1 \leq 1$. Differential Privacy ensures that the probability that a statistical query will produce a given result is nearly the same whether it’s conducted on the first or second record. Parameters $(\epsilon_{\dfp},\delta_{\dfp})$ are called the privacy budget, and smaller $\epsilon_{\dfp}$ and $\delta_{\dfp}$ provide a better differential privacy protection. One can think of a setting where both parameters are 0, then the chance of telling whether a query result is from $x$ or from $y$ is no better than a random guessing.

A standard strategy to achieve differential privacy is by adding noise to the the original data $x$ or the function output $h(x)$. In order to analyze it, we need the following definition: 

\begin{definition}[Global Sensitivity \cite{dmns06}] 
\label{def:lp_sen}
Let $f : \mathbb{R}^n \rightarrow \mathbb{R}^d$, define $\GS_p(f)$, the $\ell_p$ global sensitivity of $f$, for all $x,y$ with $\| x - y \|_1 \leq 1$ as
\begin{align*}
     \GS_p(f) = \sup_{ x, y \in \mathbb{R}^n } \| f(x) - f(y) \|_p .
\end{align*}
\end{definition}
The global sensitivity of a function measures how `sensitive' the function is to slight changes in input. The noise needed for differential privacy guarantee is then calibrated using some well-known mechanisms, e.g., Laplace or Gaussian \cite{dr14}, and the amount of noise (the standard deviation of the noise distribution) is proportional to the sensitivity, but inversely proportional to the privacy budget $\epsilon_{\dfp}$. That is to say, for a given function with fixed global sensitivity, a larger amount of noise is required to guarantee a better differential privacy (one with a smaller budget $\epsilon_{\dfp}$). 

\subsection{Magnitude-based pruning}
\label{sec:mag_prune}
\begin{algorithm}[t]
    \caption{Stochastic Gradient Descent with Magnitude-based Pruning, simplified version of Algorithm~\ref{alg:mag_prune_full} 
    }
    \label{alg:mag_prune}
    \begin{algorithmic}[1]{
    \Procedure{\textsc{SGDMagPrune}}{$\cal D$, $a$}
    \State $W^{(1)}$ is a random init. of neural network's weights
    \For{$t = 1 \to T_{\text{train}} + T_{\text{prune}}$ } \Comment{Training stage} \label{lin:train_start}
        \State Sample $(x,y)\sim {\cal D}$ uniformly at random
        \If{$t \in T_{\text{prune}}$} \Comment{Pruning stage} \label{lin:prune_start}
          \State  $\wt{W}^{(t)} \leftarrow \textsc{ThPrune}(W^{(t)},a^{(t)})$
        \EndIf
        \State Update $W^{(t+1)}$ based on $\wt{W}^{(t)}$ and gradient
    \EndFor \label{lin:train_end} \label{lin:prune_end}
    \State $T_{\text{end}} \gets T_{\text{train}}+T_{\text{prune}}$
    \State $\wt{W}^{(T_{\text{end}})} \leftarrow \textsc{ThPrune}(W^{(T_{\text{end}})},a^{ ( T_{\text{end}} ) })$
  
    \EndProcedure
    \Procedure{\textsc{ThPrune}}{$W,a$} 
        \State $(\tilde{W}_l)_{i,j}  \leftarrow (W_l)_{i,j} \cdot {\bf 1}_{  |(W_l)_{i,j}| > a }$, $\forall i, j$
        \State \Return $\wt{W}$
    \EndProcedure}
    \end{algorithmic}
\end{algorithm}

Procedure \textsc{SGDMagPrune} (see Algorithm \ref{alg:mag_prune}) describes the process of training a deep neural network with stochastic gradient descent and magnitude-based pruning. 

As shown, the procedure begins with a standard training stage of $T_{\mathrm{train}}$ iterations (line~\ref{lin:train_start} to line~\ref{lin:train_end}). After that, the model enters the pruning stage of $T_{\mathrm{prune}}$ iterations (line~\ref{lin:prune_start} to line~\ref{lin:prune_end}). Inside each iteration of the pruning stage, we firstly perform a layer-wise threshold pruning (see procedure \textsc{ThPrune} in Algorithm~\ref{alg:mag_prune}) which sets the weights with magnitudes smaller than $a$ to zero. Then, we run model update once. At the end of the prune stage, we perform a layer-wise pruning again to guarantee that the resulted weight matrix achieves a certain sparsity (fraction of zeros).

Note that inside each pruning iteration above, we perform magnitude pruning with the threshold $a^{(t)}$. In practice, $a^{(t)}$ is determined by three factors: the target sparsity of the matrix after pruning, the total number of pruning iterations $T_{\text{prune}}$, and the current number of pruning iterations $t$. Since the proof only cares about the final state of the weight matrix, we leave the details of how to dynamically configure $a^{(t)}$  to the experiment section.

\section{Main result}\label{sec:main}

We start by formulating the equivalence between pruning and adding differentially private noise. We propose the following notion to describe the closeness between a randomized function and a given function (either randomized or deterministic).
\begin{definition}[$(\epsilon_{\ap},\delta_{\ap})$-close]\label{def:epsilon_delta_close}
For a pair of functions $g : \R^d \rightarrow \R^m$ and $h : \R^d \rightarrow \R^m$, and a fixed input $x$, we say $g(x)$ is $(\epsilon,\delta)$-close to $h(x)$ if and only if, 
\begin{align*}
\Pr_{g,h} \left[ \frac{1}{\sqrt{m}} \| g(x) - h(x) \|_2 \leq \epsilon \right] \geq 1- \delta.
\end{align*}
\end{definition}

$(\epsilon_{\ap},\delta_{\ap})$-closeness basically requires that, the root-mean-square error of two functions' output with a given input is small enough. 

Now we present our main theoretical result.
\begin{theorem}[Informal of Theorem~\ref{thm:for_general_x}]
\label{thm:main_thm}
For a fully connected neural network (each layer can be viewed as $f(x) = \phi(Ax+b)$), where $\|x\|_2 = 1$ and $x \in \R^d_{\geq 0}$. Applying magnitude-based pruning on the weight $A \in \R^{m \times d}$ (where each $A_{i,j} \sim {\cal N}(0,\sigma_A^2)$) gives us $\wt{A} \in \R^{m \times d}$. 
There exists a function $h(x)$ satisfying two properties :
\begin{enumerate}
    \item $h(x)$ is $(\epsilon_{\dfp},\delta_{\dfp})$-differential privacy on input $x$;
    \item$h(x)$ is $(\epsilon_{\ap},\delta_{\ap})$-close to $g(x) = \phi(\wt{A} x +b)$.
\end{enumerate}

 where 
$
m = \Omega( \poly( 1/ \epsilon_{\ap} , \log(1/\delta_{\ap}) , \log(1/\delta_{\dfp}) ) )
$ 
 and $\sigma_A = O( \epsilon_{\dfp} \delta_{\dfp}/(m^2) )$. 
\end{theorem}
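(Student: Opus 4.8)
The plan is to exhibit $h$ as a Gaussian surrogate for the pruned pre-activations and to verify the two required properties separately, with the random draw of $A$ itself serving as the source of privacy. The starting point is the decomposition $\wt{A}x + b = (Ax+b) - \xi(x)$, where $\xi(x) = (A-\wt{A})x$ collects the pruned-away mass; equivalently, I write the $i$-th pre-activation coordinate as $u(x)_i = \sum_{j=1}^d A_{i,j}\,\mathbf{1}_{|A_{i,j}| > a}\,x_j + b_i$, an independent-across-$i$ sum of $d$ independent, symmetric, mean-zero terms. Because $\|x\|_2 = 1$, every coordinate has the same variance $s^2 := \var(A_{1,1}\mathbf{1}_{|A_{1,1}|>a})$ \emph{regardless} of the admissible input, so the natural choice is the surrogate $h(x) = \phi(\zeta)$ with $\zeta \sim \N(b, s^2 I_m)$, applied as a post-processing of the Gaussian draw. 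This choice also explains the direction of the claimed scaling: shrinking $\sigma_A$ shrinks $s$, so the pruned output concentrates near $\phi(b)$ for \emph{every} admissible $x$, making the output law nearly input-independent, which is exactly why the privacy budget improves as $\sigma_A$ decreases and why one expects an upper bound of the form $\sigma_A = O(\epsilon_{\dfp}\delta_{\dfp}/m^2)$.

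First I would establish the $(\epsilon_{\ap},\delta_{\ap})$-closeness of Definition~\ref{def:epsilon_delta_close}. Using a coordinatewise quantile coupling between the true $u(x)_i$ and the Gaussian $\zeta_i$, a quantitative central limit theorem (Berry--Esseen) controls $\E|u(x)_i - \zeta_i|$ in terms of the normalized third moment $(\E|A_{1,1}\mathbf{1}_{|A_{1,1}|>a}|^3 / s^3)\,\|x\|_3^3$. Since $\phi$ is $1$-Lipschitz, we have $\tfrac{1}{\sqrt m}\|g(x)-h(x)\|_2 \le \tfrac{1}{\sqrt m}\|u(x)-\zeta\|_2$, and because the $m$ coordinates are independent, the right-hand side concentrates around its mean by the bounds collected in Appendix~\ref{sec:app_conc}. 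Driving this root-mean-square below $\epsilon_{\ap}$ except with probability $\delta_{\ap}$ is what forces $m = \Omega(\poly(1/\epsilon_{\ap}, \log(1/\delta_{\ap})))$; the extra $\log(1/\delta_{\dfp})$ factor enters when I intersect with the high-probability event on which the privacy argument below is valid.

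Next I would prove that $h$ is $(\epsilon_{\dfp},\delta_{\dfp})$-differentially private in the sense of Definition~\ref{def:epsilon_delta_dp}. The key point is that the surrogate law has the same variance $s^2$ for every unit-norm input, so for neighboring $x,y$ the two output distributions differ only through the \emph{residual} input-dependence that survives after matching the leading Gaussian behavior, not through any first-order mean shift. Concretely, I would bound the log-likelihood ratio of $h(x)$ to $h(y)$ by writing each as a perturbation of the common Gaussian, using the sensitivity estimates of Appendix~\ref{sec:app_sens}---which, via $\|x-y\|_1 \le 1$ and column-norm control of the Gaussian matrix $A$ from Appendix~\ref{sec:app_conc}, bound the parameter mismatch---and using the anti-concentration statements of Appendix~\ref{sec:app_anti} to guarantee the surrogate density is non-degenerate, so that a small mismatch produces only a small multiplicative ratio. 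Post-processing by $\phi$ then preserves the guarantee. Tracking the constants through the $m$-fold product over coordinates and charging the low-probability event where the ratio is large to $\delta_{\dfp}$ is what yields the scaling $\sigma_A = O(\epsilon_{\dfp}\delta_{\dfp}/m^2)$; the full bookkeeping is deferred to Appendix~\ref{sec:app_dp}.

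The main obstacle is the privacy step, specifically converting distributional closeness of $u(x)$ and $u(y)$ into a \emph{multiplicative}, differential-privacy-style likelihood-ratio bound rather than a mere total-variation bound. The difficulty is worst-case in the input: the Berry--Esseen error is governed by $\|x\|_3^3$, so a spiky (nearly one-hot) $x$ reduces each coordinate to a single truncated-Gaussian weight with no central-limit smoothing, and the tails of the likelihood ratio become delicate to control. I expect to handle exactly this with the anti-concentration results of Appendix~\ref{sec:app_anti}, which lower-bound how spread out the pruned-weight distribution is, together with the additive slack $\delta_{\dfp}$ to absorb the low-probability tail---which is also why the honest, fully general statement is the formal Theorem~\ref{thm:for_general_x} rather than this informal version.
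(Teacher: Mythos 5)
Your construction is genuinely different from the paper's, and as written it does not hang together. The paper's $h$ is the \emph{unpruned} layer plus explicit, input-dependent noise: $h(x)=f(x)+e$ with $f(x)=\phi(Ax+b)$ and $e=\mathrm{Lap}(1,\sigma)^m\circ(\ov{A}x)$, where $\ov{A}=\wt{A}-A$ is the perturbation that pruning induces on the weights. Privacy is then a Laplace-mechanism density-ratio computation whose budget $\epsilon_{\dfp}$ is governed by $\GS_1(f)$ divided by $\sigma\cdot\min_{i}|(\ov{A}x)_i|$ --- this is where the sensitivity bounds of Appendix~\ref{sec:app_sens} and the coordinatewise lower bound on $\ov{A}x$ (Lemma~\ref{lem:anti_concentration_ax}) enter --- and closeness to $g(x)=\phi(\wt{A}x+b)$ follows because $\E[e]=\ov{A}x$, so $\frac{1}{\sqrt m}\|e-\ov{A}x\|_2$ concentrates by Bernstein. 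Your $h(x)=\phi(\zeta)$ with $\zeta\sim\N(b,s^2I_m)$ drawn independently of $x$ is a different animal: its output law does not depend on the input at all, so property 1 holds trivially as $(0,0)$-DP. That makes your entire privacy paragraph --- likelihood ratios between $h(x)$ and $h(y)$, sensitivity, anti-concentration, and the bookkeeping you say yields $\sigma_A=O(\epsilon_{\dfp}\delta_{\dfp}/m^2)$ --- an argument about the wrong object: there is no ``residual input-dependence'' in your $h$ to control. What you are actually sketching there is a claim that the pruned network $g$ itself is differentially private over the draw of $A$, which is neither what the theorem asks for nor something your tools deliver.

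The closeness step also has a hole you half-acknowledge but do not close. For a one-hot $x$, the coordinate $u(x)_i$ is a single truncated Gaussian $A_{i,j}\mathbf{1}_{|A_{i,j}|>a}+b_i$, which carries an atom at $b_i$ of mass $\Pr[|A_{i,j}|\le a]$ and is not close in distribution to $\N(b_i,s^2)$ at its own scale $s$; Berry--Esseen gives nothing here, and anti-concentration --- which lower-bounds spread --- cannot substitute for a distributional-approximation bound. The only reason your construction survives at all is the degenerate parameter regime: with $\sigma_A=O(\epsilon_{\dfp}\delta_{\dfp}/m^2)$ both $g(x)$ and $\phi(\zeta)$ collapse to within roughly $O(\sigma_A\sqrt{d})$ of the constant $\phi(b)$, so the root-mean-square error is small for trivial reasons --- but you never invoke this, and the machinery you do invoke does not prove the claim uniformly over admissible $x$. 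To repair the argument along the paper's lines you need an input-dependent $h$ whose injected noise simultaneously (i) has mean equal to $\ov{A}x$, so that closeness to $g$ reduces to a concentration statement, and (ii) has Laplace tails at per-coordinate scale $\sigma|(\ov{A}x)_i|$, heavy enough to pay for the $\ell_1$-sensitivity of $f$ in a multiplicative density-ratio bound.
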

In the above theorem, we should think of $m$ as the width of the neural network, and $d$ as the input data dimension. $\phi$ is the activation function, e.g., $\phi(z) = \max\{z,0\}$.

Regarding the two properties of $h(x)$, property 1 requires $h(x)$ to provide $(\epsilon_{\dfp},\delta_{\dfp})$-differential privacy, and property 2 requires that $h(x)$ is `equivalent' to magnitude-based pruning with the predefined $(\epsilon_{\ap},\delta_{\ap})$-close notation.

\paragraph{Proof Sketch }

We use $\wt{A} \in \R^{m \times d}$ to denote the weight matrix after magnitude-based pruning, and $\ov{A} = \wt{A} - A \in \R^{m \times d}$. We define a noise vector $e \in \R^m$ as follows:
\begin{align*}
    e = \text{Lap} (1,\sigma)^m \circ ( \ov{A} x ).
\end{align*}

The main proof can be split into two parts.
\begin{claim}\label{cla:part1}
 Let $h(x) = f(x) + e \in \R^m$, we can show that $h(x)$ is $(\epsilon_{\dfp}, \delta_{\dfp})$-differential privacy.
\end{claim}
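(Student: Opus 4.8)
The plan is to establish differential privacy through a coordinatewise Laplace-mechanism argument, treating $\ov{A}$ as fixed on a high-probability ``good event'' for the Gaussian weights and reasoning about the randomness injected by the Laplace factors. First I would fix a neighboring pair $x,y$ with $\|x-y\|_1 \le 1$ and write out the law of $h(x)$ explicitly. Since the factors $L_i \sim \text{Lap}(1,\sigma)$ are independent across coordinates, a change of variables shows that each output coordinate $h(x)_i = f(x)_i + (\ov{A} x)_i L_i$ is itself a Laplace variable, with location $\mu_i(x) = f(x)_i + (\ov{A}x)_i$ and \emph{input-dependent} scale $s_i(x) = \sigma\,|(\ov{A}x)_i|$, and the joint density factorizes over $i$. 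This lets me reduce the privacy-loss random variable $\mathcal{L}(z) = \log\big( p_{h(x)}(z)/p_{h(y)}(z)\big)$ to a sum $\mathcal{L}(z) = \sum_{i=1}^m \mathcal{L}_i(z_i)$ of independent contributions, and the goal becomes the standard sufficient condition $\Pr_{z\sim h(x)}[\,\mathcal{L}(z) > \epsilon_{\dfp}\,] \le \delta_{\dfp}$.

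Next, for each coordinate I would expand $\mathcal{L}_i(z_i)$ into a \emph{normalization} term $\log\big(|(\ov{A}y)_i| / |(\ov{A}x)_i|\big)$ coming from the scale mismatch, plus an \emph{exponent} term $\tfrac{1}{\sigma}\big(|z_i - \mu_i(y)|/|(\ov{A}y)_i| - |z_i - \mu_i(x)|/|(\ov{A}x)_i|\big)$. The exponent term is controlled by the triangle inequality once the two scales $s_i(x),s_i(y)$ are comparable and the location gap $|\mu_i(x)-\mu_i(y)|$ is small relative to the scale; this is exactly where the smallness of $\sigma_A$ (hence of $\ov{A}$) and the sensitivity bounds of Appendix~\ref{sec:app_sens} enter, controlling both $|f(x)_i - f(y)_i|$ and $|(\ov{A}(x-y))_i|$.

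Then I would bring in the probabilistic tools. The concentration inequalities of Appendix~\ref{sec:app_conc} give upper-tail control on $|(\ov{A}x)_i|$ and on the aggregate sum $\sum_i \mathcal{L}_i$, while the anti-contraction results of Appendix~\ref{sec:app_anti} guarantee that each $|(\ov{A}x)_i|$ is bounded away from zero, so that no denominator blows up (in particular no coordinate degenerates to a noiseless, fully-leaking $h(x)_i = f(x)_i$) and the aggregate normalization term $\sum_i \log\big(|(\ov{A}y)_i|/|(\ov{A}x)_i|\big)$ stays small. On the intersection of these good events, which holds with probability $\ge 1 - \delta_{\dfp}/2$ after choosing $m = \Omega(\poly(1/\epsilon_{\ap}, \log(1/\delta_{\ap}), \log(1/\delta_{\dfp})))$ and $\sigma_A = O(\epsilon_{\dfp}\delta_{\dfp}/m^2)$, I would show that $\E_{z\sim h(x)}[\mathcal{L}]$ is at most $\epsilon_{\dfp}/2$ and that $\mathcal{L}$ concentrates around its mean (each summand being sub-exponential), yielding $\Pr_{z\sim h(x)}[\mathcal{L}(z) > \epsilon_{\dfp}] \le \delta_{\dfp}$ and hence $(\epsilon_{\dfp},\delta_{\dfp})$-differential privacy by the privacy-loss characterization.

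The hard part will be the scale-mismatch inherent in the multiplicative noise: because the Laplace scale $s_i(\cdot) = \sigma|(\ov{A}\,\cdot)_i|$ itself depends on the input, the construction is \emph{not} a textbook fixed-scale Laplace mechanism, and the per-coordinate privacy loss carries the extra $\log\big(|(\ov{A}y)_i|/|(\ov{A}x)_i|\big)$ term together with a possible division by a small $(\ov{A}x)_i$. Taming this simultaneously across all $m$ coordinates --- keeping every denominator bounded below (anti-concentration) while keeping the accumulated privacy loss below $\epsilon_{\dfp}$ (concentration), and splitting both failure probabilities against the single budget $\delta_{\dfp}$ --- is the crux of the argument, and it is precisely this balancing act that forces the polynomial lower bound on the width $m$ and the stated scaling of $\sigma_A$.
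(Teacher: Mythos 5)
Your proposal shares the paper's core ingredients but routes the final step differently. The paper's proof of this claim (Part~1 of the proof of Theorem~\ref{thm:for_positive_x}) fixes neighboring inputs, writes the coordinatewise density of the scaled Laplace noise, applies the triangle inequality to the exponents, upper-bounds the numerator by $\GS_1(f)$ (Lemma~\ref{lem:gs1_single_layer}) and lower-bounds each $|(\ov{A}x)_i|$ via the folded-Gaussian concentration of Lemma~\ref{lem:anti_concentration_ax}, obtaining a \emph{pointwise} bound $\exp(\epsilon_{\dfp})$ on the density ratio conditioned on these high-probability events; the entire $\delta_{\dfp}$ is charged to the failure of those events over the randomness of $A$. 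You instead pass through the privacy-loss random variable, aiming for $\E[\mathcal{L}]\le\epsilon_{\dfp}/2$ plus sub-exponential concentration of $\mathcal{L}$ over the Laplace randomness. Both are legitimate $(\epsilon,\delta)$-DP arguments, and up to the triangle-inequality/sensitivity/anti-concentration step they coincide; your route is the more standard one for approximate DP but requires an extra moment computation the paper avoids.

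The substantive divergence is the normalization term $\sum_i \log\bigl(|(\ov{A}y)_i|/|(\ov{A}x)_i|\bigr)$. You are right that the true density of $e_i=(\ov{A}x)_i L_i$ carries a factor $1/(2\sigma|(\ov{A}x)_i|)$, so this term genuinely appears; the paper writes the density with normalizing constant $1/(2\sigma)$ only, so the term silently cancels in their ratio and is never confronted. However, you assert this aggregate term ``stays small'' without an argument, and this is where your proposal stops short of a proof: the anti-concentration lower bound gives $|(\ov{A}x)_i|=\Omega(a^2/\sigma_A)$, while $|(\ov{A}(x-y))_i|$ can be as large as $O(a)$, and with the paper's parameter choice $a=\Theta(\sigma_A)$ these are of the same order, so each per-coordinate log-ratio can be $\Theta(1)$ and the sum over $m$ coordinates can be $\Theta(m)$ unless you exhibit cancellation. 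You correctly identify this as the crux, but identifying it is not the same as taming it; as written, neither your expectation bound $\E[\mathcal{L}]\le\epsilon_{\dfp}/2$ nor the concentration step goes through until that term is controlled, so the proposal needs an additional quantitative argument (or a restriction on how $(\ov{A}x)_i$ and $(\ov{A}y)_i$ co-vary) before it closes.
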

\begin{claim}\label{cla:part2}
 For sufficiently large $m$, we have 
    \begin{align*}
    \Pr \Big[ \frac{1}{\sqrt{m}} \| e - \ov{A} x \|_2 \geq \epsilon_{\ap} \Big] \leq \delta_{\ap}.
    \end{align*}
\end{claim}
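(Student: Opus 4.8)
The plan is to read Claim~\ref{cla:part2} as a concentration inequality for a sum of $m$ independent nonnegative summands and to exploit the product structure of the multiplicative noise. First I would rewrite the quantity entrywise. Since $e_i = L_i\cdot(\ov{A}x)_i$ with $L_i \sim \mathrm{Lap}(1,\sigma)$ i.i.d., putting $Z_i := L_i - 1 \sim \mathrm{Lap}(0,\sigma)$ gives $(e-\ov{A}x)_i = Z_i\,(\ov{A}x)_i$, so that
\begin{align*}
\frac{1}{m}\|e-\ov{A}x\|_2^2 = \frac{1}{m}\sum_{i=1}^m Z_i^2\,(\ov{A}x)_i^2 =: \frac{1}{m}\sum_{i=1}^m W_i.
\end{align*}
Because $(\ov{A}x)_i$ depends only on the $i$-th row of $A$, distinct rows are independent, and the $Z_i$ are independent of $A$, the $W_i$ are i.i.d.\ and nonnegative; the goal is then the tail bound $\Pr[\frac{1}{m}\sum_i W_i \ge \epsilon_{\ap}^2]\le \delta_{\ap}$.

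Next I would record the two structural facts about the factors. From $\ov{A}_{i,j} = -A_{i,j}\,{\bf 1}_{|A_{i,j}|\le a}$ (the pruned-away Gaussian mass) and $\|x\|_2 = 1$ I get the deterministic envelope $|(\ov{A}x)_i|\le a\|x\|_1 \le a\sqrt{d}$, while independence across $j$ and the vanishing first moment of the truncated Gaussian give $\E[(\ov{A}x)_i^2] = \beta^2$ with $\beta^2 = \E_{g\sim\N(0,\sigma_A^2)}[g^2\,{\bf 1}_{|g|\le a}] \le \min\{a^2,\sigma_A^2\}$. For the Laplace factor I use $\E[Z_i^2] = 2\sigma^2$ and the tail $\Pr[|Z_i|>t] = e^{-t/\sigma}$; in particular $\E[W_i] = 2\sigma^2\beta^2$.

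The concentration itself I would carry out in two stages to absorb the heavy tail of $Z_i^2$. A union bound over the $m$ Laplace variables shows that, with probability at least $1-\delta_{\ap}/2$, all $|Z_i|\le t_0 := \sigma\ln(2m/\delta_{\ap})$; on this event $\frac{1}{m}\sum_i W_i \le t_0^2\cdot\frac{1}{m}\sum_i(\ov{A}x)_i^2$. The remaining average is over i.i.d.\ terms bounded by $a^2 d$ with mean $\beta^2$, so a Bernstein inequality yields $\frac{1}{m}\sum_i(\ov{A}x)_i^2 \le 2\beta^2$ with probability at least $1-\delta_{\ap}/2$ once $m \gtrsim a^2 d\log(1/\delta_{\ap})/\beta^2$. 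Intersecting the two events gives, with probability at least $1-\delta_{\ap}$,
\begin{align*}
\frac{1}{m}\|e-\ov{A}x\|_2^2 \le 2\sigma^2\beta^2\ln^2(2m/\delta_{\ap}).
\end{align*}
It remains to check $2\sigma^2\beta^2\ln^2(2m/\delta_{\ap})\le \epsilon_{\ap}^2$; using $\beta\le\sigma_A$ and the scaling $\sigma_A = O(\epsilon_{\dfp}\delta_{\dfp}/m^2)$ together with the value of $\sigma$ fixed for the differential-privacy guarantee of Claim~\ref{cla:part1}, the left-hand side decays with $m$ and is below $\epsilon_{\ap}^2$ precisely in the regime $m = \Omega(\poly(1/\epsilon_{\ap},\log(1/\delta_{\ap}),\log(1/\delta_{\dfp})))$.

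The hard part is the heavy tail of the multiplicative noise: $Z_i^2$ is a squared Laplace, hence only sub-Weibull of order $1/2$ (tails like $e^{-\sqrt{t}/\sigma}$), so a direct Bernstein bound on $\sum_i W_i$ is not available and one must either truncate (as above, paying a $\log^2$ factor) or invoke a moment-based Bernstein inequality using $\E[W_i^k]\le (2k)!\,\sigma^{2k}(a^2 d)^{k-1}\beta^2$. A secondary difficulty is forcing the mean $2\sigma^2\beta^2$ below $\epsilon_{\ap}^2$ simultaneously with the DP-driven choice of $\sigma$ from Claim~\ref{cla:part1}, which is exactly what dictates the polynomial lower bound on the width $m$.
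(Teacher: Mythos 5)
Your argument is correct in substance and follows the same template as the paper's Part 2 --- truncate the heavy Laplace tail, then apply Bernstein --- but you factor the problem differently, and the comparison is instructive. The paper works directly with the product variables $z_i=(e_i-(\ov{A}x)_i)^2\sim \mathrm{Lap}^2(0,\sigma(\ov{A}x)_i)$: it computes $\E[z_i^2]\le 24 b_i^4$, caps $\max_i z_i$ with high probability, and runs Bernstein on $\sum_i z_i$ with that cap as the boundedness parameter $M$ --- which is exactly the truncation you describe, just applied to the product rather than to the Laplace factor alone. You instead cap all $|Z_i|$ by $\sigma\ln(2m/\delta_{\ap})$ via a union bound and then apply Bernstein only to the bounded, Gaussian-derived terms $(\ov{A}x)_i^2$; this is cleaner about why a naive Bernstein on the sub-Weibull $Z_i^2$ would fail, at the cost of the same $\log^2$ factor. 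Two small points. First, your deterministic envelope $|(\ov{A}x)_i|\le a\sqrt{d}$ puts a $d$-dependence into the required $m$ that the paper avoids by using the high-probability bound $|(\ov{A}x)_i|\lesssim a(\sqrt{a/\sigma_A}+1)\log(m/\delta)$ from Lemma~\ref{lem:concentration_of_inner_product}; substituting that lemma for your envelope recovers the paper's $d$-free width requirement. Second, the step you correctly isolate as the crux --- forcing $2\sigma^2\beta^2\ln^2(2m/\delta_{\ap})\le\epsilon_{\ap}^2$ --- is not obviously delivered by the paper's stated scalings: with $\sigma\sigma_A=\Theta(1)$ and $a=\Theta(\sigma_A)$ one gets $\sigma^2\beta^2=\Theta(\sigma^2 a^3/\sigma_A)=\Theta(1)$, so the bound is $\Theta(\ln^2 m)$ rather than decaying. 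The paper has the identical issue (it concentrates $\tfrac1m\sum_i z_i$ around its mean $\tfrac{2\sigma^2}{m}\sum_i(\ov{A}x)_i^2$ and then silently drops that mean term when restating the conclusion as the claim), so this is an inherited weakness of the parameter regime rather than a flaw in your argument; you at least make the needed condition explicit, but your assertion that the left-hand side ``decays with $m$'' should be checked against whichever choice of $\sigma$, $\sigma_A$, $a$ is actually used for Claim~\ref{cla:part1}.
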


To prove Claim~\ref{cla:part1}, we show by the definition of differential privacy, for any inputs $x$ and $y$ with $\|x-y\| \leq 1$, 
$$\Pr_h [ h(x) \in S ] \leq \exp(\epsilon_{\dfp}) \cdot \Pr [ h(y) \in S ] + \delta_{\dfp}.$$ 

To be more specific, we use the fact of $e$ sampled from the Laplace distribution, and bound the ratio $\frac{p_h(h(x) = t\in S)}{p_h(h(y) = t\in S)}$, where $p(\cdot)$ denotes probability density. To bound the above ratio: first we need to derive and upper-bound the global sensitivity (see Appendix~\ref{sec:app_sens}) of a single-layer neural network. Then, we extend the famous anti-concentration result by Carbery and Wright \cite{cw01} to a more general setting (see Appendix~\ref{sec:app_anti}). To the best of our knowledge, this generalization is not known in literature. Once the densities are bounded, integrating $p(\cdot)$ yields the requirement of differential privacy, thus complete the proof of part 1.

To prove Claim~\ref{cla:part2}, we firstly define $z_i = e_i - (\ov{A} x)_i$. Then we apply the concentration theorem (see Appendix~\ref{sec:app_conc}) to show that for any $\|x\|_2 = 1$ and $x \in \R^d_{+}$, 
\begin{align*}
\Pr \Big[ \frac{1}{m} | \sum_{i=1}^{m}(z_i-\E[z_i] ) | \geq \epsilon_{\ap}^2 \Big] \leq \delta_{\ap},
\end{align*}
which completes the proof of Claim~\ref{cla:part2}.


\section{Experiments}
\label{sec:exp}

\begin{figure}[!t]
    \centering
        \includegraphics[width=0.98\textwidth]{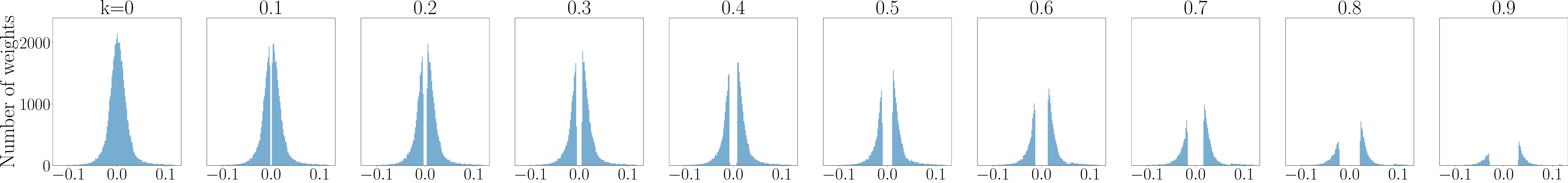}
        \caption{Distribution of weight magnitude (over all LeNet-5 convolution layers) after training and pruning on MNIST.}\label{fig:dist_weight_magnitude_mnist}
\end{figure}

\begin{figure*}
\centering
        \includegraphics[width=0.98\textwidth]{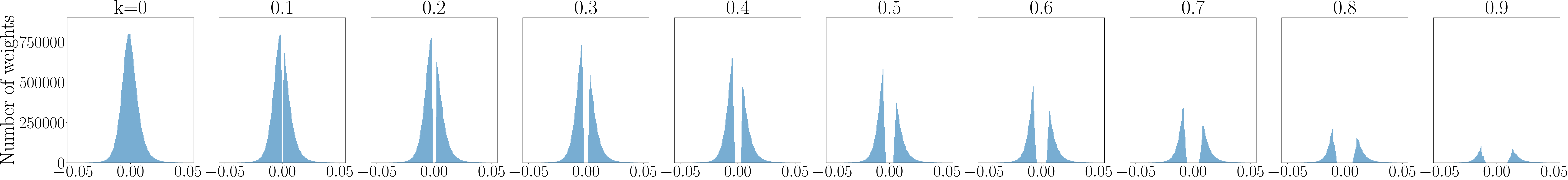}
        \caption{Distribution of weight magnitude (over all VGG-19 convolution layers) after training and pruning on CIFAR-10.}
    \label{fig:dist_weight_magnitude_cifar}
\end{figure*}

\begin{figure}
    \centering
    \includegraphics[width=0.7\linewidth]{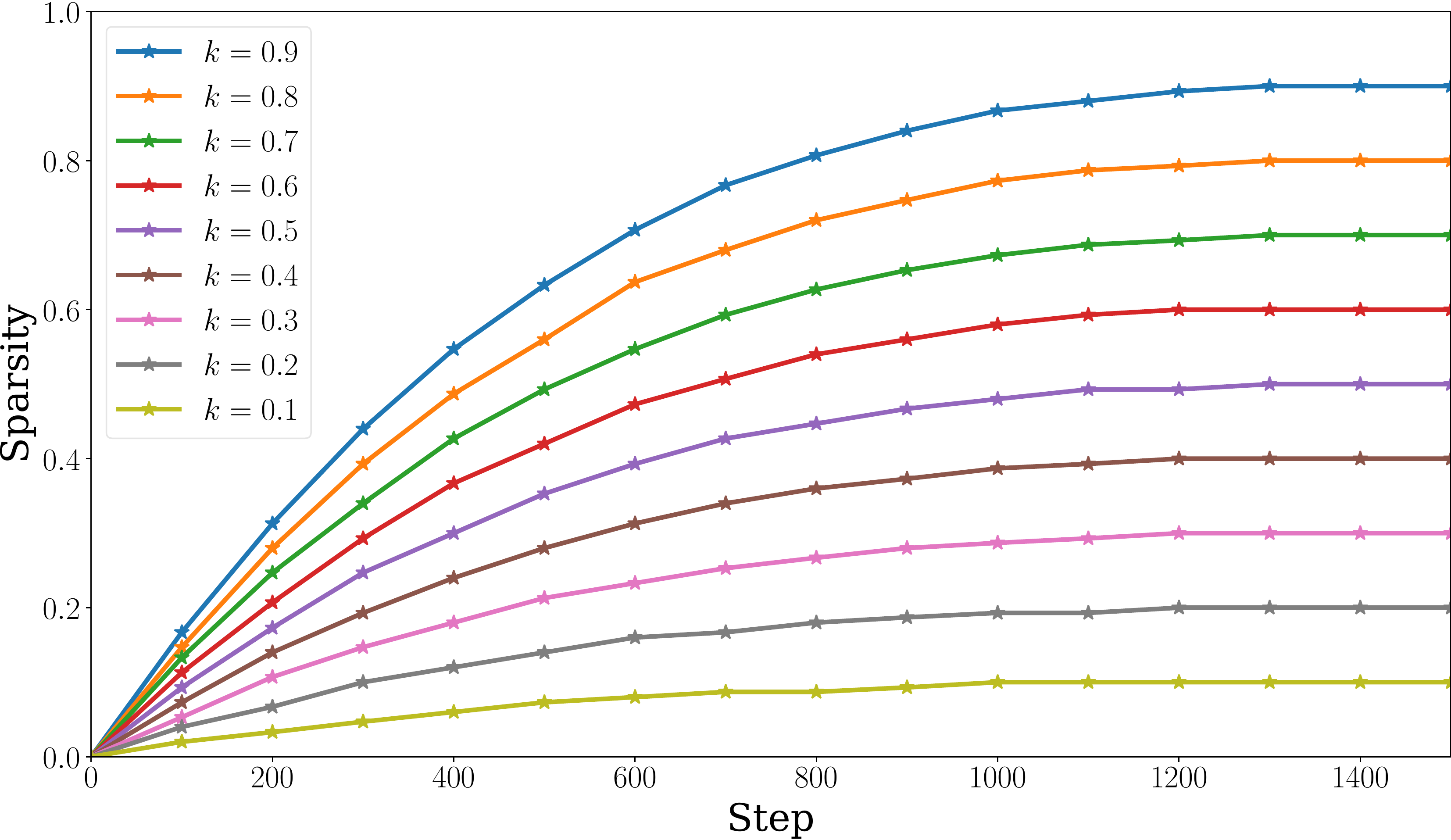}
    \caption{Sparsity-step curve during the pruning stage with gradual pruning mechanism applied.}
    \label{fig:grad_prune}
\end{figure}

This section presents experimental results related to our main theoretical finding in Section \ref{sec:main}. The experiments aim to answer the following questions:

\begin{enumerate}
\item  With the same utility (test accuracy) requirement, does neural network pruning preserve more privacy compared with adding noise? If yes, how much more?

\item  Given a privacy budget $(\epsilon_{\dfp}, \delta_{\dfp})$, how large $m$ should be to guarantee an $(\epsilon_{\ap}, \delta_{\ap})$-closeness between neural network pruning and adding noise?
\end{enumerate}

\subsection{Experimental setup}
\label{sec:dataset_setup}

\paragraph{Datasets and networks.}
We have conducted image classification experiments on MNIST \cite{lcb10} and CIFAR-10 \cite{cifar10} datasets. 

The network architectures used are LeNet-5 \cite{lbbh98} for MNIST, and VGG-19 \cite{sz14} for CIFAR-10. All models are trained on 4 Nvidia Tesla K80 GPUs using Tensorflow \cite{aab+16tensorflow}. We test these two simple and standard architectures in order to better match the setting described in our theoretical result. 
A detailed description of network architectures and hyper-parameters can be found in the Appendix~\ref{sec:app_exp}. 

\paragraph{Pruning.}

Our experiments have employed the gradual pruning technique introduced in \cite{zg17}, where over $n$ iterations starting from $t_0$ with interval $\Delta t$, the sparsity is increased from an initial sparsity value $k_0$ (0 in our case) to a target sparsity value $k_T$ such that,\\ for $t \in\left\{t_{0}, t_{0}+\Delta t, \ldots, t_{0}+n \Delta t\right\}$:
\begin{align*}
k_{t}=k_{T}+\left(k_{0}-k_{T}\right)\left(1- ( t-t_{0} ) / ( n \cdot \Delta t ) \right)^{3} \quad 
\end{align*}

Figure \ref{fig:grad_prune} shows 
sparsity-step curves to illustrate how to achieve a target sparsity. 
Figure \ref{fig:dist_weight_magnitude_mnist} and \ref{fig:dist_weight_magnitude_cifar} show the final weight distribution across all layers after applying gradual magnitude-based pruning.

\subsection{Test of privacy leakage as an inversion attack}
\label{sec:inversion_alg}

Since there is no standard way to quantify privacy preservation, we have used the method that with a given attack, we will measure how much ``privacy leakage" a particular approach will suffer. We have introduced the following pipeline to measure the privacy leakage in the neural network.

\begin{figure*}[!t]
    \centering
    \subfloat[Test accuracy of LeNet-5 trained on MNIST with different sparsity levels.]{\includegraphics[width=0.48\linewidth]{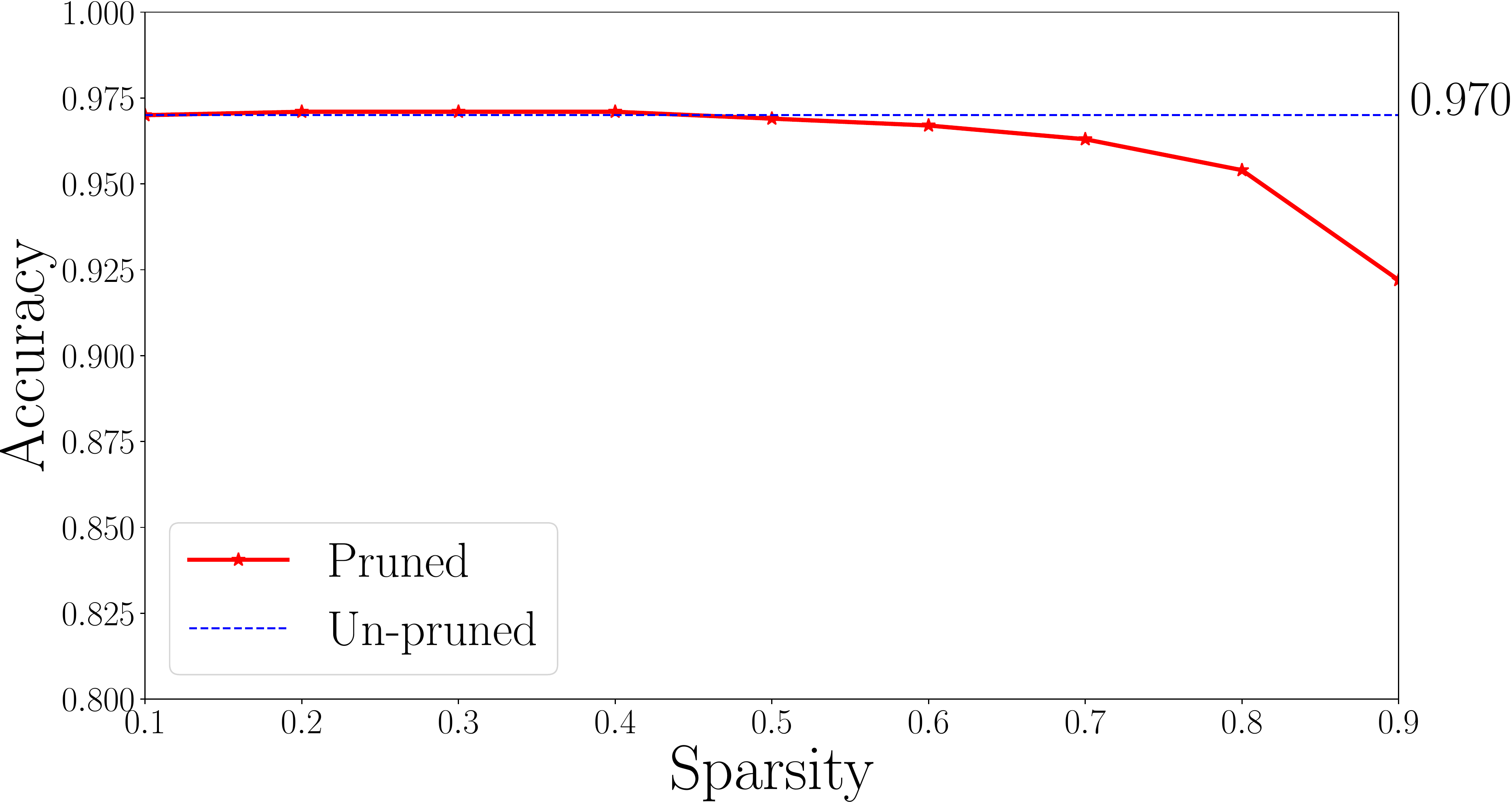}}\hspace{2mm}
    \subfloat[Test accuracy of VGG-19 trained on CIFAR-10 with different sparsity levels.]{\includegraphics[width=0.48\linewidth]{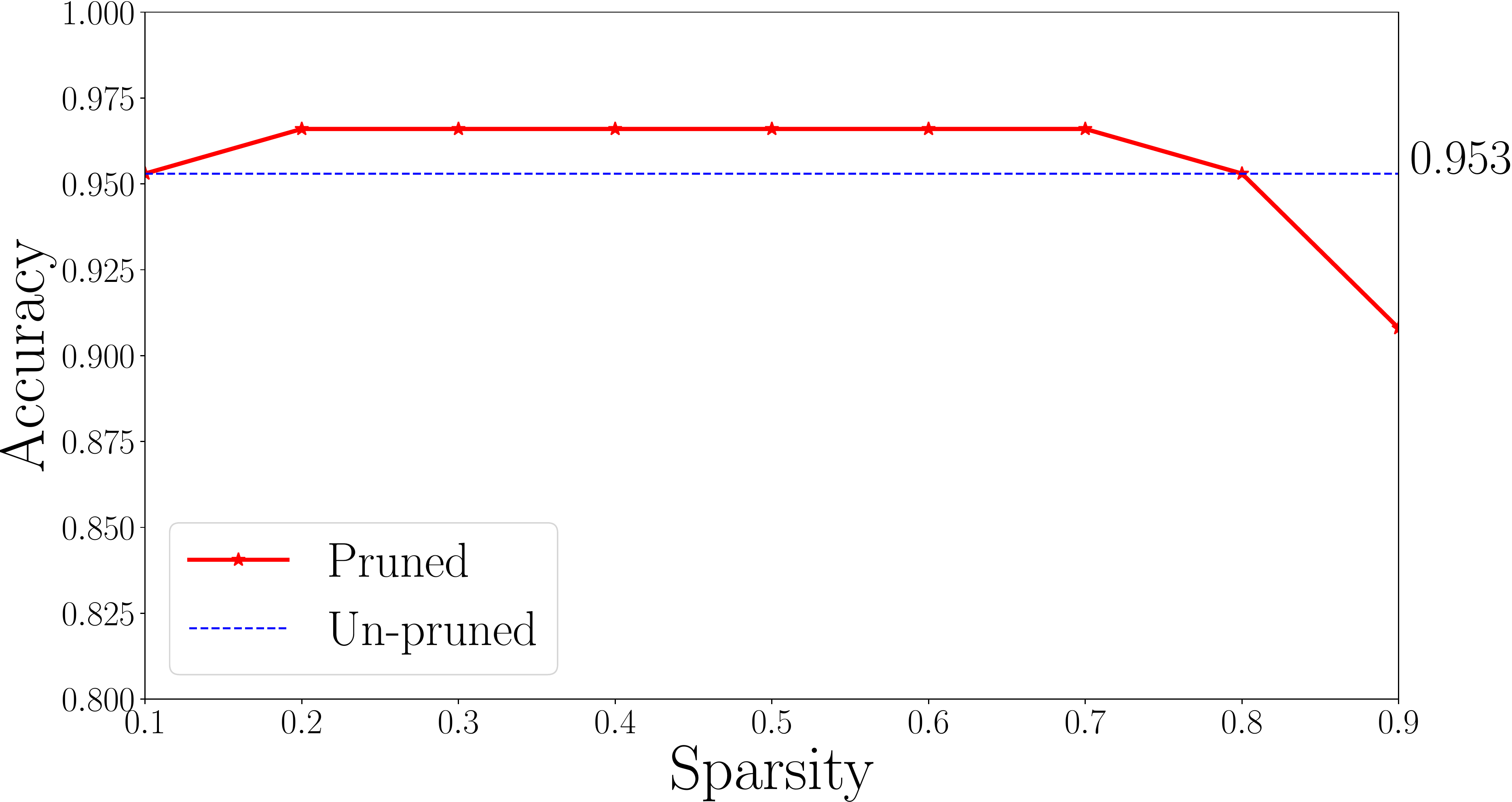}}
    \caption{Demonstration of test accuracy of LeNet-5 on MNIST (a), and test accuracy of VGG-19 on CIFAR-10 (b).}
    \label{fig:grad_and_acc_main}
\end{figure*}

Let us conceptualize the mapping from the input to the hidden-layer output in a neural network as a representation function. For a public representation function $\Phi:\R^{d} \rightarrow \R^{m}$, we perturb $\Phi$ to get $\Phi':\R^{d} \rightarrow \R^{m}$, and keep $\Phi'$ private. We use the following attack \cite{mv15} to test the potential privacy leakage of using $\Phi'$ as the representation function on any input $x$:

Given a perturbed representation $\Phi'_0 = \Phi'(x)$, and the public function $\Phi$, the attacker's goal is to find the preimage of $\Phi'(x)$, namely
\begin{align*}
    x^* = \arg\min_{x\in \R^{d}} {\cal L} ( \Phi(x) , \Phi'_0 ) + \lambda {\cal R}(x)
\end{align*}
where the loss function ${\cal L}$ is defined as 
\begin{align*}
    {\cal L} ( a , a' ) = \| a - a' \|_2^2,
\end{align*}
$\lambda > 0$ is the regularization parameter, and the regularization function $\mathcal{R}$ in our case is the total variation of a 2D signal
\begin{align*}
    \mathcal{R}(a) = \sum_{i,j} ( (a_{i+1, j} - a_{i,j})^2 + (a_{i, j+1} - a_{i,j})^2 )^{1/2}.
\end{align*}

Answering Question 1 requires the testing of the inversion attack against two different perturbations $\Phi'$:
\begin{itemize}
    \item $\Phi'_\text{noise}(x) = \Phi(x) + e$, where $e$ is a Laplace noise
    \item $\Phi'_\text{prune}(x) = \tilde{\Phi}(x)$, where $\tilde{\Phi}$ stand for magnitude-based pruning on layers of $\Phi$
\end{itemize}

The following describes how to quantitatively measure the leakage with the defined inversion attack.

\paragraph{Measurements of privacy leakage under attack}
Let us denote the preimage of $\Phi'(x)$ obtained from the inversion attack as $x^*$, and the original image as $x$. If $\Phi'$ has a strong differential privacy guarantee, namely $\epsilon_{\dfp}$ is sufficiently small, then the attacker's chance of finding $x$ as the preimage of $\Phi'(x)$ will only be marginally better than random guessing. This observation motivates us to use the closeness between $x*$ and $x$ as an indicator for privacy leakage. In experiments, we adopt the following four metrics to measure the similarity, or closeness between $x^*$ and $x$:

\begin{itemize}
    \item Normalized structural similarity index metric (SSIM). SSIM is a perception-based metric that considers the similarity between images in structural information, luminance and contrast. The detailed calculation can be found in \cite{wbs+04}. We normalize SSIM to take value range $[0,1]$ (original SSIM takes value range $[-1, 1]$).
    \item SIFT similarity (SIFT). We first calculate SIFT \cite{l04} keypoints and descriptors ${\cal K}^*$ and ${\cal K}$ based on $x^*$ and $x$. Then, we search for matched pairs  between ${\cal K}^*$ and ${\cal K}$, and compute the distance of each matched pair. We then filter the good matched pairs based on the distance, and calculate $\frac{\text{\#~matched~good pairs}}{\text{\#~matched~pairs}} $. 
    \item Normalized complementary pHash distance (HASH). We first use a 64-bit perceptual hashing \cite{z10} function $H(\cdot)$ to get hashes for $x^*$ and $x$. Then we calculate the Hamming distance $d(\cdot)$ between $H(x^*)$ and $H(x)$. The normalized complementary pHash distance is then defined to be $(64 - d_{\mathrm{ham}} ( H( x^* ) , H( x ) ))/64$. This metric captures the global similarity between $x^*$ and $x$.
    \item Inference accuracy (INFE). We run the inference of $f(W,x^*)$, and then calculate the accuracy: $\mathbf{1}_{y=f(W,x^*)}$, where $y$ is the label of $x$. This metric measures whether $x^*$ contains the `class' information of $x$.
\end{itemize}

For all four metrics above, a larger value indicates a higher similarity between $x^*$ and $x$.

\subsection{Magnitude-based pruning vs. differential privacy}
\label{sec:exp_results}
Now we report our experimental results that strongly suggest that magnitude-based pruning preserves more privacy (suffer less leakage) than adding differentially private noise.

\paragraph{Utility of pruned networks.}
Figure \ref{fig:grad_and_acc_main}(a) and (b) show the accuracy results on MNIST and CIFAR-10 of corresponding networks pruned by the gradual magnitude-based pruning algorithm at different sparsity levels. 

The task on MNIST achieves the same level of accuracy as the network model without pruning when the sparsity level is $\le 0.5$ ($0.5$ means $50\%$ of the weights in a network are zeros).  Its accuracy gradually decreases as the sparsity level increases.  The test with CIFAR-10 maintains the same accuracy as or better than the model without pruning when the sparsity level is $\le 0.8$, and then gradually decreases.

Note that our accuracy results at higher sparsity levels are lower than those reported in \cite{hmd15}. There are two hypotheses.  First,  our experiment used a gradual pruning algorithm for speed which may introduce some accuracy losses.  Second, as suggested by \cite{geh19}, magnitude-based pruning with different layer-wise sparsity (as \cite{hmd15} did) yields better accuracy than pruning layers with the same target sparsity as in our case. 

However, for answering the question if network pruning achieves better utility than differential privacy for the same targeted accuracy,  our experiments can be viewed as conservative results.  

\begin{figure*}[!t]
    \centering
    \subfloat[SIFT similarity]{\includegraphics[width=0.24\linewidth]{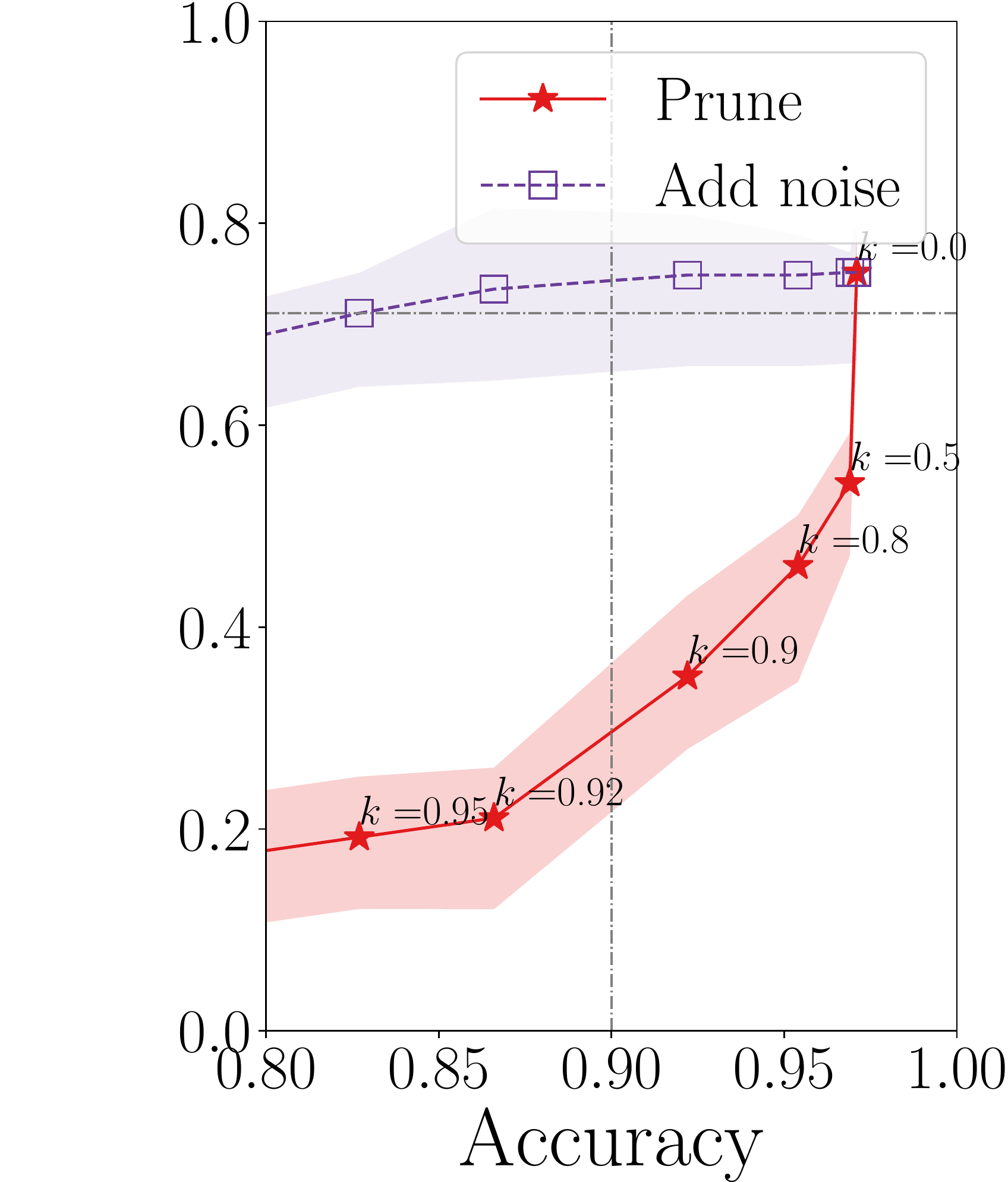}}
    \subfloat[SSIM similarity]{\includegraphics[width=0.24\linewidth]{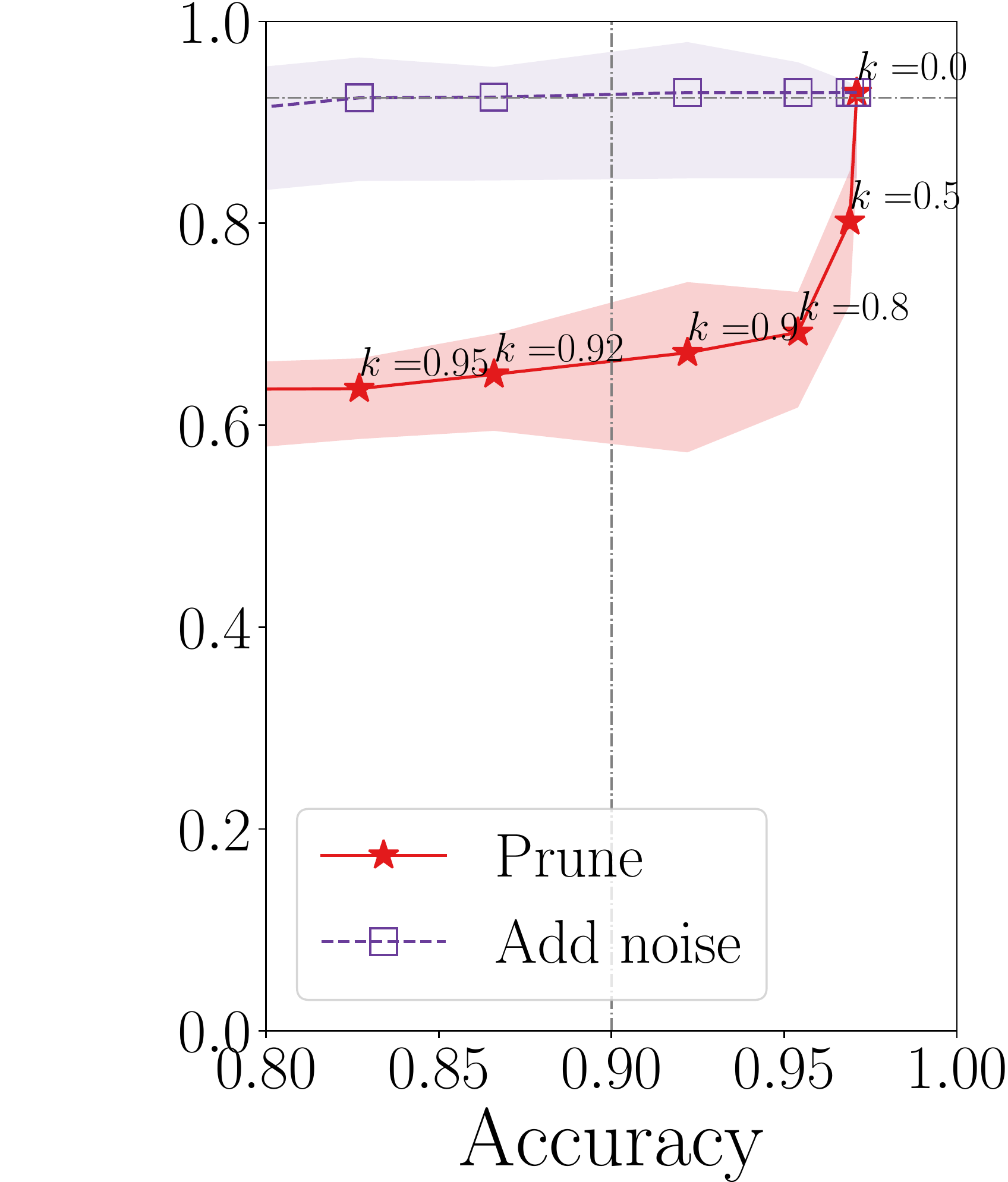}}
    \subfloat[HASH similarity]{\includegraphics[width=0.24\linewidth]{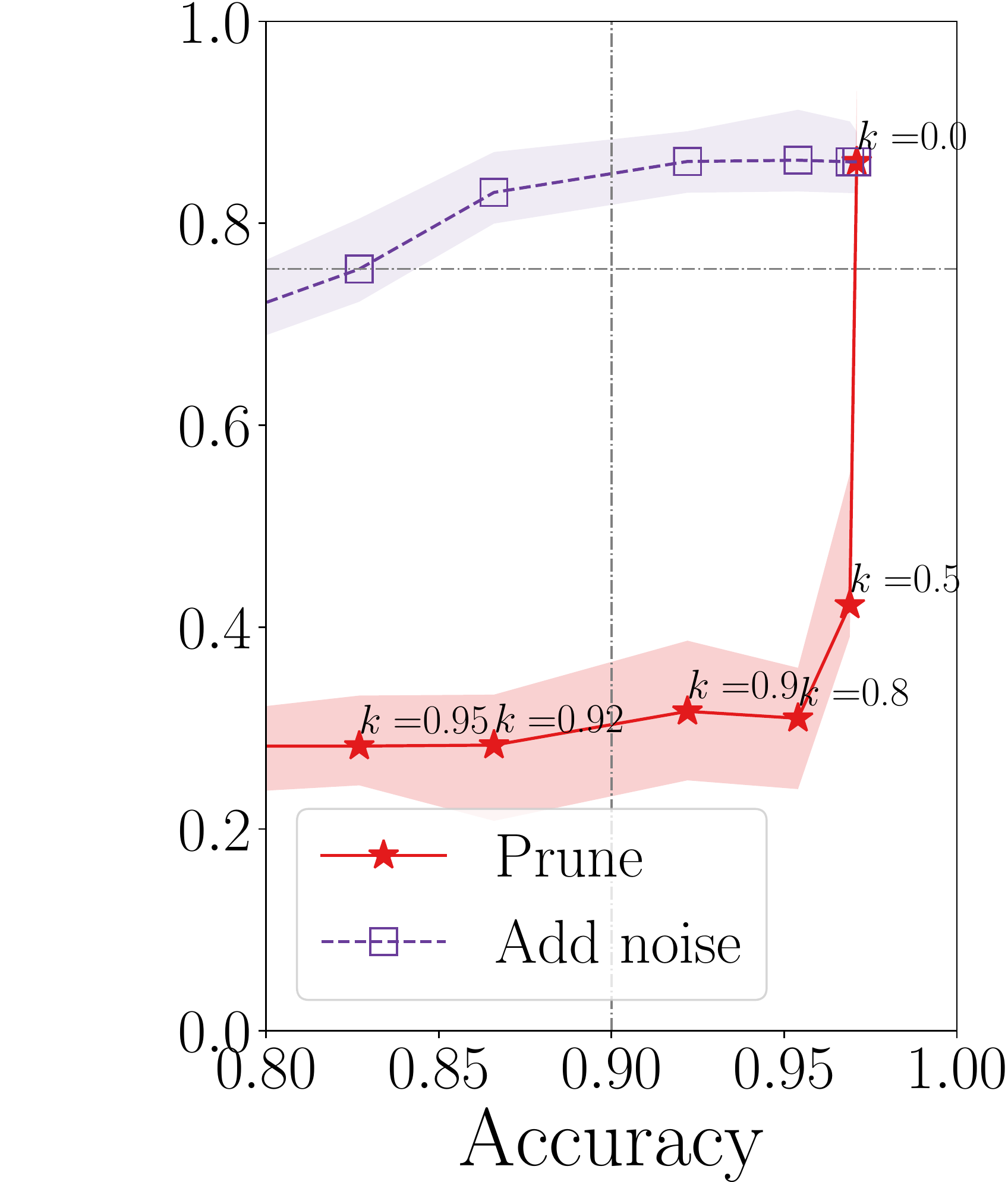}}
    \subfloat[INFE similarity]{\includegraphics[width=0.24\linewidth]{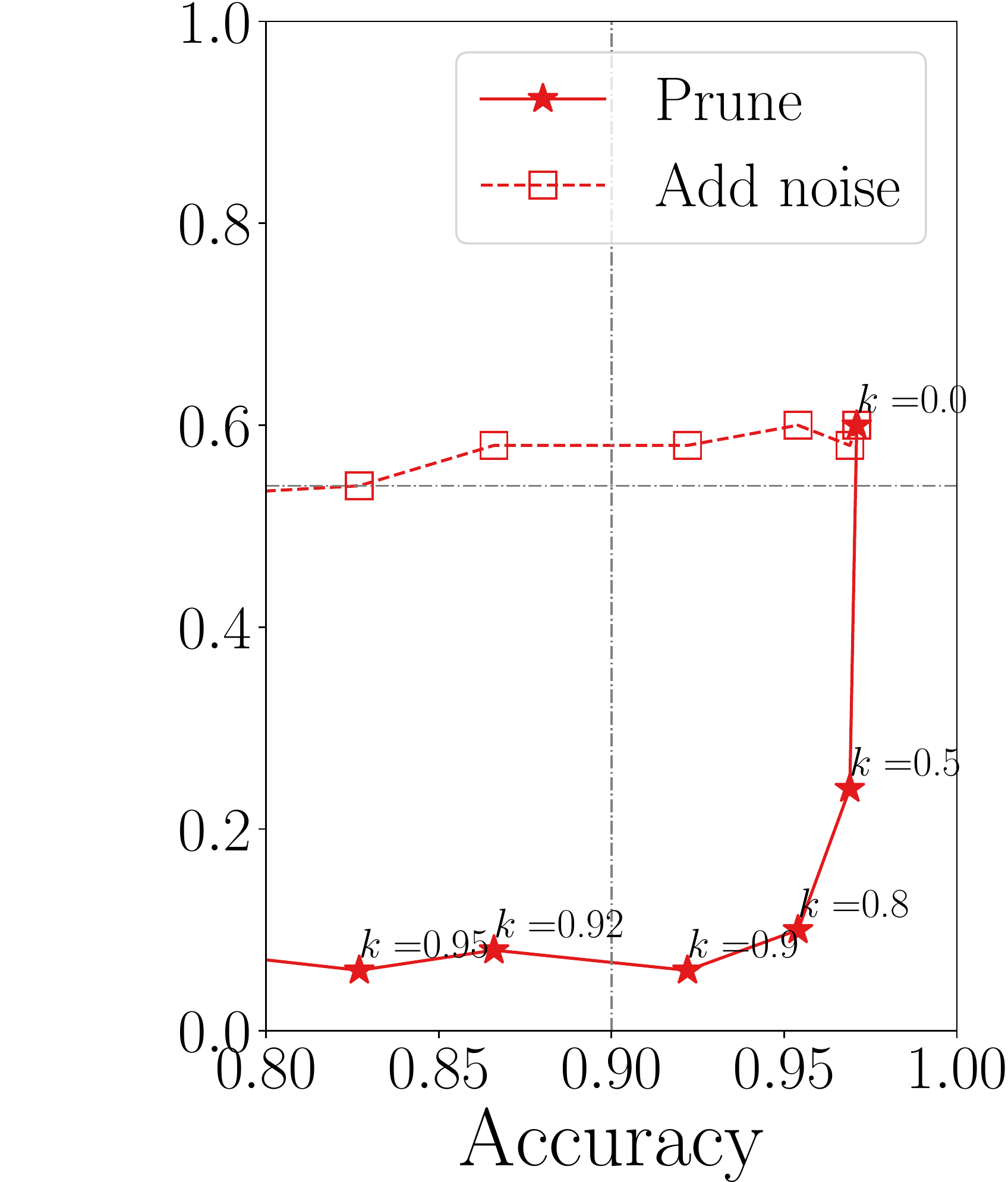}}\\
    \subfloat[Inverted MNIST digit from Conv5 layer with pruning (1st row) or adding noise (2nd row)]{\includegraphics[width=0.8\linewidth]{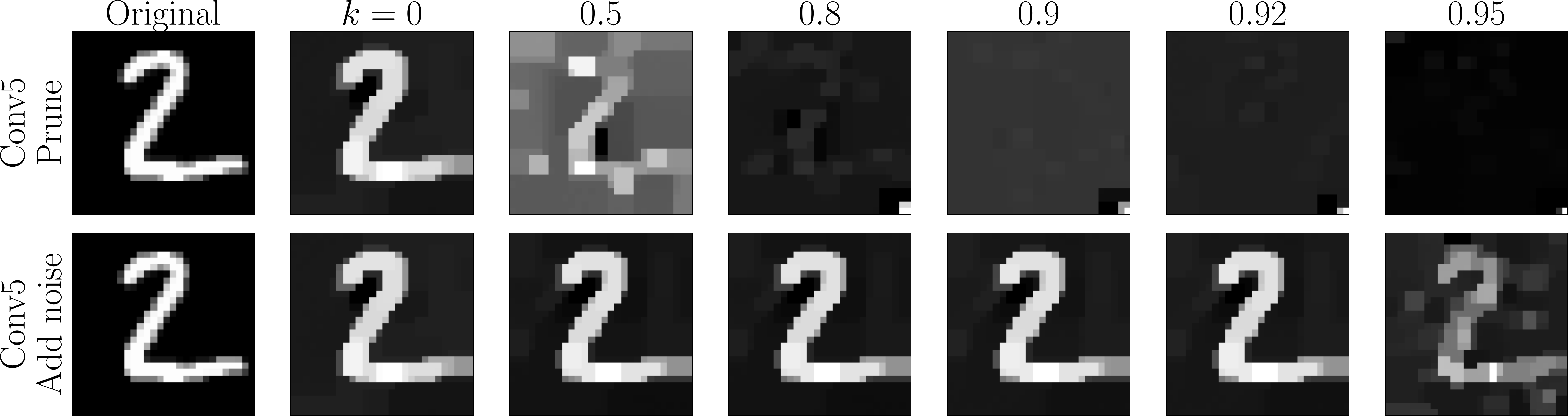}}
    \caption{Privacy leakage of pruning and adding noise measured by similarity between inversion and original images (a) - (d), obtained from Conv5 layer from LeNet-5 trained on MNIST. Shadow represents the value range (we didn't report the value range for INFE, because it always has 0 as minimum and 1 as maximum). Vertical dashed lines match the accuracy; horizontal dashed lines match the privacy leakage (similarity between inverted and original images). Also visualization of inverted digits from layers with pruning or adding noise (e). Sparsity ($k$) is annotated.}
    \label{fig:sim_mini_mnist}
\end{figure*}

\begin{figure*}[!t]
    \centering
    \subfloat[SIFT similarity]{\includegraphics[width=0.24\linewidth]{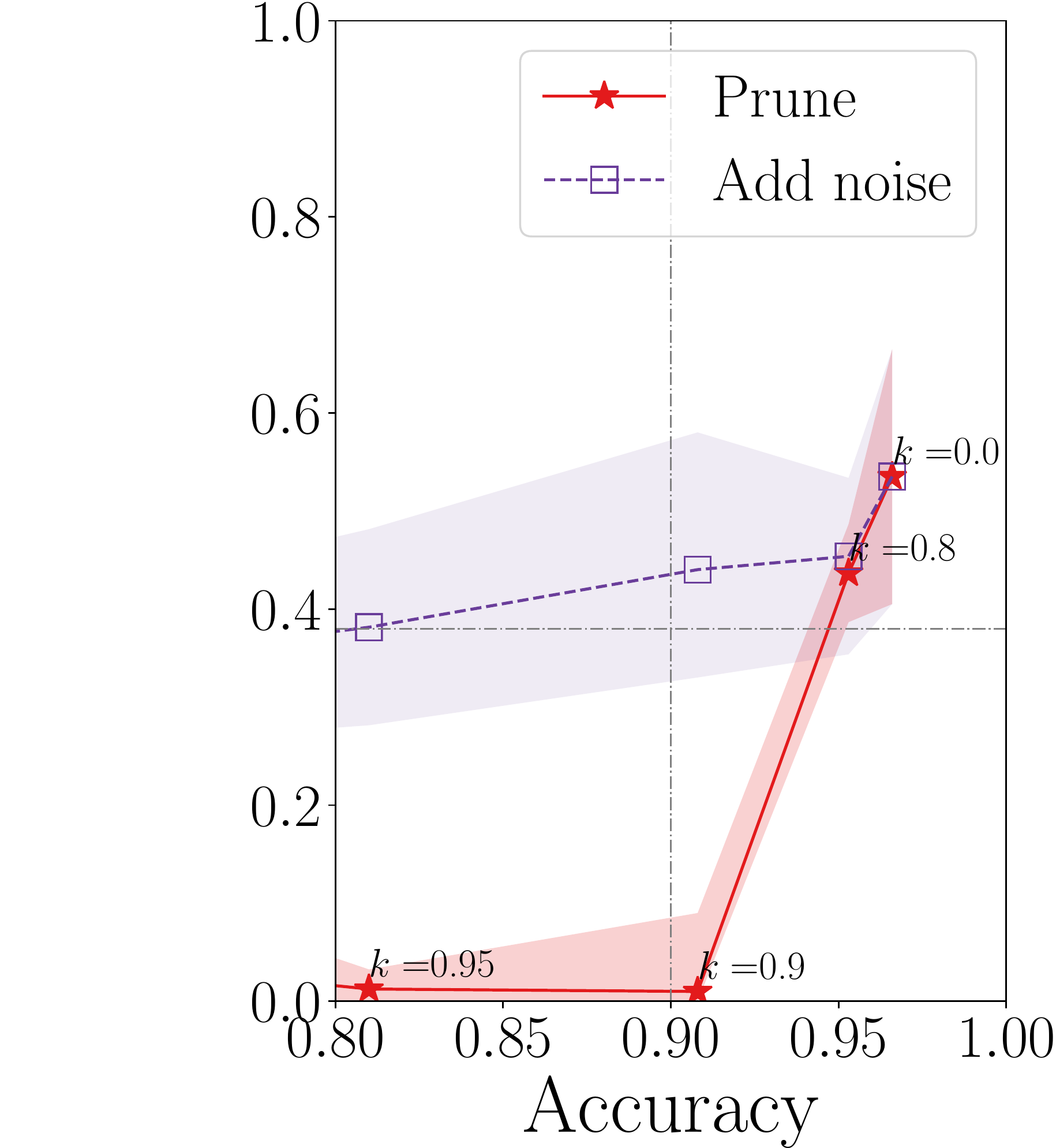}}
    \subfloat[SSIM similarity]{\includegraphics[width=0.24\linewidth]{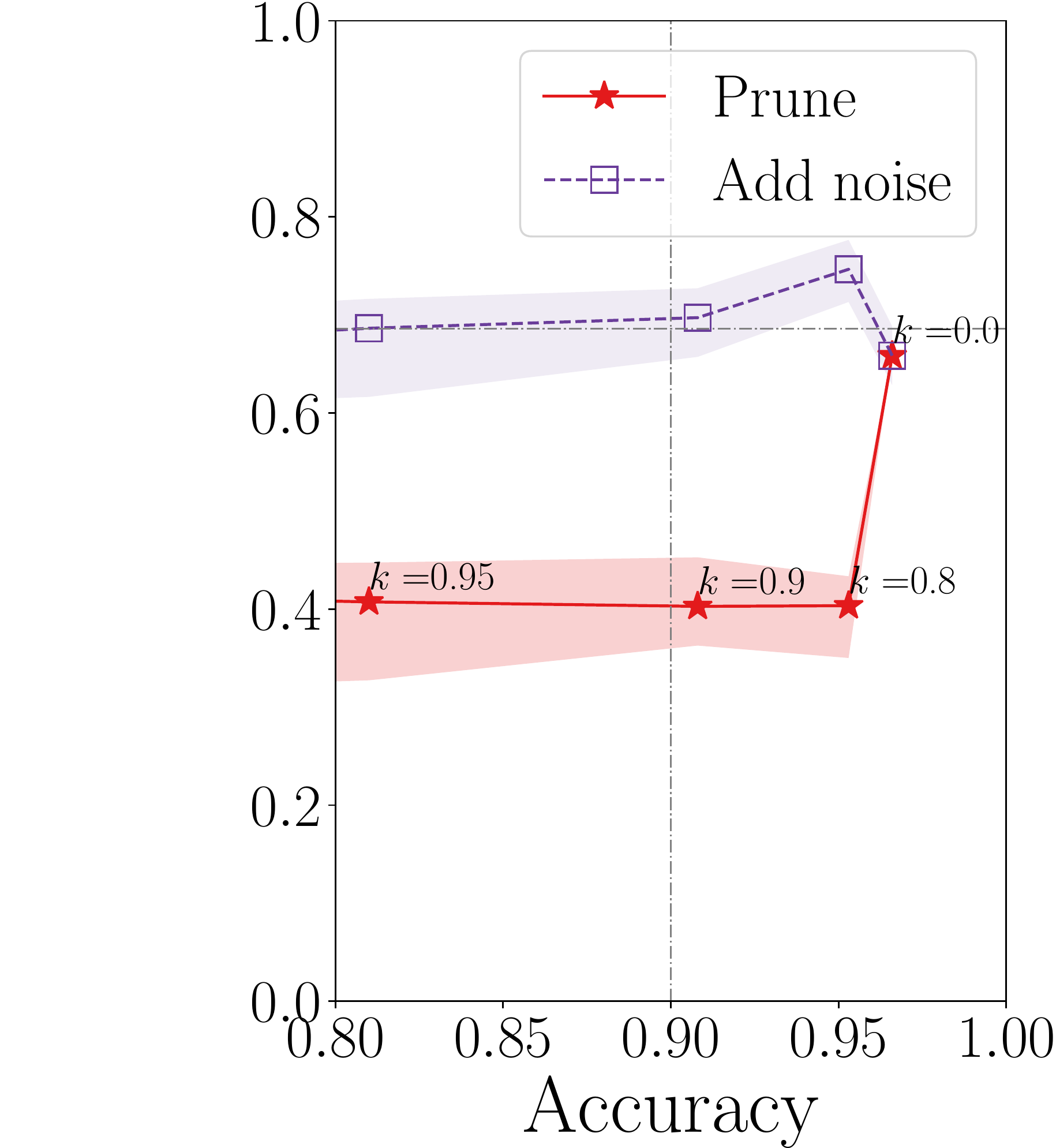}}
    \subfloat[HASH similarity]{\includegraphics[width=0.24\linewidth]{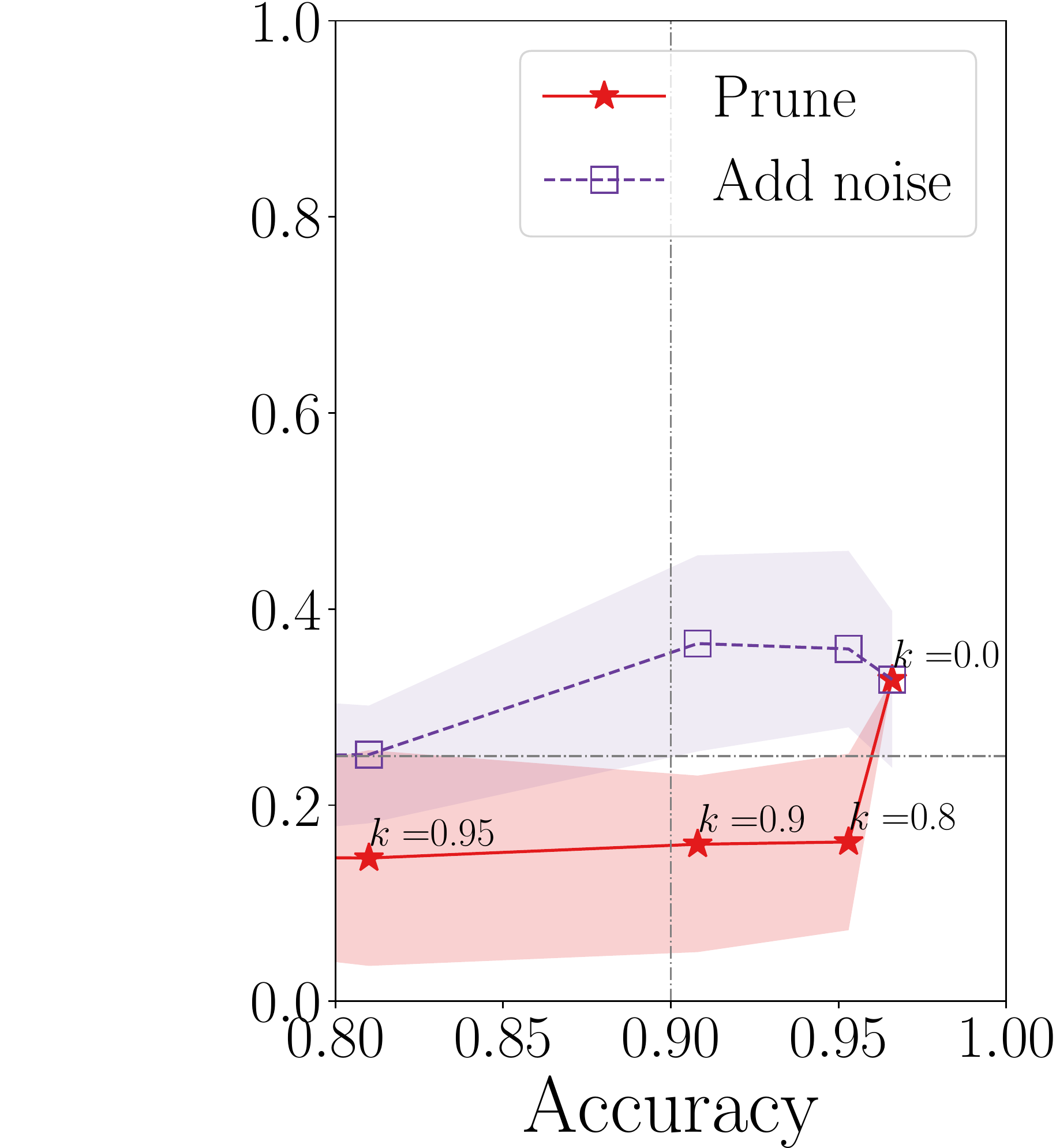}}
    \subfloat[INFE similarity]{\includegraphics[width=0.24\linewidth]{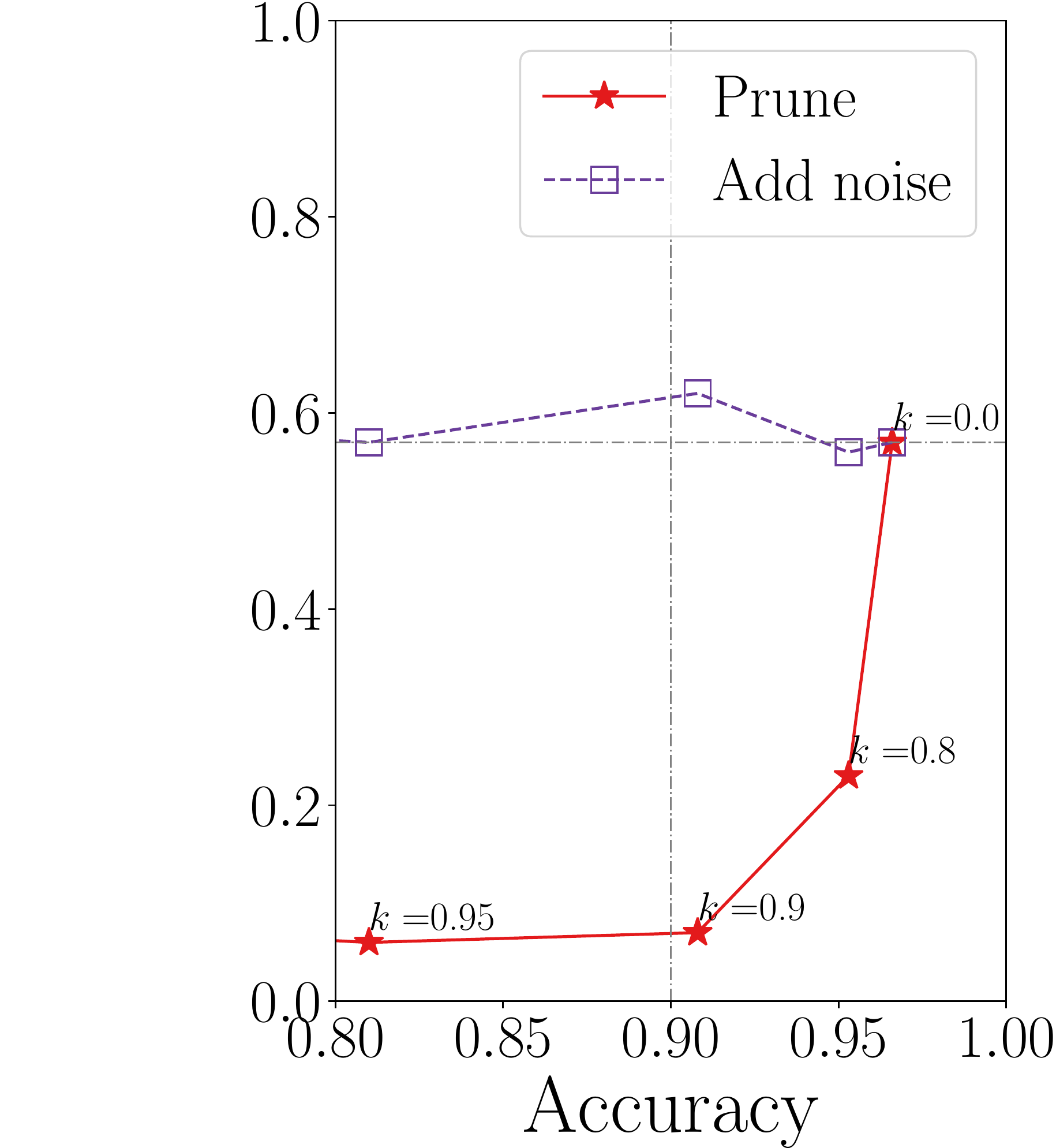}}\\
    \subfloat[Inverted CIFAR-10 image from Conv5-1 layer with pruning (1st row) or adding noise (2nd row)]{\includegraphics[width=0.8\linewidth]{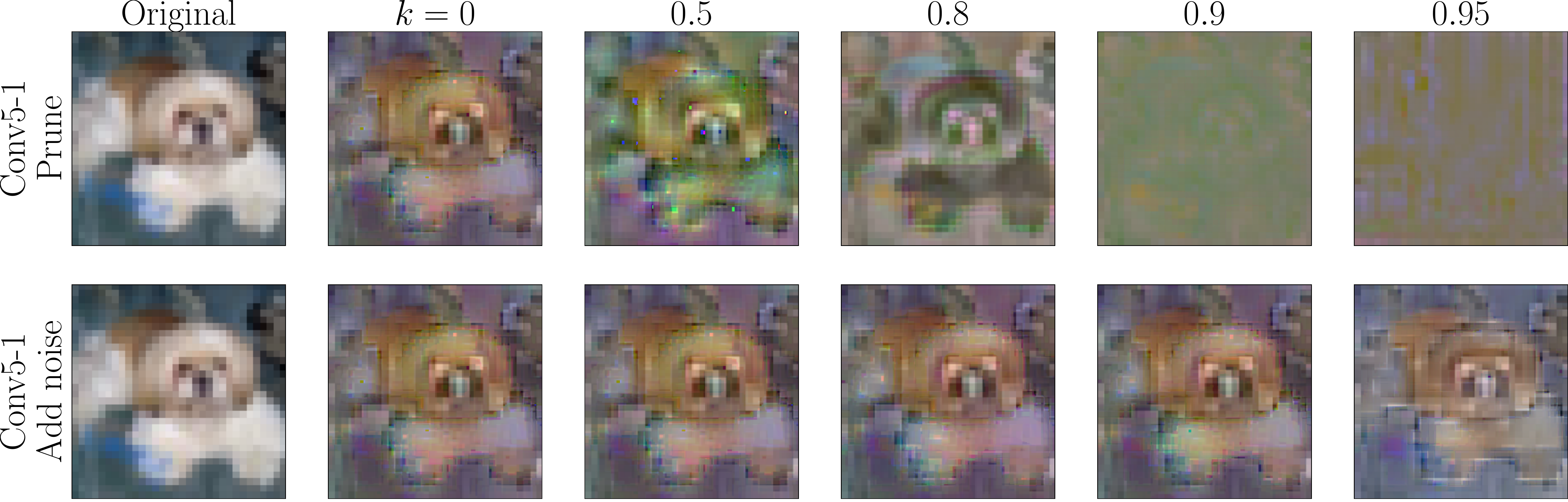}}
    \caption{Privacy leakage of pruning and adding noise measured by similarity between inversion and original images (a) - (d), obtained from Conv5-1 layer from VGG-19 trained on CIFAR-10. Shadow represents the value range (we didn't report the value range for INFE, because it always has 0 as minimum and 1 as maximum). Vertical dashed lines match the accuracy; horizontal dashed lines match the privacy leakage (similarity between inverted and original images). Also visualization of inverted digits from layers with pruning or noise (e). Sparsity ($k$) is annotated.}
    \label{fig:sim_mini_cifar}
\end{figure*}

\begin{figure}[t]
    \centering
    \includegraphics[width=0.8\linewidth]{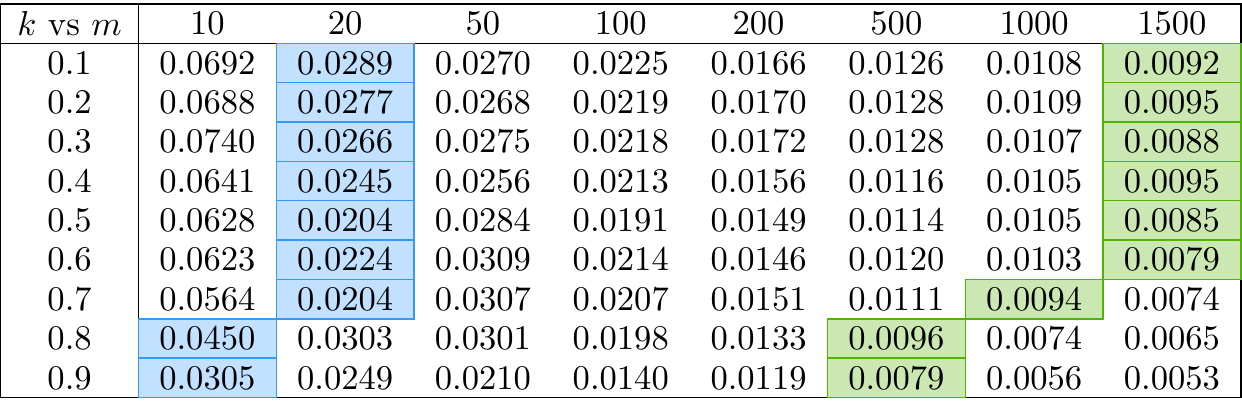}
    \caption{Error ($\frac{1}{\sqrt{m}}\|h(x)-g(x)\|_2$) with different sparsity $k$ and neural network width $m$. For each $k$ (each row), the value with \textcolor{b2}{blue} background indicates the first time that error goes below 0.05 when we increase $m$, and the ones with \textcolor{mygreen}{green} background is for the first time that error goes below 0.01. Best viewed in color.}
\label{fig:test_m}
\end{figure}

\paragraph{Relative privacy leakage.}
To answer Question 1, we need to compare the privacy leakage of applying magnitude-based pruning ($\Phi'_\text{prune}$) and adding differentially private noise ($\Phi'_\text{noise}$) while requiring them to yield the same test accuracy. Specifically, we tune the amount of noise in $\Phi'_\text{noise}$ to make applying $\Phi'_\text{prune}$ and  $\Phi'_\text{noise}$ yield the same test accuracy. We then run the inversion attack defined in Section \ref{sec:inversion_alg} on $\Phi'_\text{prune}$ and $\Phi'_\text{noise}$, and measure the closeness between the inverted sample $x^*$ and the original sample $x$.

Figure~\ref{fig:sim_mini_mnist} and Figure~\ref{fig:sim_mini_cifar} show the privacy leakage differences between magnitude-based pruning and adding differentially private noise for the experiments on MNIST and CIFAR-10 respectively.  Given a certain layer, we have executed the inversion attack on 100 randomly chosen images, and calculated the average score of each metric between the original images and the inverted ones. We also report the value range. The sub-figures (a), (b), (c), and (d) of both figures show the similarity curves of the four metrics for conv5 layer in the MNIST and CIFAR-10 experiments. The full version with multiple layers can be found in Appendix~\ref{sec:app_exp}.

These results show that for the same accuracy, there are substantial gaps between magnitude-based pruning 
and adding differential private noise in terms of the similarity measures between inversion and original images,
which suggest that neural network pruning preserve much more privacy than adding noise.

Take CIFAR-10 (see Figure \ref{fig:sim_mini_cifar}) as an example. When adding noise and pruning achieve the same similarity (drawing vertical lines on the graph), the privacy leakage (measured as similarity) of $\Phi'_\text{prune}$ is always smaller than that for $\Phi'_\text{noise}$, which suggests that with the same utility requirement, pruning helps preserve more privacy than adding noise. This observation gives us a clear `yes' to Question 1. If viewed from a different perspective, say matching the privacy leakage instead of accuracy, which can be interpreted as drawing horizontal lines on the graph, pruning always yields a higher accuracy ($\sim  10\%$ as shown in Figure \ref{fig:sim_mini_cifar}) than adding differentially private noise.

To visually demonstrate the gap between the two approaches, Figure~\ref{fig:sim_mini_mnist}(e)  and \ref{fig:sim_mini_cifar}(e) show the inversions from $\Phi'_\text{prune}$ and  $\Phi'_\text{noise}$  with their accuracy matched. For LeNet-5 trained on MNIST (see Figure \ref{fig:sim_mini_mnist}(e)), inverted digits from $\Phi'_\text{noise}$ still leak much structural information of the original image, while inversion from $\Phi'_\text{prune}$ are more vague (see sparsity smaller than 0.5), or even unidentifiable (see sparsity larger than 0.9). Inverted images from VGG-19 layers trained on CIFAR-10 share consistent observations. This implies that with the same accuracy, pruning may preserve more privacy than adding noise.

\subsection{Width vs accuracy}
\label{sec:exp_q2}
In this section, we aim to answer Question 2 by testing in a synthetic setting. We show that an $m$ of order $10^3$ may be sufficient to give the equivalence in Theorem \ref{thm:main_thm} with a realistic privacy budget.

Theorem \ref{thm:main_thm} indicates that with a sufficiently large $m$, we can always draw an equivalence between adding differentially private noise and applying magnitude-based pruning. However, the theorem is based on the {\em worst-case} analysis, and does not provide implications for average cases. Also, it is usually infeasible to have a large $m$ (say, larger than $10^5$) in real-world applications. In order to bridge the gap between the theory and the practice, we reproduce the setup of Theorem \ref{thm:main_thm}, and test how large $m$ needs to be for a synthetic setting. 

Specifically, our test is based on a single layer neural network denoted as $f(x) = \phi(Ax+b)$, where $A \sim {\cal N}(0, \sigma_A)^{m\times d}$ and $\phi(a) = \max(a,0)$ is the ReLU activation function. We generate $x$ from the folded Gaussian distribution, and normalize them such that $\|x\|_2 = 1$. For simplicity of test, we set the bias term $b$ to be 0.

Our goal is to figure out that, given a target sparsity $k$, how large $m$ needs to be to guarantee an $(\epsilon_{\ap}, \delta_{\ap})$-closeness (see Definition \ref{def:epsilon_delta_close}) between the following two perturbations on $f(x)$:

\begin{itemize}
    \item Adding $(\epsilon_{\dfp},\delta_{\dfp})$-differentially private noise $e$ to $f(x)$:  $h(x) = f(x) + e $
    \item Applying magnitude-based pruning on $A$ and get $\tilde{A}$: $g(x) = \phi(\tilde{A}x+b)$
\end{itemize}
where $e = \mathrm{Lap}(1,\sigma)^m\circ (\tilde{A}-A)x $.

Usually, an $\epsilon_{\dfp}\leq 1$ is sufficient for privacy-sensitive applications. Thus in the test, we set the privacy budgets to be $\epsilon_{\dfp}=0.5, \delta_{\dfp}=0.1$ for simplicity. We run tests for 9 sparsity levels (from $0.1$ to $0.9$ with interval $0.1$), and vary $m$ to measure the closeness between $h(x)$ and $g(x)$, namely $\epsilon_{\ap} = \frac{1}{\sqrt{m}}\|h(x)-g(x)\|_2 $. The results are shown in Figure \ref{fig:test_m}. As can be seen, for all sparsity levels, $m = 20$ is sufficient to satisfy an $\epsilon_{\ap} = 0.05$. To guarantee a stronger closeness of $\epsilon_{\ap} = 0.01$, we can increase $m$ accordingly to 500 for sparsity levels higher than 0.8, and to 1500 for lower ones. This suggests that in practice, $m$ of order $10^3$ will be sufficient given the equivalence in Theorem \ref{thm:main_thm}.


\section{Related work}\label{sec:related}

\paragraph{Pruning algorithms for neural network} 
Traditional deep neural network models are computationally expensive and memory intensive, which hinders their deployment in applications with limited memory resources or strict latency requirements.  Many progress has been made to perform model compression in deep networks, including low-rank factorization \cite{sks+13, lgr+14}, network pruning  \cite{lds90,sb15,hmd15,lkd+16}, and knowledge distillation \cite{hvd15,ccy+17}. Among them, neural network pruning  has been widely adopted because it is able to reduce model sizes by up to one order of magnitude without significant accuracy loss. The idea of network pruning dates back to the Optimal Brain Damage in 1990s \cite{lds90}. Recently, it has been shown that removing the weights with low magnitude can also achieve a highly compressed model \cite{hmd15}, which is referred to as `magnitude-based pruning'. A recent work by Zhu, Liu, and Han \cite{zlh19} empirically observed that pruning neural network gradients helps alleviate privacy leakage without much utility loss. However, they did not provide theoretical explanation.

\paragraph{Differential privacy for deep learning}

The concept of $\epsilon$-differential privacy was originally introduced by Dwork, McSherry, Nissim and Smith \cite{dmns06}. Later, it was generalized to a  relaxation of $(\epsilon,\delta)$-differential privacy \cite{dkmmn06, d09, dr14}. Differential privacy has been successfully applied to many problems. For more detailed surveys of the applications of differential privacy, we refer the readers to \cite{d08,d11b}. 

Applying differential privacy techniques in deep learning is an interesting but non-trivial task. Previous research have customized differential privacy for different learning tasks and settings \cite{ss15, acg+16, pwwd16}.

However, most of these approaches still use the standard mechanism of adding noise to satisfy differential privacy, while this work aims to draw an interesting connection between differential privacy and neural network pruning.

\paragraph{Neural network inversion and sparse recovery}

The problem of inverting a neural network is to find the input data point $x \in \R^d$ that yields a given hidden-layer output $y \in \R^m$ under the assumption that weight matrix $A \in \R^{m \times d}$ is known.  Approaches for neural network inversion generally fall into two categories: by sampling and back-propagation \cite{lk94,jrm+99,lkn99,vr05,mv15}, and by learning a decoder network to invert \cite{b95,db16}. Arora, Liang and Ma \cite{alm15} suggests from a theoretical perspective that it is possible to construct and train a generative model which is the reverse of the feedforward network. \cite{ljdg19} studied how to invert deep generative model. More specifically, they show that for some realizable case, single layer inversion can be performed exactly in polynomial, by solving a linear program. Further, they show that for multiple layers, inversion is NP-hard.

The task of inverting a single-layer linear neural network is intrinsically similar to the classic notion of sparse recovery \cite{crt06b,d06,glps10,hikp12a,ik14,k16,k17,ns19,nsw19},  which aims to reconstruct an approximately $k$-sparse vector $x \in \R^d$ from linear measurements $y = A x$, where $A \in \R^{m \times d}$. Note that in sparse recovery context, sparsity denotes the number of non-zero entries, which is the opposite to what people use in the pruning community. To connect neural network inversion with sparser recovery, one can think of the single-layer linear network's weight matrix as $A$, the hidden-layer output as $y$, and the network input as $x$. In the linear case, it is known that in order to achieve the sparse recovery task, the column sparsity of the sensing matrix $A$ has non-trivial lower bound \cite{nn13}. That is to say, for a single-layer linear neural network, given the hidden-layer output $y=Ax$, we need $A$ to have at least some fraction of non-zero to recover the input $x$. This naturally motivates us to think that pruning may helps preserve privacy.


\section{Conclusions}\label{sec:conclusion}

This paper has presented a theoretical result to show that, if a fully-connected layer of a neural network is wide enough, magnitude-based neural network pruning is equivalent to adding differentially private noise. 

To understand the gap between the theory and practice, the paper reports experimental results, in a synthetic setting with a realistic differential privacy budget, that
the width of the neural network needs to be only a few hundreds in order to make the theoretical equivalence hold.  

Our experiments with MNIST and CIFAR-10 show that, with the same target accuracy, magnitude pruning preserves more privacy than adding random noise (the classical way to provide differential privacy) to neural network. 

These results have strong practical implications for two reasons.  First, since neural network pruning has the property that sparsity can be quite high (e.g. $>90$\%) without reducing inference accuracy, it strongly suggests that network pruning can be an effective method to achieve differential privacy without any or much reduction of accuracy.  Second, although the result is for a single layer of a neural network, it is quite natural in a distributed or federated learning system to use a particular layer to communicate among multiple sites.

Several questions remain open. First, Theorem \ref{thm:main_thm} is only for a single-layer fully connected network, and it would be interesting if one can extend it to multi-layer settings and also convolutional neural networks. 

Second, our theoretical finding is based on the worst case analysis, which means in most cases, $m$ can be much smaller. How to efficiently determine $m$ for different settings requires more investigation. 

Third, this paper has used several similarity measures as metrics for privacy leakage in the absence of true privacy leakage measure.  How to quantify privacy leakage is a challenging question.

Finally, in order to use network pruning as a mechanism to preserve privacy in a practical distributed or federated learning system, one needs to consider many design details including which layers to prune, whether or not to prune layers with the same sparsity, where the work of pruning should be performed, and how to coordinate among multiple sites.   

\section*{Acknowledgments}

This project is funded in part by Princeton University fellowship, Pony Ma Foundation, Simons Foundation, Schmidt Foundation, NSF, DARPA/SRC, Google and Amazon.


The authors would like to thank Paul Beame, Xin Yang and Ruizhe Zhang for very useful discussions about anti-concentration section. The authors would like to thank Tianren Liu for very useful discussions about cryptography. The authors would like to thank Inrit Dinur, Xiangru Jian, Gautam Kamath, Adam Klivans, Zhiyuan Li, Binghui Peng, Nikunj Saunchi, Zhenyu Song, Daniel Suo, Grant Wallace and Fan Yi for very useful discussions.

\newpage
{
\bibliographystyle{alpha}
\bibliography{ref}
}

\onecolumn
\appendix

\section{Probability tools}\label{sec:app_prob}

In this section we present a number of classical probability tools used in the proof.
Lemma \ref{lem:chernoff} (Chernoff), \ref{lem:hoeffding} (Hoeffding) and \ref{lem:bernstein} (Bernstein)  are about tail bounds for random scalar variables.
Lemma \ref{lem:anti_gaussian} and Lemma~\ref{lem:concen_gaussian} state two standard results for random Gaussian variable. Lemma~\ref{lem:chi_square_tail} is a probability for Chi-square distribution.
Finally,
Lemma \ref{lem:matrix_bernstein} is a concentration result on random matrices.

We state the classical Chernoff bound which is named after Herman Chernoff but due to Herman Rubin. It gives exponentially decreasing bounds on tail distributions of sums of independent random variables.
\begin{lemma}[Chernoff bound \cite{c52}]\label{lem:chernoff}
Let $X = \sum_{i=1}^n X_i$, where $X_i=1$ with probability $p_i$ and $X_i = 0$ with probability $1-p_i$, and all $X_i$ are independent. Let $\mu = \E[X] = \sum_{i=1}^n p_i$. Then \\
1. $ \Pr[ X \geq (1+\delta) \mu ] \leq \exp ( - \delta^2 \mu / 3 ) $, $\forall \delta > 0$ ; \\
2. $ \Pr[ X \leq (1-\delta) \mu ] \leq \exp ( - \delta^2 \mu / 2 ) $, $\forall 0 < \delta < 1$. 
\end{lemma}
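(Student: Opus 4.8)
The plan is to prove both tail bounds by the standard exponential-moment (Bernstein--Chernoff) method: apply Markov's inequality not to $X$ itself but to $e^{tX}$ for a free parameter $t>0$, exploit independence to factor the moment generating function into a product over the Bernoulli summands, and finally optimize over $t$ and convert the resulting closed form into the stated Gaussian-type bound.

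For the upper tail I would start from the observation that, for any $t>0$,
\begin{align*}
\Pr[X \ge (1+\delta)\mu] = \Pr[e^{tX} \ge e^{t(1+\delta)\mu}] \le e^{-t(1+\delta)\mu}\,\E[e^{tX}],
\end{align*}
which is just Markov's inequality applied to the nonnegative variable $e^{tX}$. Since the $X_i$ are independent, $\E[e^{tX}] = \prod_{i=1}^n \E[e^{tX_i}]$, and for each Bernoulli factor $\E[e^{tX_i}] = 1 + p_i(e^t-1) \le \exp(p_i(e^t-1))$, where the last step uses $1+x \le e^x$. Multiplying over $i$ and recalling $\mu = \sum_i p_i$ gives $\E[e^{tX}] \le \exp(\mu(e^t-1))$, hence $\Pr[X \ge (1+\delta)\mu] \le \exp(\mu(e^t-1) - t(1+\delta)\mu)$. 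The exponent is minimized at $t = \ln(1+\delta)$, which yields the classical closed form $\big(e^{\delta}/(1+\delta)^{1+\delta}\big)^{\mu}$. The lower tail is handled symmetrically by applying Markov to $e^{-tX}$ with $t>0$: the same factorization gives $\E[e^{-tX}] \le \exp(\mu(e^{-t}-1))$, and optimizing $\exp(\mu(e^{-t}-1)+t(1-\delta)\mu)$ over $t$ (taking $e^{-t}=1-\delta$, valid since $0<\delta<1$) produces $\big(e^{-\delta}/(1-\delta)^{1-\delta}\big)^{\mu}$.

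I expect the main obstacle to be not the probabilistic part, which is routine once the moment generating function is factored, but the final scalar estimates that convert these sharp closed forms into the stated bounds. Concretely, one must verify the elementary inequalities $\delta - (1+\delta)\ln(1+\delta) \le -\delta^2/3$ and $-\delta - (1-\delta)\ln(1-\delta) \le -\delta^2/2$ on the relevant range of $\delta$. I would establish each by letting $g(\delta)$ denote the difference of the two sides, checking $g(0)=g'(0)=0$, and showing $g'\le 0$ so that $g\le 0$; equivalently, the Taylor expansion $(1+\delta)\ln(1+\delta) = \delta + \delta^2/2 - \delta^3/6 + \cdots$ shows the exponent is dominated by the quadratic term with the claimed constant. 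Substituting these back into the exponents of the two closed forms yields $\exp(-\delta^2\mu/3)$ and $\exp(-\delta^2\mu/2)$ respectively, completing both parts. The only genuine subtlety is tracking the range of validity of $\delta$ in these calculus inequalities, which controls for which $\delta$ the stated constants $1/3$ and $1/2$ are actually attained.
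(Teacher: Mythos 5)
Your proposal is the standard exponential-moment proof of the Chernoff bound; the paper states this lemma without proof (citing the classical reference), and your argument is correct and is exactly the canonical derivation one would supply. One remark on the subtlety you flagged at the end: for the lower tail, the inequality $-\delta-(1-\delta)\ln(1-\delta)\le -\delta^2/2$ does hold on all of $0<\delta<1$ (the Taylor expansion you quote has all higher-order terms negative), so part 2 goes through as stated; but for the upper tail, $\delta-(1+\delta)\ln(1+\delta)\le -\delta^2/3$ fails for large $\delta$ (e.g.\ at $\delta=2$ the left side is $\approx -1.296$ while the right side is $-4/3$), so the constant $1/3$ in part 1 is only attained for $\delta$ in a bounded range (roughly $\delta\le 1$, with $\exp(-\delta^2\mu/(2+\delta))$ being the version valid for all $\delta>0$). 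This is a looseness in the lemma as stated in the paper rather than a gap in your argument, and your proof correctly isolates where the restriction enters.
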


We state the Hoeffding bound:
\begin{lemma}[Hoeffding bound \cite{h63}]\label{lem:hoeffding}
Let $X_1, \cdots, X_n$ denote $n$ independent bounded variables in $[a_i,b_i]$. Let $X= \sum_{i=1}^n X_i$, then we have
\begin{align*}
\Pr[ | X - \E[X] | \geq t ] \leq 2\exp \left( - \frac{2t^2}{ \sum_{i=1}^n (b_i - a_i)^2 } \right).
\end{align*}
\end{lemma}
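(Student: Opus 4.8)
The plan is to follow the classical exponential-moment (Chernoff-style) argument. First I would reduce the two-sided bound to a one-sided one: by symmetry, applying the one-sided estimate to both $X-\E[X]$ and $-(X-\E[X])$ and taking a union bound produces the factor of $2$, so it suffices to bound $\Pr[X - \E[X] \geq t]$. For the one-sided bound, fix any $s > 0$ and apply Markov's inequality to the nonnegative variable $e^{s(X-\E[X])}$, giving $\Pr[X-\E[X] \geq t] \leq e^{-st}\,\E[e^{s(X-\E[X])}]$. Since the $X_i$ are independent, the moment generating function factorizes as $\E[e^{s(X-\E[X])}] = \prod_{i=1}^n \E[e^{s(X_i - \E[X_i])}]$, reducing the whole problem to controlling the MGF of a single centered, bounded summand.

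The crux of the argument --- and the step I expect to be the main obstacle --- is the one-variable estimate usually called \emph{Hoeffding's lemma}: if $Y$ is a mean-zero random variable supported in an interval $[a,b]$, then $\E[e^{sY}] \leq \exp(s^2(b-a)^2/8)$. I would prove this by convexity: for $y \in [a,b]$ the chord bound $e^{sy} \leq \frac{b-y}{b-a}e^{sa} + \frac{y-a}{b-a}e^{sb}$ holds, and taking expectations while using $\E[Y]=0$ kills the linear term and leaves an explicit expression in $a,b,s$. Writing this expression as $e^{L(u)}$ with $u = s(b-a)$ and $L(u) = -pu + \log(1-p+pe^u)$ for $p = -a/(b-a)$, I would check $L(0)=0$, $L'(0)=0$, and $L''(u) = \frac{p(1-p)e^u}{(1-p+pe^u)^2} = q(1-q) \leq \tfrac14$ (where $q = pe^u/(1-p+pe^u)$), so Taylor's theorem gives $L(u) \leq u^2/8$. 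Applying this to each centered summand $Y = X_i - \E[X_i]$, which lies in an interval of width $b_i - a_i$, yields $\E[e^{s(X_i-\E[X_i])}] \leq \exp(s^2(b_i-a_i)^2/8)$.

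Combining the pieces, $\Pr[X-\E[X] \geq t] \leq \exp\!\big(-st + \tfrac{s^2}{8}\sum_{i=1}^n (b_i-a_i)^2\big)$ for every $s>0$. The final step is to optimize the free parameter: minimizing the quadratic exponent over $s$ gives $s = 4t/\sum_i(b_i-a_i)^2$ and optimal exponent $-2t^2/\sum_i(b_i-a_i)^2$. Feeding this back and then doubling for the two-sided bound completes the proof. The only genuinely delicate part is the bound $L'' \leq 1/4$; everything else is bookkeeping, independence, and a one-line optimization.
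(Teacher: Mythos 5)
Your proof is correct: the paper states this lemma as a classical tool with a citation to Hoeffding (1963) and gives no proof of its own, and your argument is precisely the standard one (Markov's inequality applied to the exponential moment, factorization by independence, Hoeffding's lemma via the chord bound and the estimate $L''(u) = q(1-q) \leq \tfrac{1}{4}$, then optimization over $s$), with every step, including the final exponent $-2t^2/\sum_{i}(b_i-a_i)^2$, checking out.
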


We state the Bernstein inequality:
\begin{lemma}[Bernstein inequality \cite{b24}]\label{lem:bernstein}
Let $X_1, \cdots, X_n$ be independent zero-mean random variables. Suppose that $|X_i| \leq M$ almost surely, for all $i$. Then, for all positive $t$,
\begin{align*}
\Pr \left[ \sum_{i=1}^n X_i > t \right] \leq \exp \left( - \frac{ t^2/2 }{ \sum_{j=1}^n \E[X_j^2]  + M t /3 } \right).
\end{align*}
\end{lemma}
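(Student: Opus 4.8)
The plan is to use the standard exponential-moment (Chernoff-style) method, deferring the choice of the free parameter until the very end. Write $S = \sum_{i=1}^n X_i$ and $\sigma^2 = \sum_{i=1}^n \E[X_i^2]$. First I would fix an arbitrary $\lambda > 0$ and apply Markov's inequality to the nonnegative random variable $e^{\lambda S}$, which gives $\Pr[S > t] = \Pr[e^{\lambda S} > e^{\lambda t}] \le e^{-\lambda t}\,\E[e^{\lambda S}]$. Since the $X_i$ are independent, the moment generating function factorizes, $\E[e^{\lambda S}] = \prod_{i=1}^n \E[e^{\lambda X_i}]$, so the entire problem reduces to controlling a single factor $\E[e^{\lambda X_i}]$.

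The heart of the argument is the bound on each $\E[e^{\lambda X_i}]$. I would expand the exponential as a power series, $\E[e^{\lambda X_i}] = 1 + \lambda\,\E[X_i] + \sum_{k \ge 2} \frac{\lambda^k}{k!}\,\E[X_i^k]$, where the zero-mean hypothesis annihilates the linear term. For every $k \ge 2$ the almost-sure bound $|X_i| \le M$ yields $|X_i|^k \le M^{k-2} X_i^2$ and hence $|\E[X_i^k]| \le M^{k-2}\,\E[X_i^2]$. The crucial combinatorial estimate is $k! \ge 2\cdot 3^{k-2}$ for $k \ge 2$, which is exactly what manufactures the constant $3$ appearing in the theorem: for $\lambda M < 3$ it lets me dominate the series tail by a convergent geometric series, giving $\sum_{k \ge 2}\frac{(\lambda M)^k}{k!} \le \frac{(\lambda M)^2/2}{1 - \lambda M/3}$. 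Combining this with $1 + x \le e^x$ produces $\E[e^{\lambda X_i}] \le \exp\!\big(\frac{\lambda^2\,\E[X_i^2]/2}{1 - \lambda M/3}\big)$.

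Taking the product over $i$ collapses the per-coordinate second moments into $\sigma^2$, so that $\Pr[S > t] \le \exp\!\big(-\lambda t + \frac{\lambda^2 \sigma^2/2}{1 - \lambda M/3}\big)$ holds for every $0 < \lambda < 3/M$. The final step is to optimize the exponent over $\lambda$. Rather than differentiate, I would substitute the clean choice $\lambda = t/(\sigma^2 + Mt/3)$, which automatically satisfies $\lambda < 3/M$; with this value one checks that $1 - \lambda M/3 = \sigma^2/(\sigma^2 + Mt/3)$, and the two terms of the exponent collapse to $-\frac{t^2/2}{\sigma^2 + Mt/3}$, which is precisely the claimed bound.

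I expect the main obstacle to be the moment-generating-function estimate in the second paragraph: one must extract exactly the factor $3$ in the denominator through the $k! \ge 2\cdot 3^{k-2}$ inequality (a weaker factorial bound would give a worse constant), while simultaneously tracking the admissibility constraint $\lambda M < 3$ so that the geometric series converges and the optimizing $\lambda$ remains legal. Once this single-variable bound is established, the Markov step, the factorization over independence, and the final substitution are all routine.
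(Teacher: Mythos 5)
Your proof is correct. The paper does not prove this lemma at all --- it is quoted as a classical result with a citation to Bernstein's 1924 paper --- so there is no in-paper argument to compare against; what you give is the standard exponential-moment proof, and every step checks out: the bound $|\E[X_i^k]| \le M^{k-2}\E[X_i^2]$, the factorial estimate $k! \ge 2\cdot 3^{k-2}$, the resulting MGF bound $\E[e^{\lambda X_i}] \le \exp\bigl(\tfrac{\lambda^2 \E[X_i^2]/2}{1-\lambda M/3}\bigr)$ for $\lambda M < 3$, and the substitution $\lambda = t/(\sigma^2 + Mt/3)$, which indeed satisfies $\lambda M < 3$ and collapses the exponent to $-\tfrac{t^2/2}{\sigma^2 + Mt/3}$. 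The only step you gloss over is the term-by-term expectation of the power series, which is justified here because $|X_i|\le M$ almost surely makes $e^{\lambda X_i}$ bounded.
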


We state two bounds for Gaussian random variable:
\begin{lemma}[folklore]\label{lem:concen_gaussian}
    Let $X \sim {\cal N}(0,\sigma^2)$, then for all $ t \geq 0$, we have 
    \begin{align*}
        \Pr[X \geq t] \leq \exp(-t^2 / 2\sigma^2 )  .
    \end{align*}
\end{lemma}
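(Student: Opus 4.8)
The plan is to use the standard Chernoff (exponential moment) method, since this is a folklore Gaussian tail estimate. First I would recall that the moment generating function of a centered Gaussian is available in closed form: for $X \sim \mathcal{N}(0,\sigma^2)$ and any $\lambda \in \R$, completing the square inside the Gaussian integral gives $\E[\exp(\lambda X)] = \exp(\lambda^2 \sigma^2 / 2)$. This is the only genuine computation in the whole argument, and it is entirely routine.

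Next, for any fixed $\lambda > 0$ and $t \geq 0$, I would apply Markov's inequality to the nonnegative random variable $\exp(\lambda X)$. Since the events $\{X \geq t\}$ and $\{\exp(\lambda X) \geq \exp(\lambda t)\}$ coincide, this yields
\begin{align*}
\Pr[X \geq t] \leq \exp(-\lambda t) \cdot \E[\exp(\lambda X)] = \exp\left( -\lambda t + \frac{\lambda^2 \sigma^2}{2} \right).
\end{align*}
Finally I would optimize the free parameter $\lambda$. The exponent $-\lambda t + \lambda^2 \sigma^2 / 2$ is a convex quadratic in $\lambda$, minimized at $\lambda^{*} = t/\sigma^2$, which is admissible (positive) whenever $t > 0$; the case $t = 0$ is trivial since the right-hand side is then $1$. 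Substituting $\lambda^{*}$ collapses the exponent to $-t^2/\sigma^2 + t^2/(2\sigma^2) = -t^2/(2\sigma^2)$, giving $\Pr[X \geq t] \leq \exp(-t^2/(2\sigma^2))$, exactly as claimed.

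There is essentially no obstacle here: the difficulty is zero, and the Chernoff recipe of bounding the MGF and then optimizing $\lambda$ is precisely the template already used for Lemma~\ref{lem:hoeffding} and Lemma~\ref{lem:bernstein}, so presenting it this way keeps the appendix self-consistent. If one preferred to avoid the MGF altogether, an alternative is direct estimation of the tail integral: writing $Z = X/\sigma \sim \mathcal{N}(0,1)$ and $u = t/\sigma$, one uses the identity $x^2/2 - u^2/2 = (x-u)(x+u)/2 \geq u(x-u)$ valid for $x \geq u \geq 0$ to dominate the density by $\exp(-u^2/2)\exp(-u(x-u))$ and then integrate, which even produces the slightly sharper bound $\frac{1}{\sqrt{2\pi}\,u}\exp(-u^2/2)$. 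Since the MGF route is cleaner and generalizes immediately, I would present that one as the main proof.
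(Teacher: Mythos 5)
Your proof is correct and is the standard Chernoff/MGF argument one would expect for this folklore bound; the paper itself states the lemma without proof, and your derivation (Markov on $\exp(\lambda X)$, then optimizing $\lambda = t/\sigma^2$) fills that gap in exactly the conventional way.
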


\begin{lemma}[folklore]
\label{lem:anti_gaussian}
Let $X \sim {\cal N}(0,\sigma^2)$,
that is,
the probability density function of $X$ is given by $\phi(x)=\frac 1 {\sqrt{2\pi\sigma^2}}e^{-\frac {x^2} {2\sigma^2} }$.
Then
\begin{align*}
    \Pr[|X|\leq t] \leq \frac{4}{5} \frac{t}{\sigma}. 
\end{align*}
\end{lemma}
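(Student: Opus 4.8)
The plan is to bound $\Pr[|X|\le t]$ by writing it as an integral of the density over the symmetric interval $[-t,t]$ and then replacing the integrand by its largest value. First I would observe that, since $X$ has density $\phi$, one has $\Pr[|X|\le t] = \int_{-t}^{t} \phi(x)\,\d x$. The only elementary fact needed is that the Gaussian density is maximized at the origin: because $e^{-x^2/(2\sigma^2)}\le 1$ for every $x$, we get $\phi(x) \le \phi(0) = \tfrac{1}{\sqrt{2\pi\sigma^2}}$.

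Bounding the integrand pointwise by $\phi(0)$ over an interval of length $2t$ then gives
\begin{align*}
\Pr[|X|\le t] \;\le\; 2t\cdot \phi(0) \;=\; \frac{2t}{\sqrt{2\pi}\,\sigma} \;=\; \sqrt{\frac{2}{\pi}}\,\frac{t}{\sigma}.
\end{align*}
It then remains only to replace the constant $\sqrt{2/\pi}$ by the cleaner $4/5$. Squaring, the inequality $\sqrt{2/\pi}\le 4/5$ is equivalent to $2/\pi \le 16/25$, i.e.\ to $\pi \ge 25/8 = 3.125$, which holds since $\pi > 3.14$. This yields $\Pr[|X|\le t] \le \tfrac{4}{5}\tfrac{t}{\sigma}$, completing the argument.

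There is essentially no genuine obstacle here: the entire content is the pointwise density bound together with the numerical comparison of $\sqrt{2/\pi}\approx 0.798$ against $0.8$, so the stated constant is tight up to that small rounding slack. If a cleaner write-up is preferred, I would first reduce to the standard case by setting $Y = X/\sigma \sim \mathcal{N}(0,1)$ and noting $\Pr[|X|\le t]=\Pr[|Y|\le t/\sigma]$, then carry out the same one-line integral bound with $\sigma = 1$.
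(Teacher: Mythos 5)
Your proof is correct and is exactly the standard argument this ``folklore'' lemma is meant to invoke (the paper itself gives no proof): bound the density by its maximum $\phi(0)=\frac{1}{\sqrt{2\pi}\sigma}$ over the interval $[-t,t]$ and check $\sqrt{2/\pi}\approx 0.798\le 4/5$. Nothing further is needed.
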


We state a tool for Chi-square distribution:
\begin{lemma}[Lemma 1 on page 1325 of Laurent and Massart \cite{lm00}]\label{lem:chi_square_tail}
Let $X \sim {\cal X}_k^2$ be a chi-squared distributed random variable with $k$ degrees of freedom. Each one has zero mean and $\sigma^2$ variance. Then
\begin{align*}
\Pr[ X - k \sigma^2 \geq ( 2 \sqrt{kt} + 2t ) \sigma^2 ] \leq \exp (-t), \\
\Pr[ k \sigma^2 - X \geq 2 \sqrt{k t} \sigma^2 ] \leq \exp(-t).
\end{align*}
\end{lemma}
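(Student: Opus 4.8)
The plan is to reduce the statement to the standard chi-square tail bounds and then apply the Chernoff (exponential-moment) method. First I would rescale to remove $\sigma$. Writing $X = \sum_{i=1}^k Y_i^2$ with $Y_i \sim {\cal N}(0,\sigma^2)$ independent, set $Z_i = Y_i/\sigma \sim {\cal N}(0,1)$ and $W = X/\sigma^2 = \sum_{i=1}^k Z_i^2$, so that $W$ is a \emph{standard} $\chi_k^2$ variable with $\E[W]=k$. Dividing both target inequalities through by $\sigma^2$ reduces them to the scale-free claims
\begin{align*}
\Pr[ W - k \geq 2\sqrt{kt} + 2t ] \leq \exp(-t), \qquad \Pr[ k - W \geq 2\sqrt{kt} ] \leq \exp(-t).
\end{align*}
It therefore suffices to prove these two bounds for the standard chi-square.

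For the upper tail I would use the classical moment generating function $\E[e^{\lambda Z_i^2}] = (1-2\lambda)^{-1/2}$ valid for $\lambda < 1/2$, so that by independence $\E[e^{\lambda W}] = (1-2\lambda)^{-k/2}$. Applying Markov's inequality to $e^{\lambda(W-k)}$ gives, for every $\lambda \in (0,1/2)$,
\begin{align*}
\Pr[W - k \geq u] \leq e^{-\lambda(u+k)}(1-2\lambda)^{-k/2}.
\end{align*}
Differentiating the exponent shows the optimal choice is $\lambda = \tfrac{1}{2}\,u/(u+k)$, which yields the sharp bound
\begin{align*}
\Pr[W - k \geq u] \leq \exp\Big( -\tfrac{k}{2}\big( u/k - \log(1 + u/k) \big) \Big).
\end{align*}

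It then remains to plug in $u = 2\sqrt{kt} + 2t$ and check that the exponent is at least $t$. Writing $s = \sqrt{t/k}$ so that $u/k = 2s + 2s^2$ and $t = ks^2$, the required inequality $\tfrac{k}{2}(u/k - \log(1+u/k)) \geq t$ collapses to $\log(1 + 2s + 2s^2) \leq 2s$, i.e.\ $1 + 2s + 2s^2 \leq e^{2s}$, which holds termwise from the Taylor expansion of $e^{2s}$ for $s \geq 0$. For the lower tail the computation is easier: using $\E[e^{-\lambda W}] = (1+2\lambda)^{-k/2}$ for $\lambda>0$ together with the elementary bound $\log(1+2\lambda) \geq 2\lambda - 2\lambda^2$, the exponent is at most $-\lambda u + k\lambda^2$, and optimizing at $\lambda = u/(2k)$ gives $\Pr[k - W \geq u] \leq \exp(-u^2/(4k))$; setting $u = 2\sqrt{kt}$ produces $\exp(-t)$.

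I expect the main obstacle to be the upper-tail step, where the \emph{sharp} Chernoff exponent $-\tfrac{k}{2}(u/k - \log(1+u/k))$ must be massaged into the clean closed form $2\sqrt{kt}+2t$; the lower tail tolerates the cruder quadratic approximation of the cumulant, but the upper tail carries the extra linear $2t$ term precisely because one cannot discard the curvature of $\log(1-2\lambda)$ near $\lambda = 1/2$. Once the reduction to $1 + 2s + 2s^2 \leq e^{2s}$ is in place, however, the remaining verification is immediate.
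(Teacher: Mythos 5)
Your proof is correct. The paper does not prove this lemma at all --- it is imported verbatim as Lemma~1 of Laurent and Massart \cite{lm00} --- so there is no in-paper argument to compare against; your Chernoff/Cram\'er derivation (rescale to a standard $\chi^2_k$, bound $\E[e^{\lambda W}]=(1-2\lambda)^{-k/2}$, optimize to get the exponent $-\tfrac{k}{2}\bigl(u/k-\log(1+u/k)\bigr)$, and reduce the substitution $u=2\sqrt{kt}+2t$ to the termwise inequality $1+2s+2s^2\leq e^{2s}$, with the cruder quadratic cumulant bound sufficing for the lower tail) is essentially the standard proof of the cited result and all the steps check out.
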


 Matrix concentration inequalities have a large number of applications, for more details, we refer the readers to a survey by Tropp \cite{t15}. Recently, there are several non-trivial generalizations, e.g., Expander walk \cite{glss18,nrr19}, Strongly Rayleigh distributions \cite{ks18}, and matrix Poincare inequality \cite{aby19}. Here, we state matrix Bernstein inequality, which can be thought of as a matrix generalization of Lemma~\ref{lem:bernstein}.
\begin{lemma}[Matrix Bernstein, Theorem 6.1.1 in \cite{t15}]\label{lem:matrix_bernstein}
Consider a set of $m$ i.i.d. matrices $\{ X_1, \cdots, X_m \} \subset \R^{n_1 \times n_2}$. Assume that
\begin{align*}
\E[ X_i ] = 0, \forall i \in [m] ~~~ \mathrm{and}~~~ \| X_i \| \leq M, \forall i \in [m] .
\end{align*}
Let $X = \sum_{i=1}^m X_i$. Let $\mathrm{Var} [ X ] $ be the matrix variance statistic of sum:
\begin{align*}
\mathrm{Var} [X] = \max \left\{ \Big\| \sum_{i=1}^m \E[ X_i X_i^\top ] \Big\| , \Big\| \sum_{i=1}^m \E [ X_i^\top X_i ] \Big\| \right\}.
\end{align*}
Then 
\begin{align*}
\E[ \| X \| ] \leq ( 2 \mathrm{Var} [X] \cdot \log (n_1 + n_2) )^{1/2} +  M \cdot \log (n_1 + n_2) / 3.
\end{align*}
Furthermore, for all $t \geq 0$,
\begin{align*}
\Pr[ \| X \| \geq t ] \leq (n_1 + n_2) \cdot \exp \left( - \frac{t^2/2}{ \mathrm{Var} [ X ] + M t /3 }  \right)  .
\end{align*}
\end{lemma}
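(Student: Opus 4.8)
The plan is to reduce to the Hermitian case and then run the matrix Laplace transform machinery. First I would symmetrize: for a rectangular $X \in \R^{n_1 \times n_2}$ introduce the Hermitian dilation
\[
\mathcal{H}(X) = \begin{pmatrix} 0 & X \\ X^\top & 0 \end{pmatrix} \in \R^{(n_1 + n_2) \times (n_1 + n_2)},
\]
which is symmetric, satisfies $\lambda_{\max}(\mathcal{H}(X)) = \| X \|$, and obeys $\mathcal{H}(X)^2 = \mathrm{diag}(X X^\top, X^\top X)$. Writing $Y_i = \mathcal{H}(X_i)$ and $Y = \sum_{i=1}^m Y_i$, the hypotheses transfer directly: $\E[Y_i] = 0$, $\| Y_i \| \le M$, and by the block-diagonal structure of $Y_i^2$ the variance statistic $\Var[X]$ equals $\| \sum_{i=1}^m \E[Y_i^2] \|$. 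This is exactly why the ambient dimension, and hence the prefactor in the tail bound, becomes $n_1 + n_2$.

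Next I would invoke the matrix Laplace transform method: for any symmetric random matrix $Y$ and any $\theta > 0$,
\[
\Pr[ \lambda_{\max}(Y) \ge t ] \le e^{-\theta t} \cdot \E\, \mathrm{tr}\, \exp(\theta Y),
\]
which follows from Markov's inequality together with the inequality $e^{\theta \lambda_{\max}(Y)} \le \mathrm{tr}\, \exp(\theta Y)$, valid because $\exp(\theta Y)$ is positive semidefinite. The entire problem is thereby reduced to controlling the trace of the matrix moment generating function $\E\, \mathrm{tr}\, \exp(\theta \sum_i Y_i)$ of a sum of independent terms.

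The crux, and the step I expect to be the main obstacle, is that $\exp(A + B) \ne \exp(A) \exp(B)$ when $A$ and $B$ do not commute, so one cannot naively factor the expectation across the independent $Y_i$. The key device is Lieb's concavity theorem, which asserts that $A \mapsto \mathrm{tr}\, \exp(H + \log A)$ is concave on positive-definite matrices. Peeling off one summand at a time and applying Jensen's inequality through Lieb's theorem yields subadditivity of the matrix cumulant generating function,
\[
\E\, \mathrm{tr}\, \exp\Big( \theta \sum_{i=1}^m Y_i \Big) \le \mathrm{tr}\, \exp\Big( \sum_{i=1}^m \log \E\, e^{\theta Y_i} \Big).
\]
It then remains to bound each matrix MGF. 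Using $\E[Y_i] = 0$, the operator bound $\| Y_i \| \le M$, and the Taylor expansion of $e^{\theta Y_i}$, I would dominate the higher powers via $Y_i^k \preceq M^{k-2} Y_i^2$ to obtain $\E\, e^{\theta Y_i} \preceq \exp( g(\theta)\, \E[Y_i^2] )$ with $g(\theta) = (\theta^2 / 2)/(1 - M \theta / 3)$. Substituting this, using monotonicity of the trace exponential, and bounding the resulting exponent by $\Var[X]$ gives
\[
\Pr[ \lambda_{\max}(Y) \ge t ] \le (n_1 + n_2) \cdot \exp\Big( - \theta t + g(\theta)\, \Var[X] \Big).
\]
Finally I would optimize over $\theta \in (0, 3/M)$; the near-optimal choice $\theta = t/(\Var[X] + M t / 3)$ recovers the stated tail bound, and integrating this tail over $t \ge 0$ yields the claimed bound on $\E[\| Y \|] = \E[\| X \|]$.
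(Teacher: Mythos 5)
The paper does not prove this lemma at all---it is imported verbatim as Theorem 6.1.1 of Tropp's monograph \cite{t15}---and your sketch is precisely Tropp's standard argument (Hermitian dilation, the matrix Laplace transform method, Lieb's concavity theorem to get subadditivity of the matrix cumulant generating function, the Bernstein moment-generating-function bound with $g(\theta)=(\theta^2/2)/(1-M\theta/3)$, and the choice $\theta = t/(\Var[X]+Mt/3)$), so it is correct and matches the source the paper relies on. The one small imprecision is at the very end: the expectation bound with the exact constants $(2\,\Var[X]\log(n_1+n_2))^{1/2}+M\log(n_1+n_2)/3$ is obtained by applying the Laplace-transform machinery directly to $\E[\lambda_{\max}(Y)]$ via $\E[\lambda_{\max}(Y)]\le\inf_{\theta>0}\theta^{-1}\log\mathrm{tr}\exp\big(\sum_i\log\E[e^{\theta Y_i}]\big)$ and optimizing, rather than by integrating the tail bound, which yields the same form but with somewhat worse constants.
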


\newpage

\section{Application of concentration inequality}\label{sec:app_conc}
\subsection{Application of concentration inequality, truncated Gaussian}

\begin{lemma}[Inner product between two vectors]\label{lem:concentration_of_inner_product}
Let $a > 0$. Let $u_1, \cdots, u_d$ denote i.i.d. random variables satisfying $\forall i \in [d]$ $u_i = y_i \cdot z_i$ where $y_i \sim {\cal N}(0,\sigma^2)$ and 
\begin{align*}
z_i = 
\begin{cases}
1, & |y_i| \leq a ; \\
0, & |y_i| > a .
\end{cases}
\end{align*}
Then, for any fixed vector $x \in \R^d$, for any failure probability $\delta \in (0,1/10)$, we have
\begin{align*}
\Pr_{u}[ | \langle u, x \rangle | \geq 10 \| x\|_2 a (\sqrt{a/\sigma}+1) \log(1/\delta) ] \leq \delta.
\end{align*}
\end{lemma}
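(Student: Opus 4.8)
The plan is to view $\langle u, x \rangle = \sum_{i=1}^d u_i x_i$ as a sum of independent, zero-mean, almost-surely bounded random variables and apply the Bernstein inequality (Lemma~\ref{lem:bernstein}). Independence is immediate, since each $u_i = y_i \mathbf{1}_{|y_i| \le a}$ depends only on $y_i$. Each summand is mean-zero: $g(y) = y\,\mathbf{1}_{|y| \le a}$ is an odd function and $y_i$ has a symmetric law, so $\E[u_i] = 0$ and the inner product is centered and symmetric. The symmetry will let me pass from a one-sided tail bound to the two-sided statement at the cost of a factor $2$.

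Bernstein requires a uniform bound $M$ on each summand and the sum of second moments. For the bound, $|u_i x_i| \le a|x_i| \le a\|x\|_2$, so I take $M = a\|x\|_2$; replacing $\|x\|_\infty$ by $\|x\|_2$ here is lossy but keeps the statement expressed in $\|x\|_2$. For the variance, $\sum_i \E[(u_i x_i)^2] = \|x\|_2^2\,\E[u_1^2]$, and the key estimate is
\[
\E[u_1^2] = \int_{-a}^{a} t^2 \frac{1}{\sqrt{2\pi}\sigma}\, e^{-t^2/(2\sigma^2)}\,\d t \le \frac{1}{\sqrt{2\pi}\sigma}\int_{-a}^{a} t^2\,\d t \le \frac{a^3}{\sigma},
\]
obtained simply by discarding the exponential factor. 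Writing $v := \sqrt{\E[u_1^2]} \le a\sqrt{a/\sigma}$, the matrix variance statistic for this scalar sum is $v^2\|x\|_2^2$.

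With $L := \log(1/\delta)$, feeding $M$ and $v^2\|x\|_2^2$ into Lemma~\ref{lem:bernstein} and inverting the tail (the positive root of $t^2 - \tfrac{2LM}{3}t - 2Lv^2\|x\|_2^2 = 0$, bounded via $\sqrt{\alpha+\beta}\le\sqrt\alpha+\sqrt\beta$) shows the one-sided probability is at most $\delta$ once $t \ge \sqrt{2L}\,v\|x\|_2 + \tfrac{2}{3}L\,a\|x\|_2$. Substituting $v \le a\sqrt{a/\sigma}$ and using $\delta < 1/10$, so that $L \ge 1$ and hence $\sqrt{L}\le L$, gives $t \lesssim L\,a\|x\|_2\big(\sqrt{a/\sigma}+1\big)$. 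Doubling for the two-sided event replaces $\log(1/\delta)$ by $\log(2/\delta) \le 1.31\log(1/\delta)$, which stays comfortably inside the stated constant; collecting the numerical factors produces exactly the claimed bound with $10$ to spare.

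The main obstacle is the variance estimate, not the mechanics of Bernstein. The trivial bound $\E[u_1^2]\le\E[y_1^2]=\sigma^2$ is too weak in the regime $a\ll\sigma$ and would fail to produce the $\sqrt{a/\sigma}$ scaling; the refined bound $\E[u_1^2]\le a^3/\sigma$ is precisely what makes the Bernstein variance term scale like $a\sqrt{a/\sigma}$, while the boundedness term contributes the remaining $a$, so that their sum assembles into the factor $a(\sqrt{a/\sigma}+1)$. Everything else is bookkeeping: tracking the numerical constant through the Bernstein inversion and the two-sided union bound to confirm it remains below $10$.
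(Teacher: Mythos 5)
Your proof is correct and follows essentially the same route as the paper's: Bernstein's inequality applied to $\sum_i u_i x_i$, with the crucial refined second-moment bound $\E[u_i^2] \leq a^3/\sigma$ (which you obtain by discarding the Gaussian density's exponential factor, where the paper instead invokes its anti-concentration lemma $\Pr[|X|\le a]\le \tfrac{4}{5}\tfrac{a}{\sigma}$ --- the same estimate in different clothing). The only other deviations are cosmetic: you use the lossier almost-sure bound $a\|x\|_2$ in place of the paper's $a\|x\|_\infty$, and you track the two-sided union bound more carefully; both stay within the stated constant.
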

\begin{proof}

First, we can compute can $\E[ u_i ]$
\begin{align*}
\E[ u_i ] = \E[u_i] = 0 .
\end{align*}
Second, we can upper bound $\E[ (u_i)^2 ]$ using Lemma~\ref{lem:anti_gaussian}
\begin{align*}
\E[ (u_i)^2 ] = & ~ \E[ u_i^2 ] \\
\leq & ~ a^2 \cdot \Pr[ |u_i| \leq a ] \\
\leq & ~ a^2 \cdot \frac{4}{5} \frac{a}{\sigma}\\
\leq & ~ a^3 / \sigma .
\end{align*}
Third, we can upper bound $|u_i x_i|$ by $a \cdot \|x\|_{\infty}$.

Using Bernstein inequality, we have
\begin{align*}
\Pr[ | \langle u , x \rangle | \geq t ] 
\leq & ~ \exp \Big( -\frac{t^2/2}{ \| x\|_2^2 \E[u_i^2] + a \|x\|_{\infty} t/3 } \Big) \\
\leq & ~ \exp \Big( -\frac{t^2/2}{ \| x\|_2^2 a^3/\sigma + a \|x\|_{\infty} t /3 } \Big) .
\end{align*}
Choosing 
\begin{align*}
t= 5 \| x\|_2 a^{1.5} \sigma^{-0.5} \sqrt{ \log (1/\delta)} + 5 \| x\|_{\infty} a \log (1/\delta) 
\end{align*}
gives us
\begin{align*}
\Pr[ |\langle u, x \rangle|\geq 10 \| x\|_2 a (\sqrt{a/\sigma}+1) \log(1/\delta) ] \leq \delta .
\end{align*}

\end{proof}
\begin{lemma}[Matrix vector multiplication]\label{lem:matrix_vector_multiplication}
Let $a > 0$. Let $A_{i,j}$ denote i.i.d. random variables satisfying $\forall i \in [m], j \in [d]. $ $A_{i,j} = y_{i,j} \cdot z_{i,j}$ where $y_{i,j} \sim {\cal N}(0,\sigma^2)$ and 
\begin{align*}
z_{i,j} = 
\begin{cases}
1, & |y_{i,j}| \leq a ; \\
0, & |y_{i,j}| > a .
\end{cases}
\end{align*}
Then, for any fixed vector $x \in \R^d$, for any failure probability $\delta \in (0,1/10)$, we have
\begin{align*}
\Pr_{A} \Big[ | \| A x \|_2^2 - \E[ \|A x \|_2^2] | \geq 1000 m \| x\|_2^2 (\sigma^2 + a^2 ) \log^3(m/\delta) \Big] \leq \delta.
\end{align*}
Further, if $m = \Omega( \epsilon^{-2} \| x \|_2^2 (1 + a^2 / \sigma^2 )\log^3(m / \delta) )$, 
\begin{align*}
\Pr_{A} \Big[ \frac{1}{m} \big| \| A x \|_2^2 - \E[ \|A x \|_2^2] \big| \geq \epsilon^2 \| x\|_2^2 \sigma^2 \Big] \leq \delta.
\end{align*}

\end{lemma}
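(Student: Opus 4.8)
The plan is to view the quantity entrywise as a sum of i.i.d.\ terms and attack it with Bernstein's inequality (Lemma~\ref{lem:bernstein}). Writing $A_i$ for the $i$-th row of $A$ and $u_i := \langle A_i, x\rangle$, I would first record that $\|Ax\|_2^2 = \sum_{i=1}^m u_i^2$, and that since the rows $A_1,\dots,A_m$ are independent with i.i.d.\ truncated-Gaussian entries, the variables $u_1^2,\dots,u_m^2$ are i.i.d. Setting $X_i := u_i^2 - \E[u_i^2]$ reduces everything to a two-sided tail bound on $\sum_{i=1}^m X_i$, a sum of $m$ i.i.d.\ mean-zero summands.

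To feed Bernstein I need two moment inputs. For the variance proxy $v := \E[X_i^2] \le \E[u_i^4]$, I would use independence of the entries: cross terms vanish, so $\E[u_i^2] = \E[A_{1,1}^2]\,\|x\|_2^2$ with $\E[A_{1,1}^2] \le \min\{\sigma^2, a^3/\sigma\}$ exactly as in the proof of Lemma~\ref{lem:concentration_of_inner_product} (via the anti-concentration bound Lemma~\ref{lem:anti_gaussian}); expanding $\E[u_i^4]$ of a sum of independent mean-zero terms into a diagonal part, bounded using $A_{1,1}^4 \le a^2 A_{1,1}^2$, and an off-diagonal part $\approx (\E[A_{1,1}^2])^2\|x\|_2^4$, yields $v \lesssim \sigma^2(\sigma^2+a^2)\|x\|_2^4$. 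This part is routine.

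The step I expect to be the main obstacle is producing an \emph{effective} almost-sure bound $M$ on $|X_i|$, because $u_i^2$ is only bounded deterministically by $a^2\|x\|_1^2$, which is far larger than its typical size and would make Bernstein useless. The resolution is to import the subexponential tail of the inner product from Lemma~\ref{lem:concentration_of_inner_product}: applying it to each row with failure probability $\delta/(2m)$ and taking a union bound over $i\in[m]$ gives, with probability at least $1-\delta/2$, the uniform bound $\max_i |u_i| \le B := 10\,\|x\|_2\, a(\sqrt{a/\sigma}+1)\log(2m/\delta)$, hence $\max_i u_i^2 \le B^2 =: M$. To make this rigorous inside Bernstein (which wants a genuine almost-sure bound) I would pass to the truncated variables $u_i\,\mathbf{1}_{\{|u_i|\le B\}}$, which agree with $u_i$ on the good event and whose effect on the mean is negligible, since the discarded tail carries mass at most $\delta/m$ against the crude a.s.\ bound $a^2\|x\|_1^2$. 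Controlling this truncation bias cleanly is where the care is needed.

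Finally I would apply Bernstein to the (truncated) centered sum with variance statistic $m v$ and almost-sure bound $M = B^2$, choose $t$ equal to the Part~1 target $1000\, m\,\|x\|_2^2(\sigma^2+a^2)\log^3(m/\delta)$, and verify that both denominators in the Bernstein exponent — the variance term $mv$ and the linear term $Mt/3$ — are dominated, so the tail is at most $\delta/2$; intersecting with the $\delta/2$ union-bound event gives Part~1. For the sharper second inequality I would rerun the same estimate with $t = \epsilon^2\, m\,\|x\|_2^2\sigma^2$: the stated lower bound $m = \Omega\big(\epsilon^{-2}\|x\|_2^2(1+a^2/\sigma^2)\log^3(m/\delta)\big)$ is precisely what forces the linear term $M\log(\cdot)$ below $t/2$, and a matching calculation bounds the variance contribution, yielding $\Pr\big[\tfrac1m\big|\,\|Ax\|_2^2-\E[\|Ax\|_2^2]\,\big| \ge \epsilon^2\|x\|_2^2\sigma^2\big] \le \delta$.
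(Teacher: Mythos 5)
Your proposal follows essentially the same route as the paper's proof: decompose $\|Ax\|_2^2$ into the i.i.d.\ summands $(Ax)_i^2$, bound their variance by expanding the fourth moment of the inner product, obtain a high-probability uniform bound on each summand from Lemma~\ref{lem:concentration_of_inner_product} via a union bound over the $m$ rows, and conclude with Bernstein's inequality. Your added care about converting the high-probability bound into a genuine almost-sure bound via truncation is a refinement the paper glosses over, but the argument is otherwise the same.
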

\begin{proof}

We define random variable $b_i = (Ax)_i^2$. 
We can upper bound $\E[ b_i ]$
\begin{align*}
\E[ b_i ] = \E [ (Ax)_i^2 ] \leq \| x \|_2^2 \cdot a^3 / \sigma .
\end{align*}
Similarly,
\begin{align*}
\E[ b_i ] = \E [ (Ax)_i^2 ] \geq 0.1 \| x\|_2^2 \cdot a^3 / \sigma .
\end{align*}

Next, we want to upper bound $\E[ b_i^2 ]$, for simplicity, let $u$ denote the $i$-th row of matrix $A$,
\begin{align*}
\E[ b_i^2 ] - (\E[ b_i ])^2
 = & ~  \E [ \langle u , x \rangle^4 ] - ( \E [ \langle u , x \rangle^2 ] )^2 \\
 = & ~ \E \Big[ ( \sum_{i=1}^d u_i x_i )^4 \Big] - \Big( \E \Big[ ( \sum_{i=1}^d u_i x_i )^2 \Big] \Big)^2 .
\end{align*}
For the first term, we have
\begin{align*}
\E \Big[ ( \sum_{i=1}^d u_i x_i )^4 \Big] 
= & ~ \E \Big[ \sum_{i=1}^d u_i^4 x_i^4 \Big] + 3\E \Big[ \sum_{i=1}^d \sum_{j \in [d] \backslash \{ i \} } u_i^2 x_i^2 u_j^2 x_j^2 \Big] \\
\leq & ~ \E[u_i^4] \cdot \| x \|_4^4 + 3 ( \E[ u_i^2 ] )^2 \cdot \| x \|_2^4 \\
\leq & ~ 4 \E[u_i^4] \cdot \| x \|_2^4.
\end{align*}
For the second term, we have
\begin{align*}
\Big( \E \Big[ ( \sum_{i=1}^d u_i x_i )^2 \Big] \Big)^2 = \Big( \sum_{i=1}^d \E[u_i^2]x_i^2 \Big)^2 = ( \E[u_i^2] )^2 \cdot \| x \|_2^4 .
\end{align*}
Thus, we have
\begin{align*}
\E[ b_i^2 ] - (\E[ b_i ])^2 \leq 4 \E[u_i^4] \cdot \| x \|_2^4 \leq 64 \sigma^4 \| x \|_2^4. 
\end{align*}

We also need to upper bound $|b_i|$. Apply Lemma~\ref{lem:concentration_of_inner_product}, we have, for a fixed $i \in [m]$,
\begin{align*}
|b_i| \leq (10 \| x \|_2 a (\sqrt{a/\sigma}+1) \log(m/\delta))^2 := b_{\max}
\end{align*} 
holds with probability at least $1 - \delta / m$.

Taking a union bound over $m$ coordinates, with probability $1-\delta$, we have : for all $i \in [m]$, $|b_i| \leq b_{\max}$.

Applying Bernstein inequality (Lemma~\ref{lem:bernstein}) on $\sum_{i=1}^m b_i$ again
\begin{align*}
\Pr \Big[ \Big| \sum_{i=1}^m (b_i -\E [ b_i ] ) \Big| \geq t \Big]
 \leq & ~ \exp \Big( - \frac{t^2/2}{ \sum_{i=1}^m \mathrm{Var}[  b_i  ] + b_{\max} t /3} \Big) \\
 \leq & ~ \exp \Big( - \frac{t^2/2}{ 64 m \sigma^4 \| x \|_2^4 + b_{\max} t /3} \Big) .
\end{align*}
Choosing $t= 50 m \sigma^2 \| x \|_2^2 \log (1/\delta) + 50 m b_{\max} \log(1/\delta) $, we complete the proof.
\end{proof}

\subsection{Application of concentration inequalities, classical random Gaussian}

\begin{lemma}[Inner product between a random guassian vector and a fixed vector]\label{lem:concentration_of_inner_product_guassian}
    Let $a > 0$. Let $u_1, \cdots, u_d$ denote i.i.d. random guassian variables where $u_i \sim {\cal N}(0,\sigma_1^2)$.
        
    Then, for any fixed vector $e \in \R^d$, for any failure probability $\delta \in (0,1/10)$, we have
        \begin{align*}
        \Pr_{u} \Big[ | \langle u, e \rangle | \geq  2 \sigma_1 \|e\|_2 \sqrt{ \log(d/\delta)} + \sigma_1\|e\|_{\infty}\log^{1.5}(d/\delta) \Big] \leq \delta.
        \end{align*}
        \end{lemma}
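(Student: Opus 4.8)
The plan is to exploit the fact that $\langle u, e\rangle = \sum_{i=1}^d u_i e_i$ is a linear combination of independent centered Gaussians. The quickest route observes that such a sum is itself Gaussian, $\langle u, e\rangle \sim \mathcal{N}(0, \sigma_1^2 \|e\|_2^2)$, so Lemma~\ref{lem:concen_gaussian} applied to both tails immediately yields $\Pr[|\langle u,e\rangle| \ge t] \le 2\exp(-t^2/(2\sigma_1^2\|e\|_2^2))$, and choosing $t = \sigma_1\|e\|_2\sqrt{2\log(2/\delta)}$ already gives failure probability at most $\delta$ — in fact a bound stronger than the one stated. However, to land exactly on the advertised two-term expression, whose shape ($\|e\|_2$ times $\sqrt{\log}$ plus $\|e\|_\infty$ times $\log^{1.5}$) is the signature of Bernstein's inequality, I would instead mirror the proof of Lemma~\ref{lem:concentration_of_inner_product} and argue via truncation followed by Bernstein.

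Concretely, first I would truncate. Set $M := \sigma_1\sqrt{2\log(2d/\delta)}$ and define $\tilde u_i := u_i \cdot \mathbf{1}_{|u_i| \le M}$. By Lemma~\ref{lem:concen_gaussian} each $\Pr[|u_i| > M] \le \exp(-M^2/(2\sigma_1^2)) = \delta/(2d)$, so a union bound over the $d$ coordinates gives $\Pr[\exists i: u_i \ne \tilde u_i] \le \delta/2$. On the complementary event $\langle u, e\rangle = \langle \tilde u, e\rangle$, so it suffices to control $\langle \tilde u, e\rangle$, a sum of independent, bounded, symmetric (hence mean-zero) variables.

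Next I would apply Bernstein (Lemma~\ref{lem:bernstein}) to $\sum_i \tilde u_i e_i$. The ingredients are: each summand obeys $|\tilde u_i e_i| \le M \|e\|_\infty$, and $\sum_i \E[(\tilde u_i e_i)^2] \le \sigma_1^2 \sum_i e_i^2 = \sigma_1^2\|e\|_2^2$ since truncation only decreases the variance. Bernstein then gives, for each tail, $\Pr[\sum_i \tilde u_i e_i \ge t] \le \exp\!\big(-\tfrac{t^2/2}{\sigma_1^2\|e\|_2^2 + M\|e\|_\infty t/3}\big)$. Splitting the denominator into its variance-dominated and max-dominated regimes produces the two target terms: the variance regime contributes order $\sigma_1\|e\|_2\sqrt{\log(d/\delta)}$, while the max regime, using $M \sim \sigma_1\sqrt{\log(d/\delta)}$, contributes order $M\|e\|_\infty \log(d/\delta) \sim \sigma_1\|e\|_\infty \log^{1.5}(d/\delta)$. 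Choosing $t = 2\sigma_1\|e\|_2\sqrt{\log(d/\delta)} + \sigma_1\|e\|_\infty\log^{1.5}(d/\delta)$ and verifying each regime drives the Bernstein bound below $\delta/2$; a final union bound with the truncation event closes the argument.

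The only genuinely delicate point is the one that forces the two-term shape in the first place: Bernstein requires almost-sure boundedness, which a Gaussian does not satisfy, so the truncation step is essential and its level $M$ must be tuned so that simultaneously the discarded mass is $\le \delta/2$ and the resulting $M\|e\|_\infty$ factor is small enough to keep the max-dominated term at the claimed $\log^{1.5}$ order. Everything else is routine bookkeeping of constants, and the direct Gaussian computation above serves as a sanity check that the claimed bound is not tight.
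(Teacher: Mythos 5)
Your proof is correct, and its core is the same route the paper takes: bound the summands' second moments by $\sigma_1^2 e_i^2$, control $\max_i|u_ie_i|$ at level $\sigma_1\sqrt{\log(d/\delta)}\,\|e\|_\infty$ via the Gaussian tail (Lemma~\ref{lem:concen_gaussian}) plus a union bound over coordinates, and feed both into Bernstein (Lemma~\ref{lem:bernstein}) to get the two-term threshold. Two differences are worth noting. First, your version is actually \emph{more} rigorous than the paper's: the paper applies Bernstein to the untruncated Gaussians while only conditioning on the high-probability event that they are bounded, which is not literally a valid invocation of Lemma~\ref{lem:bernstein} (conditioning changes the distribution and the moments); your explicit truncation $\tilde u_i = u_i\mathbf{1}_{|u_i|\le M}$, together with the observation that truncation preserves mean zero by symmetry and can only decrease the variance, repairs this gap cleanly. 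Second, your opening observation is a genuinely different and simpler proof that the paper does not use: $\langle u,e\rangle$ is \emph{exactly} $\mathcal{N}(0,\sigma_1^2\|e\|_2^2)$, so a single Gaussian tail bound at $t=\sigma_1\|e\|_2\sqrt{2\log(2/\delta)}$ already implies the stated inequality (the threshold in the lemma dominates this $t$ for $\delta<1/10$ and the second term is nonnegative), with no union bound over $d$ and no Bernstein at all. The Bernstein route buys nothing here beyond matching the lemma's cosmetic two-term form, which appears to be inherited from the truncated-Gaussian analogue (Lemma~\ref{lem:concentration_of_inner_product}) where exact Gaussianity of the sum is unavailable; your sanity check makes that point correctly.
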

        \begin{proof}
        
        First, we can compute $\E[ u_i ]$
        \begin{align*}
        \E[ u_i ] = \E[u_i] = 0 .
        \end{align*}
        Second, we can compute $\E[ (u_i)^2 ]$
        \begin{align*}
        \E[ (u_i)^2 ] = & ~ \E[ u_i^2 ] = \sigma_1^2 .
        \end{align*}
        Third, we can upper bound $|u_i|$ and $|u_i e_i|$.
        \begin{align*}
            \Pr_{u}[|u_i-\E[u_i]| \geq t_1] &\leq \exp\Big(-\frac{t_1^2}{2\sigma_1^2}\Big) .
        \end{align*}
        Take $t_1=\sqrt{2\log(d/\delta)}\sigma_1$, then for each fixed $i\in [d]$, we have, $|u_i| \leq \sqrt{2\log(d/\delta)}\sigma_1$ holds with probability $1-\delta/d$.

        Taking a union bound over $d$ coordinates, with probability $1-\delta$, we have : for all $i \in [d]$, $|u_i| \leq \sqrt{2\log(d/\delta)}\sigma_1$.

        Let $E_1$ denote the event that, $\max_{i\in[d]}|u_ie_i|$ is upper bounded by $\sqrt{2\log(d/\delta)}\sigma_1\|e\|_{\infty}$. $\Pr[E_1] \geq 1-\delta$.
    
        Using Bernstein inequality, we have
        \begin{align*}
        \Pr_{u}[ | \langle u , e \rangle | \geq t ] 
        \leq & ~ \exp \Big( -\frac{t^2/2}{ \| e\|_2^2 \E[u_i^2] +  \max_{i\in[d]} |u_i e_i| \cdot t/3 } \Big) \\
        \leq & ~ \exp \Big( -\frac{t^2/2}{ \| e\|_2^2 \sigma_1^2 + \sqrt{2\log(d/\delta)}\sigma_1\|e\|_{\infty}\cdot t/3 } \Big) \\
        \leq & ~ \delta,
        \end{align*}
        where the second step follows from $\Pr[E_1] \geq 1-\delta$ and $\E[u_i^2] = \sigma_1^2$, and the last step follows from choice of $t$:
        \begin{align*}
        t= 2\sigma_1 \|e\|_2 \sqrt{ \log(d / \delta) } + \sigma_1\|e\|_{\infty}\log^{1.5}(d/\delta) .
        \end{align*}
       
       Taking a union with event $E_1$, we have
        \begin{align*}
        \Pr[ |\langle u, e \rangle| \geq t ] \leq 2\delta  .
        \end{align*}
        Rescaling $\delta$ completes the proof.

    \end{proof}

\begin{lemma}[Inner product between two random guassian vectors]\label{lem:concentration_of_inner_product_two_guassian}
Let $a > 0$. Let $u_1, \cdots, u_d$ denote i.i.d. random Gaussian variables where $u_i \sim {\cal N}(0,\sigma_1^2)$ and $e_1, \cdots, e_d$ denote i.i.d. random Gaussian variables where $e_i \sim {\cal N}(0,\sigma_2^2)$
    
Then, for any failure probability $\delta \in (0,1/10)$, we have
    \begin{align*}
    \Pr_{u,e} \Big[ | \langle u, e \rangle | \geq 10^4 \sigma_1 \sigma_2 \sqrt{d} \log^2(d/\delta) \Big] \leq \delta.
    \end{align*}
    \end{lemma}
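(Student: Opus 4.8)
The plan is to condition on the second Gaussian vector $e$ and reduce to the fixed-vector case already established in Lemma~\ref{lem:concentration_of_inner_product_guassian}. Conditioned on any realization of $e$, the quantity $\langle u, e\rangle$ is an inner product between the random Gaussian vector $u$ and the now-fixed vector $e$, so Lemma~\ref{lem:concentration_of_inner_product_guassian} gives, with probability at least $1-\delta$ over $u$,
\begin{align*}
|\langle u, e\rangle| \leq 2\sigma_1\|e\|_2\sqrt{\log(d/\delta)} + \sigma_1\|e\|_\infty \log^{1.5}(d/\delta).
\end{align*}
It then remains to control the two data-dependent quantities $\|e\|_2$ and $\|e\|_\infty$ over the randomness of $e$, and to substitute high-probability upper bounds for them.

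First I would bound $\|e\|_2$. Since $e_i \sim {\cal N}(0,\sigma_2^2)$, the sum $\|e\|_2^2 = \sum_{i=1}^d e_i^2$ is $\sigma_2^2$ times a chi-squared variable with $d$ degrees of freedom, so Lemma~\ref{lem:chi_square_tail} with $t=\log(1/\delta)$ yields $\|e\|_2^2 \leq \sigma_2^2(d + 2\sqrt{d\log(1/\delta)} + 2\log(1/\delta))$ with probability at least $1-\delta$. Because $\delta < 1/10$ forces $\log(1/\delta) \geq 1$, this simplifies to $\|e\|_2 \leq C\sigma_2\sqrt{d\log(1/\delta)}$ for an absolute constant $C$. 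Next I would bound $\|e\|_\infty$: applying Lemma~\ref{lem:concen_gaussian} to each $\pm e_i$ and taking a union bound over the $d$ coordinates gives $\|e\|_\infty \leq \sigma_2\sqrt{2\log(2d/\delta)}$ with probability at least $1-\delta$.

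Substituting these two bounds into the conditional estimate, the first term becomes $O(\sigma_1\sigma_2\sqrt{d}\log(d/\delta))$ and the second becomes $O(\sigma_1\sigma_2\log^2(d/\delta))$; since $\sqrt{d}\geq 1$ and $\log(d/\delta)\geq 1$, both are dominated by $\sigma_1\sigma_2\sqrt{d}\log^2(d/\delta)$. A union bound over the three failure events (one for $u$ given $e$, one for $\|e\|_2$, one for $\|e\|_\infty$) shows the claimed inequality holds with probability at least $1-3\delta$ up to an absolute constant; rescaling $\delta$ by a factor of $3$ and absorbing the constants into the stated $10^4$ completes the argument.

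There is no serious obstacle here: the estimate is essentially a routine composition of the fixed-vector bound with standard Gaussian norm concentration. The only point demanding a little care is the conditioning step — one must verify that the high-probability bounds on $\|e\|_2$ and $\|e\|_\infty$ are events depending only on $e$, so that after fixing a good realization of $e$ the independent randomness in $u$ still lets us invoke Lemma~\ref{lem:concentration_of_inner_product_guassian}, and then combine everything by a union bound. Tracking the logarithmic powers to confirm they fit under $\log^2(d/\delta)$ is the remaining bookkeeping item, but it is immediate once the two norm bounds are in hand.
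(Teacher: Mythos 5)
Your proposal is correct and follows essentially the same route as the paper's own proof: condition on $e$, invoke Lemma~\ref{lem:concentration_of_inner_product_guassian} for the fixed-vector case, bound $\|e\|_2$ via the chi-squared tail (Lemma~\ref{lem:chi_square_tail}) and $\|e\|_\infty$ via a per-coordinate Gaussian tail plus a union bound, then combine the three failure events and rescale $\delta$. The bookkeeping on the logarithmic powers matches the paper's choice of $t' = 8(\sigma_1\sigma_2\sqrt{d}\log(d/\delta) + \sigma_1\sigma_2\log^2(d/\delta))$, which is indeed absorbed into the stated $10^4\sigma_1\sigma_2\sqrt{d}\log^2(d/\delta)$.
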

    \begin{proof}
    First, using Lemma~\ref{lem:chi_square_tail}, we compute the upper bound for $\|e\|_2^2$
    \begin{align*}
        \Pr_{e}[\|e\|_2^2 - d\sigma_2^2 \geq (2\sqrt{dt}+2t)\sigma_2^2] \leq \exp(-t).
    \end{align*}
    
    Take $t = \log(1/\delta)$, then with probability $1-\delta$, 
    \begin{align*}
    \|e\|_2^2 \leq (d+3\sqrt{d\log(1/\delta)}+2\log(1/\delta))\sigma_2^2 \leq 4d \log(1/\delta) \sigma_2^2.
    \end{align*}
    Thus
    \begin{align*}
    \Pr_{e} [ \|e\|_2 \leq 4 \sqrt{d \log(1/\delta)} \sigma_2 ] \geq 1- \delta.
    \end{align*}
     
    Second, we compute the upper bound for $\|e\|_{\infty}$ (the proof is similar to Lemma~\ref{lem:concentration_of_inner_product_guassian})
    \begin{align*}
        \Pr_{e}[|\|e\|_{\infty} \leq \sqrt{\log(d/\delta)} \sigma_2] &\geq 1- \delta.
    \end{align*}
    
    We define $t$ and $t'$ as follows
    \begin{align*}
    t= & ~ 4 \cdot ( \sigma_1 \|e\|_2 \sqrt{ \log(d/\delta)} + \sigma_1\|e\|_{\infty}\log^{1.5}(d/\delta) ) \\
    t' = & ~ 8 \cdot ( \sigma_1\sigma_2\sqrt{d}\log(d/\delta) + \sigma_1\sigma_2 \log^2(d/\delta) ) .
    \end{align*}
    From the above calculations, we can show
    \begin{align*}
    \Pr_{e}[t' \geq t] \geq 1 - 2\delta.
    \end{align*}

    By Lemma \ref{lem:concentration_of_inner_product_guassian}, for fixed $e$, we have
    \begin{align*}
         \Pr_{u}[ | \langle u , e \rangle | \geq t ] \leq \delta .
    \end{align*}
          
    Overall, we have
    \begin{align*}
         \Pr_{e,u}[ | \langle u , e \rangle | \geq t' ] \leq 3 \delta .
    \end{align*}
    Rescaling $\delta$ completes the proof.
\end{proof}

\begin{lemma}[Concentration of folded Gaussian]\label{lem:anti_concentration_ax}
    Let matrix $A \in \R^{m \times d}$ be defined as each entry is i.i.d. random variables satisfying $\forall i \in [m]$, $j \in [d]$. $A_{i,j} = y_{i,j}$ where $y_{i,j} \sim {\cal N}(0,\sigma_A^2)$. Let $\ov{A} \in \R^{m \times d}$ be defined as, $\forall i \in [m], j\in [d]$, $\overline{A}_{i,j} = y_{i,j}\cdot z_{i,j}$ where
    \begin{align*}
    z_{i,j} = 
    \begin{cases}
    1, & \mathrm{~if~} 0 \leq y_{i,j} \leq a; \\
    0, & \mathrm{~otherwise~}.
    \end{cases}
    \end{align*}
    Let $x \in \R^d_{+}$ denote a non-negative vector where $\sum_{i=1}^d x_i = 1$. 
    
    1) For any failure possibility $\delta \in (0, 1/10)$, we have
    \begin{align*}
        \Pr \left[ \forall i \in [m], (\bar{A}x)_i \geq \sigma_A \cdot C \right] >  1- \delta,
    \end{align*}
    where
    \begin{align*}
    C: = \frac{ a^2 }{ 6\sigma_A^2 } - ( \frac{ 2 a^3 }{ 9 \sigma_A^3 } )^{1/2} \cdot \sqrt{ \log(m/\delta) } - \frac{ 2a }{ 3\sigma_A } \cdot \log(m/\delta) .
    \end{align*}
    2) For any failure possibility $\delta \in (0, 1/10)$, if $a/\sigma_A \geq 20 \log(m/\delta)$, then
    \begin{align*}
        \Pr \left[ \forall i \in [m], (\bar{A}x)_i \geq \sigma_A \cdot 0.02 \cdot (a^2/\sigma_A^2) \right] >  1- \delta.
    \end{align*}
\end{lemma}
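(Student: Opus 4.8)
The plan is to fix a single row $i \in [m]$, prove the desired lower bound on $(\ov{A}x)_i$ with failure probability at most $\delta/m$, and then take a union bound over the $m$ rows to get the ``$\forall i$'' statement with total failure probability $\delta$. For a fixed $i$, I would write $(\ov{A}x)_i = \sum_{j=1}^d u_{ij} x_j$, where the $u_{ij} := y_{ij} z_{ij}$ are independent across $j$, take values in $[0,a]$, and are non-negative. Since $x$ is non-negative with $\sum_j x_j = 1$, the summands $u_{ij} x_j$ are independent, non-negative, and satisfy $|u_{ij} x_j| \le a\|x\|_\infty \le a$. The strategy is then the standard ``mean minus fluctuation'' lower bound: write $(\ov{A}x)_i \ge \E[(\ov{A}x)_i] - (\text{deviation})$, lower bound the mean, and control the deviation by Bernstein's inequality. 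The three terms of $C$ should arise, respectively, from the mean, from the variance (square-root) term of Bernstein, and from the boundedness (linear) term of Bernstein.

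First I would compute the mean. Because $\sum_j x_j = 1$ we get $\E[(\ov{A}x)_i] = \E[u_{11}]$, and a direct integration gives $\E[u_{11}] = \frac{\sigma_A}{\sqrt{2\pi}}\big(1 - e^{-a^2/(2\sigma_A^2)}\big)$. Lower bounding the Gaussian density on $[0,a]$ and integrating $\int_0^a y\, \d y = a^2/2$ (equivalently, lower bounding $1-e^{-t}$ linearly in $t$) should yield $\E[(\ov{A}x)_i] \ge \frac{a^2}{6\sigma_A} = \sigma_A \cdot \frac{a^2}{6\sigma_A^2}$, which is the leading term of $\sigma_A C$. I would then record the two quantities Bernstein needs: the range $|u_{ij} x_j| \le a$, and a variance bound $\sum_j \Var[u_{ij}x_j] \le \E[u_{11}^2]\sum_j x_j^2 \le \E[u_{11}^2]$, using $\sum_j x_j^2 \le \|x\|_\infty \sum_j x_j \le 1$. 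Here $\E[u_{11}^2] = \int_0^a y^2 \phi(y)\,\d y$ can be bounded via Lemma~\ref{lem:anti_gaussian} (or a direct estimate) by a quantity of order $a^3/\sigma_A$, which matches the middle term of $C$ since $\sigma_A \cdot (\tfrac{2a^3}{9\sigma_A^3})^{1/2} = (\tfrac{2a^3}{9\sigma_A})^{1/2}$.

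Next I would apply Bernstein's inequality (Lemma~\ref{lem:bernstein}) to the lower tail of the centered sum $\sum_j (\E[u_{ij}x_j] - u_{ij}x_j)$, with $M = a$ and variance proxy of order $a^3/\sigma_A$. Setting the failure probability equal to $\delta/m$ and solving for the deviation $t$ gives $t \lesssim \sqrt{\tfrac{2a^3}{9\sigma_A}\log(m/\delta)} + \tfrac{2a}{3}\log(m/\delta)$, i.e.\ exactly $\sigma_A$ times the two negative terms of $C$; combined with the mean bound this yields $(\ov{A}x)_i \ge \sigma_A C$ with probability $\ge 1-\delta/m$, and the union bound over $i \in [m]$ finishes part (1). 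Part (2) is then pure algebra on $C$: writing $r = a/\sigma_A$ and $L = \log(m/\delta)$, the hypothesis $r \ge 20L$ (so $L \le r/20$) makes the middle term at most $\approx 0.11\,r^2$ and the linear term at most $\approx 0.034\,r^2$, whence $C \ge (\tfrac16 - 0.11 - 0.034)\,r^2 \ge 0.02\,r^2 = 0.02\,(a^2/\sigma_A^2)$, which is the claim.

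The main obstacle I anticipate is the truncated-Gaussian moment bookkeeping: the lower bound on $\E[u_{11}]$ and the upper bounds on $\E[u_{11}^2]$ and the range must be tight enough to reproduce the precise coefficients $\tfrac16$, $\tfrac{2}{9}$, and $\tfrac{2}{3}$ in $C$, and one has to check that the anti-concentration input of Lemma~\ref{lem:anti_gaussian} is strong enough in the relevant range of $a/\sigma_A$. The union bound and the part-(2) algebra are routine once part (1) is established; the delicate point is verifying that the $\Theta(a^2/\sigma_A)$ mean genuinely dominates the $\Theta(\sqrt{a^3/\sigma_A}\cdot\sqrt{\log})$ and $\Theta(a\log)$ fluctuation terms, which is exactly what the condition $a/\sigma_A \ge 20\log(m/\delta)$ is arranged to guarantee.
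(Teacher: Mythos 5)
Your proposal follows essentially the same route as the paper's proof: fix a row $i$, set $b_j = \ov{A}_{i,j}x_j$, lower-bound the mean by $a^2/(6\sigma_A)$ and upper-bound the variance proxy by $a^3/(9\sigma_A)$ and the range by $a$, apply Bernstein's lower-tail inequality with deviation $t = \sigma_A\bigl(\sqrt{(2a^3/9\sigma_A^3)\log(m/\delta)} + (2a/3\sigma_A)\log(m/\delta)\bigr)$, union bound over $i\in[m]$, and finish part (2) by the same $r=a/\sigma_A$, $L\leq r/20$ algebra. The only delicate point you correctly flag --- whether the truncated-Gaussian mean bound $\E[b_j]\geq a^2x_j/(6\sigma_A)$ survives in the regime of $a/\sigma_A$ where part (2) is invoked --- is handled no more carefully in the paper's own proof than in your sketch.
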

\begin{proof}
For a fixed $i \in [m]$, for each $j \in [d]$, we define
\begin{align*}
    b_j = \bar{A}_{i,j} x_j.
\end{align*}

We first calculate $\E[b_j]$, $\E[b_j^2]$ and $\var[b_j]$.

We provide a lower bound for $\E[ b_j ]$,
\begin{align*}
    \E[b_j] = & ~ \E[A_{i,j}]x_j \\
            = & ~ x_j \int_{0}^{a} \frac{1}{\sigma_A\sqrt{2\pi}}\exp(-x/\sigma_A^2) x \d x \\ 
            \geq & ~ \frac{a^2x_j}{2\sigma_A\sqrt{2\pi}}  \\
            \geq & ~ \frac{a^2x_j}{6\sigma_A} .
\end{align*}

We give an upper bound for $\E[b_j^2]$,
\begin{align*}
    \E[b_j^2] 
    = & ~ \E[A_{i,j}^2]x_j^2 \\
    = & ~ x_j^2 \int_{0}^{a} \frac{1}{\sigma_A\sqrt{2\pi}}\exp(-x/\sigma_A^2) x^2 \d x \\\leq & ~ \frac{a^3x_j^2}{3\sigma_A\sqrt{2\pi}}  \\
    \leq & ~ \frac{a^3x_j^2}{9\sigma_A} .
\end{align*}

We can upper bound $\var[b_j]$,
\begin{align*}
    \var[b_j] =& \E[b_j^2] - \E[b_j]^2 \leq  \E[b_j^2] \leq \frac{a^3x_j^2}{9\sigma_A}.
\end{align*}
Then, we can lower bound $\sum_{j=1}^d\E[b_j]$
\begin{align*}
    \sum_{j=1}^d \E[b_j] 
    \geq \frac{a^2}{6\sigma_A} \sum_{j=1}^d x_j 
    = \frac{a^2}{6\sigma_A},
\end{align*}
where the last step follows from $\sum_{j=1}^d x_j = 1$.

Next, we can upper bound $b_j$ and $\sum_{j=1}^d\var[b_j]$
\begin{align*}
     M: = \max_{j \in [d]} b_j \leq & ~ \max_{j \in [d]} x_j a \leq a.
\end{align*}

\begin{align*}
    \sum_{j=1}^d\var[b_j] \leq & \sum_{j=1}^d \frac{a^3x_j^2}{9\sigma_A} \leq \sum_{j=1}^d \frac{a^3x_j}{9\sigma_A} = \frac{a^3}{9\sigma_A}.
\end{align*}

Applying Bernstein inequality (Lemma~\ref{lem:bernstein}) on $\sum_{j=1}^d ( b_j -\E[b_j] ) $
\begin{align*}
    \Pr \Big[ \sum_{j=1}^d ( b_j - \E[b_j] ) \leq -t \Big] 
    \leq & \exp \Big( - \frac{t^2/2}{\sum_{j=1}^d \var[b_j] + M t/3} \Big) \\
    \leq & \exp \Big( - \frac{t^2/2}{a^3/9\sigma_A + a t/3} \Big) .
\end{align*}

Taking 
\begin{align*}
t = \sigma_A \cdot( \sqrt{2a^3/9\sigma_A^3 \cdot \log(m/\delta)}+ 2a/3\sigma_A \cdot \log(m/\delta)),
\end{align*}
then for any $i \in [m]$,
\begin{align*}
    \Pr \Big[ \sum_{j=1}^d b_j \geq a^2 / ( 6 \sigma_A ) - t \Big] 
    \geq & ~ \Pr \Big[ \sum_{j=1}^d b_j \geq \sum_{j=1}^d \E[b_j]-t \Big] \\ 
    \geq & ~ 1- \delta ,
\end{align*}
where the first step holds because $\sum_{j=1}^d \E[b_j] > a^2/(6\sigma_A)$. 

Since $( \bar{A} x )_i = \sum_{j=1}^d b_j$, we have for any $i \in [m]$,
\begin{align*}
    \Pr \left[ ( \bar{A} x )_i \geq \sigma_A \cdot \Big( a^2 / 6 \sigma_A^2 - \sqrt{ ( 2 a^3 / 9 \sigma_A^3 ) \cdot \log ( m / \delta ) } - ( 2 a / 3 \sigma_A ) \cdot \log( m / \delta ) \Big) \right] >  1- \delta / m .
\end{align*}
Taking a union bound over all $i \in [m]$ completes the proof.
\end{proof}

\newpage
\section{Anti-concentration}\label{sec:app_anti}

Given a number of independent random variables, 
the well-known Central Limit Theorem (CLT) states that their sum has good concentration under certain conditions.
Such concentration results like the Chernoff bound \cite{c52} and Hoeffding's inequality \cite{h63}
are among the central tools in Theoretical Computer Science (TCS). From the opposite perspective, we can also ask for \emph{anti-concentration} results.
For example,
let $x$ be a Rademacher variable (choosing $\pm 1$ with probability $1/2$) and let $a$ denote a vector in $\R^d$. The celebrated Littlewood-Offord Lemma states that any $d$-variate degree-$1$ polynomial $p(x) = \sum_{i=1}^d a_i x_i$ does not concentrate on any particular value.
\begin{theorem}[Littlewood and Offord \cite{lo43}]\label{thm:lo43}
Let $C > 0$ denote a universal constant. For any linear form $p$ satisfying $|a_i| \geq 1$, $\forall i \in [d]$, and any open interval $I$ of length $1$, we have
\begin{align*}
    \Pr_{ x \sim \{ -1, +1 \}^d } [ p(x) \in I ] \leq C \cdot \frac{ \log d }{ \sqrt{d} } .
\end{align*}
\end{theorem}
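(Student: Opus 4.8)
The plan is to prove the stronger conclusion that the probability is $O(1/\sqrt d)$ — which in particular implies the stated bound with the extra $\log d$ factor — via the classical antichain argument of Erd\H{o}s based on Sperner's theorem. First I would reduce to the case where every coefficient is positive: the map that sends $x_i \mapsto -x_i$ for exactly those indices $i$ with $a_i < 0$ is a measure-preserving bijection of $\{-1,+1\}^d$ that replaces $p$ by the linear form with coefficients $|a_i|$, so I may assume $a_i \geq 1$ for all $i \in [d]$ without changing the distribution of $p(x)$.

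Next I would identify each sign vector $x \in \{-1,+1\}^d$ with the subset $S = \{ i \in [d] : x_i = +1 \}$, so that $p(x) = \sum_{i \in S} a_i - \sum_{i \notin S} a_i$. The key observation is that whenever $S \subsetneq T$, a short computation gives
\begin{align*}
p(x_T) - p(x_S) = 2 \sum_{i \in T \setminus S} a_i \geq 2 |T \setminus S| \geq 2,
\end{align*}
using $a_i \geq 1$ together with $|T \setminus S| \geq 1$. Since this gap $2$ strictly exceeds the length $1$ of $I$, the two values $p(x_S)$ and $p(x_T)$ cannot both lie in the open interval $I$. Consequently the collection $\{ S \subseteq [d] : p(x_S) \in I \}$ contains no pair of sets one of which is contained in the other; that is, it is an \emph{antichain} in the Boolean lattice on $[d]$.

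Finally I would invoke Sperner's theorem, which asserts that any antichain in the Boolean lattice on $[d]$ has size at most $\binom{d}{\lfloor d/2 \rfloor}$. Hence the number of $x$ with $p(x) \in I$ is at most $\binom{d}{\lfloor d/2 \rfloor}$, and dividing by the total count $2^d$ yields
\begin{align*}
\Pr_{x \sim \{-1,+1\}^d}[ p(x) \in I ] \leq \frac{1}{2^d} \binom{d}{\lfloor d/2 \rfloor} = O\Big( \frac{1}{\sqrt d} \Big),
\end{align*}
where the last estimate is Stirling's approximation for the central binomial coefficient. This is stronger than the claimed bound $C \log d / \sqrt d$, so choosing the universal constant $C$ large enough completes the proof.

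The main obstacle — indeed the only nontrivial idea — is recognizing the antichain structure encoded in the displayed inequality above; once that is in hand, both Sperner's theorem and the Stirling estimate are routine. An alternative route avoiding Sperner would be a Fourier-analytic bound on the characteristic function of $p$, but such arguments tend to leak exactly the extra $\log d$ slack visible in the statement and are computationally messier, so I would prefer the clean combinatorial argument sketched here.
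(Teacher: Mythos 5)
The paper does not prove this statement at all: Theorem~\ref{thm:lo43} is imported verbatim as a classical result of Littlewood and Offord \cite{lo43}, and the surrounding text immediately notes that Erd\H{o}s \cite{e45} later removed the $\log d$ factor. Your argument is precisely that Erd\H{o}s refinement --- the sign-flip reduction to nonnegative coefficients, the observation that $p(x_T)-p(x_S)=2\sum_{i\in T\setminus S}a_i\geq 2$ forces the level set $\{S:p(x_S)\in I\}$ to be an antichain, Sperner's bound $\binom{d}{\lfloor d/2\rfloor}$, and Stirling --- and every step is correct. It establishes the stronger bound $O(1/\sqrt d)$, which dominates the stated $C\log d/\sqrt d$ for all $d\geq 2$ (note that at $d=1$ the stated bound degenerates to $0$ and is false as written, e.g.\ $a_1=1$ and $I=(1/2,3/2)$ give probability $1/2$; this is a defect of the statement in the paper, not of your proof, and your $O(1/\sqrt d)$ conclusion has no such issue). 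So your proposal is not merely consistent with the paper; it supplies a complete proof of a sharper result than the one the paper cites, at the cost of invoking Sperner's theorem, which the paper never needs since it only uses this theorem as motivation before passing to the Carbery--Wright Gaussian setting (Theorem~\ref{thm:cw_uf}) that its actual arguments rely on.
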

Two years later, Erd\"{o}s \cite{e45} removed the $\log d$ factor in Theorem~\ref{thm:lo43}. Recently,  Theorem~\ref{thm:lo43} has been generalized to higher degree polynomials by \cite{ctv06,rv13,mnv17}.

Instead of considering $x_i$ as $\{-1,+1\}$ random variables, Carbery and Wright \cite{cw01} showed the anti-concentration result for $x_i$ chosen as i.i.d. Gaussians.
\begin{theorem}[Carbery and Wright \cite{cw01}]\label{thm:cw_uf}
Let $p : \R^d \rightarrow \R$ denote a degree-$k$ polynomial with $d$ variables. There is a universal constant $C > 0$ such that
\begin{align*}
\Pr_{x \sim {\cal N}(0,I_d) } \Big[ | p( x ) | \leq \delta \sqrt{ \Var[ p( x ) ] } \Big] \leq C \cdot \delta^{1/k}.
\end{align*}
\end{theorem}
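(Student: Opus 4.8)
The plan is to prove a slightly stronger \emph{uniform} small-ball estimate, namely that $\sup_{t\in\R}\Pr_{x\sim\N(0,I_d)}[\,|p(x)-t|\le \delta\sqrt{\Var[p(x)]}\,]\le C\delta^{1/k}$, and then specialize to $t=0$. Passing to the uniform-over-$t$ statement is free for the final claim but essential for the induction below, since conditioning on part of the coordinates shifts the effective target value. First I would normalize $\sigma^2:=\Var[p(x)]=1$ by scaling $p$, which leaves both sides of the inequality invariant, and then induct on the degree $k$. The base case $k=1$ is immediate: $p(x)=\langle a,x\rangle+b$ is Gaussian with standard deviation $\|a\|_2=1$, so $p(x)-t$ is a shifted Gaussian and Lemma~\ref{lem:anti_gaussian} gives $\Pr[|p(x)-t|\le\delta]\le \tfrac{4}{5}\delta$, matching the claimed bound with exponent $1/1$ (the off-center case only lowers the probability).

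For the inductive step I would reduce degree $k$ to degree $k-1$ through directional derivatives. Decomposing $p=\sum_{j=0}^k p_j$ into its Wiener chaos (Hermite) components records the two identities that drive the bookkeeping: $\Var[p]=\sum_{j\ge1}\|p_j\|_2^2$ and $\E\|\nabla p\|_2^2=\sum_{j\ge1}j\|p_j\|_2^2$, whence $\Var[p]\le\E\|\nabla p\|_2^2\le k\,\Var[p]$. Thus a directional derivative $\langle\nabla p,v\rangle$ is a degree-$(k-1)$ polynomial whose variance is comparable, within a factor $k$, to that of $p$, so the inductive hypothesis applies to it. The heart of the argument is a two-regime split of the event $\{|p(x)-t|\le\delta\}$ across a threshold $\tau$ on the gradient: where $|\langle\nabla p(x),v\rangle|\le\tau$ the degree-$(k-1)$ hypothesis controls the probability, while where this derivative is large, restricting to the line $s\mapsto p(x+sv)$ shows $p$ leaves the band around $t$ quickly, so the near-$t$ level set (a union of at most $k$ intervals) is short. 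Balancing the two contributions — a term of order $\tau^{1/(k-1)}$ against one of order $\delta/\tau$ — and optimizing $\tau$ is precisely what promotes the exponent from $1/(k-1)$ to $1/k$. To keep the conditional law along the line a standard one-dimensional Gaussian (so the base case applies at each level), I would either average over a Gaussian direction $v$ or use Gaussian integration by parts to pass between $p$ and $\nabla p$ under the measure.

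The step I expect to be the main obstacle is this inductive reduction, specifically the coupling of the variance bookkeeping with the threshold optimization. The delicate point is that lower-order terms can inflate $\Var[p]$ without affecting the leading behavior, so one cannot reduce to a univariate level-set bound phrased in terms of a leading coefficient; the Hermite identity $\Var[p]\le\E\|\nabla p\|_2^2\le k\,\Var[p]$ is exactly the device that keeps the normalization under control across the recursion. The other subtlety is constant management: one must arrange the two-regime split so that the per-level constants compose into a single universal $C$ rather than accumulating into something like $k^{k}$, and must verify that the direction-averaging genuinely leaves a one-dimensional standard Gaussian along each chosen line so that Lemma~\ref{lem:anti_gaussian} can be invoked uniformly. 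These bookkeeping points, rather than any single inequality, are where the real care lies.
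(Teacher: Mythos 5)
Your proposal addresses a statement the paper does not actually prove: Theorem~\ref{thm:cw_uf} is imported as a black box from Carbery and Wright, and the only machinery the paper brings to bear in this neighborhood is Theorem~\ref{thm:cw_better} (the $L^{q/d}$-norm versus small-ball inequality for log-concave measures, quoted from \cite{ak18}), from which the Gaussian statement follows by taking $q=2d$ and from which the paper derives its truncated-Gaussian variant. So your route --- induction on the degree via directional derivatives, with a two-regime split on $|\langle\nabla p,v\rangle|$ and the threshold optimization $\tau=\delta^{(k-1)/k}$ promoting the exponent from $1/(k-1)$ to $1/k$ --- is genuinely different from anything in the paper: it is the elementary, Gaussian-specific argument (in the style of Nazarov--Sodin--Volberg and its uses in the polynomial-threshold-function literature), whereas the paper's implicit route buys generality (arbitrary log-concave measures, which is exactly what its truncated-Gaussian lemma needs) at the cost of treating the core inequality as a citation. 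Your exponent bookkeeping is right, the Hermite identity $\Var[p]\le\E\|\nabla p\|_2^2\le k\Var[p]$ is the correct device, and the base case via Lemma~\ref{lem:anti_gaussian} is fine.

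Three points would need repair before this is a proof rather than a strategy. First, $\langle\nabla p(x),v\rangle$ with a random Gaussian direction $v$ is degree $k$ jointly in $(x,v)$, not degree $k-1$, so the inductive hypothesis applies only for fixed $v$; but for a fixed $v$ the variance $\Var_x[\langle\nabla p(x),v\rangle]$ can degenerate even though its average over $v$ is comparable to $\Var[p]$. You must either select a good direction explicitly or average the final probability bound over $v$ and handle the bad directions separately; this is where the real work is and your sketch only gestures at it. Second, in the large-gradient regime, largeness of $\partial_v p$ at the point $x$ does not by itself bound the level set along the whole line; you need the observation that $\{s:|\partial_v p(x+sv)|\ge\tau\}$ is a union of at most $k$ intervals on each of which $p$ is strictly monotone, so the preimage of the $\delta$-band has Lebesgue measure at most $2k\delta/\tau$ per fiber, and then the bounded Gaussian density converts this to probability. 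Third, this argument inevitably produces a constant growing with $k$ (as does the genuine Carbery--Wright theorem, whose constant is $O(k)$); the paper's statement of a universal $C$ independent of $k$ is itself an overstatement, so you should not expect --- and need not try --- to match it.
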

These are worst-case results in the sense that they hold for arbitrary polynomials.
For example,
Theorem \ref{thm:cw_uf} is tight for any polynomial that is a perfect $k$-th power.

We can generalize Theorem~\ref{thm:cw_uf} into the following\footnote{The generalization also has been observed in \cite{syz20}, for the completeness, we provide the proof here.}:
\begin{lemma}[An variation of \cite{cw01}, Anti-concentration of sum of truncated Gaussians]
Let $x_1, \cdots, x_n$ be $n$ i.i.d. zero-mean Gaussian random variables ${\cal N}(0,1)$. Let $p : \R^n \rightarrow \R$ denote a degree-1 polynomial defined as
\begin{align*}
    p(x_1,\cdots,x_n)=\sum_{i=1}^n \alpha_i x_i.
\end{align*} Let $f$ denote a truncation function where $f(x) = x$ if $|x| \leq a$, and $f(x) = 0$ if $|x| > a$. Then we have
\begin{align*}
\Pr_{ x \sim {\cal N}(0,I_d) } \Big[ | p( f( x ) ) | \leq \min\{a,0.1\} \cdot \delta \cdot \| \alpha \|_2 \Big] \geq C \cdot \delta.
\end{align*}
\end{lemma}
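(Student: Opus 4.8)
The plan is to prove the statement as a \emph{small-ball lower bound} for the symmetric variable $S := p(f(x)) = \sum_{i=1}^n \alpha_i f(x_i)$, exploiting that $S$ is a sum of independent, symmetric, unimodal terms. First I would record the elementary moment facts. Since $f$ is odd and each $x_i$ is symmetric, $\E[f(x_i)] = 0$, so $S$ is symmetric about $0$ and $\Var[S] = \sigma_f^2 \|\alpha\|_2^2$, where $\sigma_f^2 := \E[f(x_1)^2] = \int_{-a}^{a} u^2 \phi(u)\,\d u$ and $\phi$ is the standard normal density. Two bounds on $\sigma_f$ drive the whole argument: $\sigma_f^2 \le \int_{-a}^a u^2 \tfrac{1}{\sqrt{2\pi}}\,\d u = \tfrac{2a^3}{3\sqrt{2\pi}}$ and $\sigma_f^2 \le 1$; together these give $\sigma_f \le 10\,\min\{a,0.1\}$ (check the regimes $a<0.1$ and $a\ge 0.1$ separately). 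Equivalently $\min\{a,0.1\}/\sigma_f \ge 1/10$, which is exactly the slack needed to convert a bound phrased in terms of $\mathrm{std}(S)=\sigma_f\|\alpha\|_2$ into one phrased in terms of $\min\{a,0.1\}\|\alpha\|_2$.

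The conceptual core is a clean small-ball lower bound for \emph{any} symmetric unimodal variable: if $S$ is symmetric and unimodal about $0$ with $\sigma_S := \mathrm{std}(S)$, then $\Pr[|S|\le s] \ge \tfrac{\sqrt3}{8}\cdot \tfrac{s}{\sigma_S}$ for $s \le 2\sqrt3\,\sigma_S$ (and $\Pr[|S|\le s]\ge 3/4$ beyond that). To prove this I would invoke the Khinchine representation of unimodal laws: $S \stackrel{d}{=} R\cdot U$ with $U\sim\mathrm{Unif}[-1,1]$ independent of $R\ge 0$, whence $\E[R^2] = 3\sigma_S^2$. Then $\Pr[|S|\le s] = \E_R[\min(s/R,1)]$; setting $M := 2\sqrt3\,\sigma_S$, Markov's inequality gives $\Pr[R>M] = \Pr[R^2>12\sigma_S^2]\le 1/4$, and on the event $\{R\le M\}$ (for $s\le M$) we have $\min(s/R,1)\ge s/M$, so $\Pr[|S|\le s]\ge (s/M)\Pr[R\le M]\ge \tfrac34\cdot \tfrac{s}{2\sqrt3\sigma_S}$. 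To apply this to our $S$ I must first certify that $S$ is unimodal: each $\alpha_i f(x_i)$ is symmetric and unimodal about $0$ (its law is the Gaussian density restricted to $[-a,a]$ plus an atom at the mode $0$, then rescaled), and by Wintner's theorem a sum of independent symmetric unimodal variables is again symmetric unimodal.

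Finally I would assemble the pieces. Plugging $\sigma_S = \sigma_f\|\alpha\|_2$ and $s = \min\{a,0.1\}\,\delta\,\|\alpha\|_2$ into the small-ball bound gives, in the main regime, $\Pr[|S|\le s] \ge \tfrac{\sqrt3}{8}\cdot \tfrac{\min\{a,0.1\}}{\sigma_f}\,\delta \ge \tfrac{\sqrt3}{80}\,\delta$, using $\min\{a,0.1\}/\sigma_f\ge 1/10$; when $s$ exceeds $2\sqrt3\,\sigma_S$ the probability is already $\ge 3/4 \ge C\delta$ for $\delta\le 1$. This yields the claim with the universal constant $C = \sqrt3/80$ (any smaller constant also works).

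The step I expect to be the real obstacle is uniformity over the coefficient vector $\alpha$. A naive attempt would condition on the single largest-coefficient coordinate and use its continuous density to ``smooth'' the rest; this works when one coordinate dominates but fails completely when the mass of $\alpha$ is spread out (e.g.\ $\alpha_i = 1/\sqrt n$), where no single coordinate can cancel the $\Theta(\sigma_f\|\alpha\|_2)$-scale fluctuation of the remaining sum. The unimodality route sidesteps this entirely, since Wintner's theorem and the Khinchine mixture handle all coefficient profiles at once; the price is that I must check that the atom of $f(x_i)$ at $0$ and the possible atom of $S$ at $0$ are compatible with the unimodal-mixture representation (they are, corresponding to mass of $R$ at $0$, which only increases $\Pr[|S|\le s]$). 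A secondary technical point is verifying the two-regime inequality $\sigma_f\le 10\min\{a,0.1\}$ cleanly, which is the only place the specific constant $0.1$ in the statement is used.
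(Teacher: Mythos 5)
Your proposal proves the statement exactly as typeset, but the ``$\geq$'' in the displayed conclusion is a sign typo, and the paper's own proof goes in the opposite direction. The lemma is billed as a variation of Carbery--Wright, whose conclusion is a small-ball \emph{upper} bound $\Pr[|p(x)|\le\delta\sqrt{\Var[p(x)]}]\le C\delta^{1/k}$, and it is invoked in the differential-privacy argument precisely to guarantee that the coordinates $|(\ov{A}x)_i|$ are bounded \emph{away} from zero except with probability $\delta$ --- which requires $\Pr[|p(f(x))|\le \min\{a,0.1\}\,\delta\,\|\alpha\|_2]\le C\delta$, not $\ge$. The paper's route is: show that the truncated Gaussian measure $\mu$ is log-concave, apply the Carbery--Wright inequality for log-concave measures (Theorem~\ref{thm:cw_better} with $d=1$, $q=2$) to get $\|p\|_{L^2(\mu)}\cdot\mu(|p|\le\alpha)\le C\alpha$, and then \emph{lower}-bound the per-coordinate truncated variance by $\Theta(\min\{a,0.1\}^2)$ so that $\delta\sqrt{\Var}$ can be replaced by $\min\{a,0.1\}\,\delta\,\|\alpha\|_2$. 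Your argument (Khinchine representation plus Wintner's theorem for sums of symmetric unimodal laws, plus Markov on the mixing variable $R$) is a clean and, as far as I can tell, internally correct derivation of the small-ball \emph{lower} bound, and your estimate $\sigma_f\le 10\min\{a,0.1\}$ is the right comparison for that direction --- but the two proofs need opposite variance inequalities, and a lower bound on the small-ball probability is not anti-concentration and cannot be substituted where the lemma is used.

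The mismatch is not merely cosmetic. Unimodality gives no control from above on the mass near zero, and the upper-bound version is actually false under the literal reading of $f$: for $n=1$ the variable $f(x_1)$ carries an atom at $0$ of mass $2(1-\Phi(a))$, so $\Pr[|p(f(x))|\le t]$ does not tend to $0$ as $t\to 0$. The paper implicitly sidesteps this by taking $\mu$ to be the Gaussian measure conditioned on no coordinate being truncated (a renormalized restriction to a convex set, hence atom-free and log-concave), on which event $p(f(x))=p(x)$ and Carbery--Wright applies; the correct per-coordinate variance is then the conditional one computed in Claim~\ref{clm:truncated_var}, of order $a^2$ for small $a$, rather than your $\E[f(x_1)^2]=\Theta(a^3)$. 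So to produce a proof that serves the paper's purpose you would need to (i) adopt that conditional formulation and (ii) prove the upper bound, for which log-concavity plus Carbery--Wright (or a direct pointwise density bound for the box-truncated Gaussian) is the natural tool; the unimodality machinery you set up does not help in that direction.
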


\begin{proof}
Let $\mu:\mathbb{R}^n\rightarrow\mathbb{R}_{\geq 0}$ be the truncated Gaussian distribution.
We first argue that $\mu$ is log-concave.
Indeed,
for any $x,y\in \mathbb{R}^n$ and $\lambda\in [0,1]$,
if $\mu(x)=0$ or $\mu(y)=0$,
then we must have 
\begin{align*}\mu(\lambda x+(1-\lambda)y)\geq 0=(\mu(x))^{\lambda} \cdot (\mu(y))^{1-\lambda}.
\end{align*}
On the other hand,
if $\mu(x)>0$ and $\mu(y)>0$,
then we must have $\mu(\lambda x+(1-\lambda)y)>0$,
because 
\begin{align*}
\|\lambda x+(1-\lambda)y\|_2\leq \lambda \|x\|_2+(1-\lambda)\|y\|_2,
\end{align*}
hence $\mu$ would not truncate at $\lambda x+(1-\lambda)y$.
Notice that Gaussian distribution is log-concave.
Let $\mu':\mathbb{R}^n\rightarrow \mathbb{R}$ be the density function of Gaussian distribution,
then $\mu(x)= C_0 \cdot \mu'(x)$ for some universal constant $C_0 >0$ for all $x$ that is not truncated.
so in this case we still have
\begin{align*}
    \mu(\lambda x+(1-\lambda)y)
    = & ~C_0 \cdot \mu'(\lambda x+(1-\lambda)y)\\
    \geq & ~C_0 \cdot (\mu'(x))^{\lambda} \cdot (\mu'(y))^{1-\lambda}\\
    = & ~(C_0 \mu'(x))^{\lambda} \cdot ( C_0 \mu'(y))^{1-\lambda}\\
    = & ~(\mu(x))^{\lambda} \cdot (\mu(y))^{1-\lambda}.
\end{align*}

So we conclude that $\mu$ is log-concave.

Now we apply Theorem~\ref{thm:cw_better} on $\mu$ and $p$.
By setting $q=2$ and $d=1$,
 we have
 \begin{align}\label{eq:cw_app}
     \Big( \int_{x\in \mathbb{R}^n} |p(x)|^{2} \d \mu \Big)^{1/2} \cdot \mu(|p(x)|\leq \alpha)\leq C \cdot \alpha.
 \end{align}
 Notice that
 \begin{align*}
     \int_{x\in \mathbb{R}^n} |p(x)|^{2} \d \mu
     = & ~\E_{x \sim \mu} \Big[ \Big( \sum_{i=1}^n \alpha_ix_i \Big)^2 \Big]\\
     = & ~\sum_{i=1}^n\alpha_i^2\E_{ x  \sim \mu }[ x_i^2 ]\\
     = & ~\sum_{i=1}^n \alpha_i^2\Var_{ x_i \sim \mu_i }[ x_i ] ,
 \end{align*}
 where $\mu_i: \mathbb{R} \rightarrow \mathbb{R}$ is the distribution on the $i$-th coordinate, $\forall i \in [n]$. 
Hence we can rewrite Eq. \eqref{eq:cw_app} as
\begin{align*}
    \Pr_{ x \sim {\cal N}(0,I_d) } \left[ \Big| \sum_{i=1}^n \alpha_if(x_i) \Big| \leq \delta \left(\sum_{i=1}^n \alpha_i^2\Var_{ x_i \sim \mu_i }[ x_i ] \right)^{1/2}\right] \geq C \cdot \delta.
\end{align*}

By Claim~\ref{clm:truncated_var}, we have
\begin{align*}
    \Pr_{x\sim \N(0,I_d)}\left[ |p(f(x))|\leq \delta \left(\sum_{i=1}^n \alpha_i^2\cdot \left(1-\sqrt{\frac{2}{\pi}} \cdot \frac{a\cdot  e^{-a^2/2}}{\mathrm{erf}(a/\sqrt{2})}\right)\right)^{1/2} \right]\geq C\cdot \delta.
\end{align*}

For $0\leq a \ll 1$, 
\begin{align*}
    1-\sqrt{\frac{2}{\pi}} \cdot \frac{a\cdot  e^{-a^2/2}}{\mathrm{erf}(a/\sqrt{2})} = \frac{5}{6}a^2+o(a^3).
\end{align*}
Hence, 
\begin{align*}
    \Pr_{x\sim \N(0,I_d)} \Big[  |p(f(x))|\leq \delta a \| \alpha \|_2 \Big] \geq C \cdot \delta.
\end{align*}

For $a\geq 1$,
\begin{align*}
    1-\sqrt{\frac{2}{\pi}} \cdot \frac{a\cdot  e^{-a^2/2}}{\mathrm{erf}(a/\sqrt{2})} = \Theta\left(1- a e^{-a^2} -  e^{-a^2/2} / a \right).
\end{align*}
Hence,
\begin{align*}
    \Pr_{x\sim \N(0,I_d)}\left[ |p(f(x))|\leq \delta ( 1 - a e^{-a^2} -  e^{-a^2/2} / a )^{1/2} \| \alpha \|_2 \right]\geq C\cdot \delta.
\end{align*}

When $a\geq 1$,
we have $0.025\leq (1-a e^{-a^2} -  e^{-a^2/2} / a )\leq 1$.
So we can combine the above two cases to get
\begin{align*}
    \Pr_{x\sim \N(0,I_d)}\Big[ |p(f(x))| \leq  \min\{a,0.1\} \cdot \delta \| \alpha \|_2  \Big]\geq C\cdot \delta.
\end{align*}
\end{proof}

\begin{claim}\label{clm:truncated_var}
Let $x\in \R$ be a standard Gaussian random variable $\N(0,1)$. Let $f$ denote a truncation function where $f(x) = x$ if $|x| \leq a$, and $f(x) = 0$ if $|x| > a$. Then, we have
\begin{align*}
    \Var[f(x)] =~ 1-\sqrt{\frac{2}{\pi}} \cdot \frac{a\cdot  e^{-a^2/2}}{\mathrm{erf}(a/\sqrt{2})},
\end{align*}
where $\mathrm{erf}(x) = \frac{2}{\sqrt{\pi}}\int_0^x e^{-t^2} \d t$.
\end{claim}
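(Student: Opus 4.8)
The plan is to read $\Var[f(x)]$ as the variance of the \emph{normalized} truncated Gaussian $\mu$ (the conditional law of $x$ given $|x| \le a$), which is exactly the quantity $\Var_{x_i \sim \mu_i}[x_i]$ invoked in the proof of the lemma above; the $\mathrm{erf}(a/\sqrt 2)$ appearing in the denominator is precisely the normalizing constant of this conditional distribution. First I would record this normalizer: writing $\phi(t) = \frac{1}{\sqrt{2\pi}} e^{-t^2/2}$, the substitution $s = t/\sqrt 2$ gives
\begin{align*}
Z := \Pr[\,|x| \le a\,] = \int_{-a}^{a} \phi(t) \, \d t = \frac{2}{\sqrt{\pi}} \int_{0}^{a/\sqrt 2} e^{-s^2} \, \d s = \mathrm{erf}(a/\sqrt 2).
\end{align*}

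Next, since $\mu$ is symmetric about the origin, its mean vanishes, so $\Var_\mu[x] = \E_\mu[x^2] = \frac{1}{Z}\int_{-a}^{a} t^2 \phi(t)\, \d t$. The one genuine computation is this truncated second moment, which I would evaluate by integration by parts with $u = t$ and $\d v = t\,\phi(t)\,\d t$, so that $v = -\phi(t)$. The boundary term contributes $[-t\,\phi(t)]_{-a}^{a} = -2a\,\phi(a)$, and using $\tfrac{2}{\sqrt{2\pi}} = \sqrt{\tfrac 2\pi}$ this equals $-\sqrt{2/\pi}\,a\,e^{-a^2/2}$; the remaining integral $\int_{-a}^a \phi(t)\,\d t$ is just $Z$ again. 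Hence
\begin{align*}
\int_{-a}^{a} t^2 \phi(t)\, \d t = Z - \sqrt{\tfrac{2}{\pi}}\, a\, e^{-a^2/2}.
\end{align*}

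Dividing by $Z = \mathrm{erf}(a/\sqrt 2)$ then yields
\begin{align*}
\Var_\mu[x] = 1 - \sqrt{\tfrac{2}{\pi}} \cdot \frac{a\, e^{-a^2/2}}{\mathrm{erf}(a/\sqrt 2)},
\end{align*}
which is the claimed identity. There is no serious analytic obstacle here: the computation is elementary once the integration-by-parts is set up. The only point that genuinely requires care is the interpretation of the statement, namely that the relevant variance is taken under the renormalized conditional distribution $\mu$ (which is what produces the $\mathrm{erf}$ denominator), rather than under the mixed law of $f(x)$ with $x$ drawn from the full Gaussian — the latter would instead give $Z - \sqrt{2/\pi}\,a\,e^{-a^2/2}$ with no normalization. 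Matching against the constant used downstream in the lemma confirms that $\mu$ is the intended distribution, so I would make this reading explicit at the start of the proof.
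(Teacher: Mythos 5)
Your computation is correct, and in fact the paper states Claim~\ref{clm:truncated_var} without any proof, so there is nothing to compare against: your integration by parts (with $u=t$, $\d v = t\phi(t)\,\d t$, $v=-\phi(t)$, boundary term $-2a\phi(a) = -\sqrt{2/\pi}\,a\,e^{-a^2/2}$) and the identification of the normalizer $\Pr[|x|\le a]=\mathrm{erf}(a/\sqrt 2)$ are exactly the right ingredients. The one substantive point you raise is also correct and worth making explicit: as literally stated, with $x\sim\N(0,1)$ and $f(x)=x\cdot\mathbf 1_{|x|\le a}$, one has $\Var[f(x)]=\mathrm{erf}(a/\sqrt 2)-\sqrt{2/\pi}\,a\,e^{-a^2/2}$, \emph{not} the displayed expression; the displayed expression is the variance under the renormalized conditional law of $x$ given $|x|\le a$. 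Your reading is the intended one, since in the proof of the preceding anti-concentration lemma the measure $\mu$ is a probability measure (it is written as $C_0\cdot\mu'$ on the non-truncated region with $\mu'$ the Gaussian density, forcing $C_0=1/\mathrm{erf}(a/\sqrt 2)$) and the quantity consumed there is $\Var_{x_i\sim\mu_i}[x_i]$. So your proof is right, and the statement of the claim should really be rephrased as a conditional variance to match what it proves and how it is used.
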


\begin{theorem}[\cite{ak18}]\label{thm:cw_better}
Let $\mu:\mathbb{R}^n\rightarrow \mathbb{R}$ be a log-concave measure over $\mathbb{R}^n$.
Let $L^{1}(\mu)=\int_{x\in \mathbb{R}^n} |\mu(x)|  \d x$.
For any $q>0$ and polynomial $p:\mathbb{R}^n\rightarrow \mathbb{R}$,
define the $\ell_q$ norm of $p$ with respect to the measure $\mu$ as
\begin{align*}
    \|p\|_q=\left(\int p^q\d \mu\right)^{1/q}.
\end{align*}
Assume $p$ has degree $d$.
Then there exists constant $C(d)>0$ that only depends on $d$ so that for all $\alpha>0$ and all $q>0$,
\begin{align*}
    \Big( \int |p(x)|^{q/d} \d \mu \Big)^{1/q} \cdot \mu(|p(x)|\leq \alpha)\leq C(d) \cdot \alpha^{1/d}.
\end{align*}
\end{theorem}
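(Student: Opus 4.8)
The plan is to establish the inequality in two stages: first reduce the $n$-dimensional statement to a one-dimensional statement about log-concave measures on a segment, and then prove the one-dimensional estimate using the bounded number of sign changes of a degree-$d$ polynomial together with the regularity of one-dimensional log-concave densities. It is convenient to reformulate the claim first. Writing $\beta = \alpha^{1/d}$ and $g = |p|^{1/d}$, the event $\{|p| \le \alpha\}$ equals $\{g \le \beta\}$, and since $(\int |p|^{q/d}\,\d\mu)^{1/q} = (\int g^q \,\d\mu)^{1/q}$ is exactly the $L^q(\mu)$-norm of $g$, the target is equivalent to the anti-concentration bound $\mu(g \le \beta)\cdot \|g\|_{L^q(\mu)} \le C(d)\,\beta$. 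Because all $L^s(\mu)$-norms of a polynomial are comparable under a log-concave measure (the reverse H\"older / Borell inequality, which gives $\|p\|_{L^s(\mu)} \le C(d,s,t)\,\|p\|_{L^t(\mu)}$), it suffices to prove the bound for one convenient exponent; I would fix the normalization $\int |p|^{1/d}\,\d\mu = 1$ and aim for $\mu(|p| \le \alpha) \le C(d)\,\alpha^{1/d}$, recovering the general-$q$ form (with the $q$-dependence absorbed into the constant) at the end.

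\textbf{Reduction to one dimension.} I would invoke the Lov\'asz--Simonovits localization lemma. Absorbing the log-concave weight $\mu = e^{-V}$ into the test functions, the desired inequality is $\int (C(d)\alpha^{1/d}|p|^{1/d} - \mathbf{1}[|p|\le\alpha])\,\d\mu \ge 0$, which is of the type to which localization applies: if it failed for some log-concave $\mu$, localization would produce a \emph{needle}, i.e.\ a one-dimensional log-concave measure $\nu$ supported on a segment, on which it also fails. The restriction of $p$ to that segment is a one-variable polynomial of degree at most $d$, so the problem collapses to the one-dimensional claim: for every one-dimensional log-concave $\nu$ and degree-$\le d$ polynomial $p$, one has $\nu(|p|\le\alpha) \le C(d)\,\alpha^{1/d}\int |p|^{1/d}\,\d\nu$.

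\textbf{The one-dimensional estimate.} Here I would combine three ingredients. First, root counting: $\{|p|\le\alpha\}$ is the solution set of $p=\pm\alpha$, hence a union of at most $d+1$ intervals. Second, a Remez / Markov--Bernstein-type polynomial inequality: a degree-$d$ polynomial bounded by $\alpha$ on an interval cannot be large on a comparable enclosing interval, which quantitatively forces each sublevel interval to be short relative to the scale on which $|p|^{1/d}$ accumulates $\nu$-mass. Third, the regularity of one-dimensional log-concave densities, which have bounded logarithmic oscillation on such scales and therefore let me compare the $\nu$-mass of the short sublevel intervals against $\int|p|^{1/d}\,\d\nu$. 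Summing over the $O(d)$ intervals yields the bound with a constant depending only on $d$, and the sharp exponent $1/d$ emerges directly from the polynomial sublevel-set geometry.

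\textbf{Main obstacle.} The crux is the one-dimensional estimate, and specifically the interaction between a possibly highly non-uniform log-concave weight and the polynomial's sublevel intervals near the roots of $p$, where the density may be large; controlling this while keeping the constant independent of everything but $d$ (and the exponent exactly $1/d$) is the delicate part. A secondary point requiring care is setting up the localization in the correct two-function / normalized form, so that the resulting needle densities are genuinely log-concave and the restricted polynomial retains degree at most $d$. The Gaussian case (Theorem~\ref{thm:cw_uf}) is a useful special instance against which to check all normalizations and constant dependencies.
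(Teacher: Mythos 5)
There is nothing in the paper to compare your argument against: Theorem~\ref{thm:cw_better} is imported verbatim from \cite{ak18} as a black box --- the paper never proves it, and uses it exactly once (in the anti-concentration lemma of Appendix~\ref{sec:app_anti}, with $q=2$ and $d=1$). So your proposal must be judged on its own merits. On those merits, the route you choose --- Lov\'asz--Simonovits localization down to one-dimensional log-concave needles, followed by root counting and a Remez-type inequality in dimension one --- is a genuine and recognized way to prove Carbery--Wright-type inequalities, and it is a different argument from Carbery and Wright's original convexity/induction proof \cite{cw01}. Your identification of where the difficulty sits is also accurate. But two of your steps are stated in a form that would not survive being written out. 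First, the single-integral reduction $\int ( C\alpha^{1/d}|p|^{1/d} - \mathbf{1}[\,|p|\le\alpha\,] )\,\d\mu \ge 0$ is not a legitimate localization input: the target is a \emph{product} of two $\mu$-integrals, and the normalization $\int |p|^{1/d}\,\d\mu = 1$ is an extra linear constraint that plain localization does not respect; degenerate needles concentrating near a root of $p$ kill the one-function version (there the indicator integrates to $1$ while the moment vanishes). The workable form is the two-function one you only gesture at: if the product inequality fails for $\mu$, then for some $\theta\in(0,1)$ both $\int(\mathbf{1}[\,|p|\le\alpha\,]-\theta)\,\d\mu>0$ and $\int(|p|^{q/d}-(C\alpha^{1/d}/\theta)^{q})\,\d\mu>0$ hold, and localization yields a needle satisfying both strict inequalities simultaneously.

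Second, ``bounded logarithmic oscillation'' of one-dimensional log-concave densities is not a true statement (the density may decay at an arbitrarily fast exponential rate), so it cannot carry the 1D endgame. What that step actually needs is the standard decay lemma for 1D log-concave measures: if an interval $I$ carries $\nu$-mass $\theta$, then the mass at distance $\ge \lambda|I|$ from $I$ decays like $e^{-c\lambda\theta}$; combining this with Remez (which gives $|p|\le \alpha (C\lambda)^{d}$ on the $\lambda$-dilate of a sublevel interval) and the layer-cake formula yields $\bigl(\int |p|^{q/d}\,\d\nu\bigr)^{1/q}\le C(d)\, q\, \alpha^{1/d}/\theta$, and summing over the at most $d+1$ sublevel intervals closes the 1D case. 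Note the constant this produces grows linearly in $q$, and it must: as transcribed in the paper, with $C(d)$ uniform over all $q>0$, the statement is actually false --- take $\mu$ the standard Gaussian on $\R$, $p(x)=x$, $\alpha=1$; then the left-hand side grows like $\sqrt{q}$ and diverges as $q\to\infty$. So your plan to ``absorb the $q$-dependence into the constant'' cannot recover the literal statement; the correct form of the theorem (as in \cite{cw01}) carries an explicit factor of $q$, which is harmless for the paper's sole application at $q=2$, $d=1$.
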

\newpage
\section{Sensitivity}\label{sec:app_sens}

\subsection{Concentration of folded Gaussian}

\begin{lemma}[concentration of folded gaussian]\label{lem:concentration_of_folded_gaussian}
    Let matrix $A \in \R^{m \times d}$ be defined as each entry is i.i.d. random variables satisfying $\forall i \in [m]$, $j \in [d]$. $A_{i,j} = y_{i,j}$ where $y_{i,j} \sim {\cal N}(0,\sigma_A^2)$. Let $z_{i,j} = |y_{i,j}|$,
    then $\forall j \in [d]$, 
    \begin{align*}
     \Pr \Big[\sum_{i=1}^m z_{i,j} - \E[z_{i,j}] > \sigma_A m+4\sigma_A\sqrt{m}\log^{1.5}(md/\delta) \Big] \leq \delta .
    \end{align*}
\end{lemma}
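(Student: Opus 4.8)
The plan is to treat each $z_{i,j}=|y_{i,j}|$ as a half-normal (folded Gaussian) random variable and to control the fixed-column sum $\sum_{i=1}^m z_{i,j}$ by its mean plus a Bernstein-type fluctuation, mirroring the argument already carried out in Lemma~\ref{lem:concentration_of_inner_product_guassian}. First I would record the two elementary moment facts for the half-normal law: $\E[z_{i,j}]=\sigma_A\sqrt{2/\pi}$ and $\Var[z_{i,j}]=\sigma_A^2(1-2/\pi)\le \sigma_A^2$, so that $\sum_{i=1}^m\E[z_{i,j}]=m\sigma_A\sqrt{2/\pi}\le \sigma_A m$ and $\sum_{i=1}^m\E[(z_{i,j}-\E[z_{i,j}])^2]\le m\sigma_A^2$. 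The term $\sigma_A m$ on the right-hand side of the claim comes from this crude upper bound on the mean, while the $\sqrt m\,\log^{1.5}$ term will be the Bernstein deviation applied to the centered sum.

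Since the $z_{i,j}$ are unbounded, Bernstein's inequality cannot be applied directly, so the next step is a truncation via a union bound. By Lemma~\ref{lem:concen_gaussian}, $\Pr[z_{i,j}\ge t_1]=\Pr[|y_{i,j}|\ge t_1]\le 2\exp(-t_1^2/(2\sigma_A^2))$; choosing $t_1=\sigma_A\sqrt{2\log(md/\delta)}$ and taking a union bound over all $md$ entries yields an event $E_1$ of probability at least $1-\delta$ on which $\max_{i,j} z_{i,j}\le t_1$. This is precisely why $md$, rather than $m$, appears inside the logarithm. On $E_1$ the centered variables $X_i:=z_{i,j}-\E[z_{i,j}]$ satisfy $|X_i|\le t_1+\E[z_{i,j}]\le 2t_1=:M$ together with $\E[X_i]=0$ and $\sum_{i=1}^m\E[X_i^2]\le m\sigma_A^2$, which is exactly the hypothesis required by Bernstein's inequality (Lemma~\ref{lem:bernstein}).

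I would then apply Bernstein to obtain
\[
\Pr\Big[\sum_{i=1}^m X_i>t\Big]\le \exp\Big(-\frac{t^2/2}{m\sigma_A^2+Mt/3}\Big),
\]
and choose $t=2\sigma_A\sqrt{m\log(1/\delta)}+\tfrac{4}{3}M\log(1/\delta)$ to force the right-hand side below $\delta$. The first summand is the sub-Gaussian $\sqrt m$-regime, and the second summand, with $M=2\sigma_A\sqrt{2\log(md/\delta)}$, contributes the $\sigma_A\log^{1.5}(md/\delta)$ sub-exponential term; both are comfortably dominated by $4\sigma_A\sqrt m\log^{1.5}(md/\delta)$. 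Adding back the mean via $\sum_{i=1}^m z_{i,j}=\sum_i X_i+m\sigma_A\sqrt{2/\pi}\le \sum_i X_i+\sigma_A m$, and taking a union bound over $E_1^c$ and the Bernstein failure event (then rescaling $\delta$), yields the stated inequality.

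The main obstacle is the unboundedness of the folded Gaussian, which the truncation step resolves; the only delicate point is that conditioning on $E_1$ perturbs the law of the $X_i$. As in Lemma~\ref{lem:concentration_of_inner_product_guassian}, I would handle this in the standard (mildly informal) way of applying Bernstein with the almost-sure bound $M$ valid on $E_1$ and then paying the extra $\delta$ for $E_1^c$; making this fully rigorous amounts to replacing each $X_i$ by the truncation $z_{i,j}\mathbf{1}[z_{i,j}\le t_1]$ and noting that truncation only decreases both the mean and the variance. Everything else is routine bookkeeping of constants, and the generous $\sigma_A m$ and $\sqrt m\,\log^{1.5}$ slack in the statement leaves ample room.
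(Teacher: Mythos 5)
Your proposal is correct and follows essentially the same route as the paper's proof: compute the half-normal mean and second moment, truncate the entries via Lemma~\ref{lem:concen_gaussian} and a union bound over all $md$ entries, apply Bernstein's inequality to the centered column sum, and absorb everything into the generous $\sigma_A m + 4\sigma_A\sqrt{m}\log^{1.5}(md/\delta)$ threshold. If anything, you are slightly more careful than the paper about the conditioning issue introduced by the truncation event and about how the $\sigma_A m$ term absorbs the total mean.
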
 
\begin{proof}
    For a fixed $j$, let $b_i = z_{i,j}$. First we calculate $\E[b_i]$
    \begin{align*}
        \E[b_i] 
        = & ~ \int_{0}^{\infty} \frac{2}{\sqrt{2\pi\sigma_A^2}} \exp(-x^2/2\sigma_A^2) x \d x \\
        = & ~ \sigma_A \sqrt{2/\pi} .
    \end{align*}
    
    Second, we calculate $\E[b_i^2]$
    \begin{align*}
        \E[b_i^2] = \E[z_{i,j}^2] = \E[y_{i,j}^2] = \sigma_A^2 .
    \end{align*}

    According to Lemma \ref{lem:concen_gaussian}, we can upper bound $z_{i,j}$
    \begin{align*}
        \Pr[z_{i,j} > t] = \Pr[|y_{i,j}| > t] \leq \exp(-t/\delta_2^2) .
    \end{align*}
    Taking $t=\sigma_A\sqrt{\log(md/\delta_1)}:=M$, we have $\forall i \in [m]$, $j \in [d]$
    \begin{align*}
        \Pr[\max_{i,j}{z_{i,j}} > t] \leq \delta_2 .
    \end{align*}
    
    Applying Bernstein inequality on $\sum_{i=1}^m b_i$
    \begin{align*}
        \Pr \Big[ \Big| \sum_{i=1}^m (b_i -\E [ b_i ] ) \Big| \geq t \Big]
        \leq & ~ \exp \Big( - \frac{t^2/2}{ \sum_{i=1}^m \mathrm{Var}[  b_i  ] + b_{\max} t /3} \Big) \\
        \leq & ~ \exp \Big( - \frac{t^2/2}{ m\sigma_A^2 +  \sigma_At\sqrt{\log(md/\delta_2)} /3} \Big) .
    \end{align*}
    Choosing $t = \sigma_A m+4\sigma_A\sqrt{m}\log^{1.5}(md/\delta)$, we have 
    \begin{align*}
        \Pr \Big[\sum_{i=1}^m z_{i,j} - \E[z_{i,j}] > t \Big] \leq \delta .
    \end{align*}
\end{proof}

\subsection{$\ell_1$-sensitivity functions of single layer neural network}
\begin{lemma}[$\ell_1$-norm sensitivity of single layer neural network]\label{lem:gs1_single_layer}
    Let $x \in [0,1]^d$, fully connected matrix $A \in {\cal N}(0,\sigma_A)^{m \times d}$, bias matrix $b \in \mathbb{R}^{m}$, and $\phi$ is the ReLU activation function. Let $f(x) = \phi(Ax+b)$ denote a single layer network, then for all neighboring inputs $x_1, x_2 \in \R^d$ that differ at most in one entry, we have
    \begin{align*}
     \Pr \Big[ \GS_1(f) \leq \sigma_A m+4\sigma_A\sqrt{m}\log^{1.5}(md/\delta) \Big] \geq 1-\delta.
     \end{align*} 
\end{lemma}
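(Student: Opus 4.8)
The plan is to reduce the global sensitivity of $f$ to a single column sum of folded Gaussians and then invoke the concentration bound already established in Lemma~\ref{lem:concentration_of_folded_gaussian}. The probabilistic content lives entirely in that lemma; everything else is a deterministic reduction exploiting the structure of a one-layer ReLU map.

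First I would fix a neighboring pair $x_1, x_2$. By the definition of $\GS_1$ together with the defining constraint $\|x_1 - x_2\|_1 \leq 1$ (and the fact that neighbors differ in at most one entry), these two vectors agree on every coordinate except one, say coordinate $j$, where $|(x_1)_j - (x_2)_j| \leq 1$. Hence $x_1 - x_2 = c\, e_j$ with $|c| \leq 1$, so $A(x_1 - x_2) = c\, A_{\cdot, j}$ is just the $j$-th column of $A$ scaled by $c$.

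Next I would exploit that the ReLU activation $\phi$ is $1$-Lipschitz to pass from pre-activations to outputs entrywise: $|f(x_1)_i - f(x_2)_i| = |\phi((Ax_1 + b)_i) - \phi((Ax_2+b)_i)| \leq |(A(x_1 - x_2))_i| = |c|\,|A_{i,j}| \leq |A_{i,j}|$. The key bonus here is that the bias $b$ cancels, so the bound is independent of $b$. Writing $z_{i,j} = |A_{i,j}|$ and summing over $i \in [m]$ gives $\|f(x_1) - f(x_2)\|_1 \leq \sum_{i=1}^m z_{i,j}$. Since the differing coordinate $j$ is arbitrary and the extremal case is $|c| = 1$, taking the supremum over all neighboring pairs yields the deterministic bound $\GS_1(f) \leq \max_{j \in [d]} \sum_{i=1}^m z_{i,j}$. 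Finally, each $z_{i,j} = |y_{i,j}|$ with $y_{i,j} \sim {\cal N}(0, \sigma_A^2)$ is exactly a folded Gaussian, so Lemma~\ref{lem:concentration_of_folded_gaussian} controls each column sum $\sum_{i=1}^m z_{i,j}$ by $\sigma_A m + 4\sigma_A \sqrt{m}\log^{1.5}(md/\delta)$ with failure probability at most $\delta$; a union bound over the $d$ columns then bounds the maximum, and substituting this into the target display finishes the proof.

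I expect the only genuinely nontrivial ingredient to be the folded-Gaussian tail estimate, which is already proved, so the conceptual work is the reduction above. The one point requiring care is the interaction between the union bound over columns and the logarithmic factor—whether the extra $\log d$ is absorbed into the stated $\log^{1.5}(md/\delta)$ term or handled by rescaling $\delta$—but this is bookkeeping rather than a real obstacle, since the lemma's logarithm already carries an $md$ from the union bound over entries used inside its own proof.
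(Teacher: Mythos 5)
Your proposal matches the paper's proof essentially step for step: drop the ReLU by $1$-Lipschitzness (the bias cancels), reduce $A(x_1-x_2)$ to a single scaled column, and bound the column's $\ell_1$ norm via the folded-Gaussian concentration in Lemma~\ref{lem:concentration_of_folded_gaussian}. If anything you are slightly more careful than the paper, which applies the lemma to one fixed column $k$ and omits the union bound over all $d$ columns that the supremum over neighboring pairs actually requires; your observation that the $\log^{1.5}(md/\delta)$ factor already absorbs this is correct.
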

\begin{proof}
    Let $k$ denote the index that $x_1$ and $x_2$ are different.
    \begin{align*}
            \GS_1(f) 
            = & ~ \sup_{x_1,x_2\in \R^d} \|f(x_1)-f(x_2) \|_1 \\
            = & ~ \sup_{x_1,x_2\in \R^d} \|\phi(Ax_1+b) - \phi(Ax_2+b)\|_1 \\
            \leq & ~ \sup_{x_1,x_2\in \R^d} \|(Ax_1+b) - (Ax_2+b)\|_1 \\
            = & ~ \sup_{x_1,x_2\in \R^d} \|(A(x_1- x_2)\|_1 \\
            = & ~ \| A_{*,k} \|_1 \\
            \leq & ~ \sigma_A m+4\sigma_A\sqrt{m}\log^{1.5}(md/\delta),
    \end{align*} 
    where the fourth step follows that $x_1$ and $x_2$ differ in the $k$-th entry, and the fifth step follows Lemma~\ref{lem:concentration_of_folded_gaussian}.
\end{proof}

\subsection{$\ell_2$-sensitivity functions of single layer neural network}
\begin{lemma}[$\ell_2$-norm sensitivity of single layer neural network]\label{lem:gs2_single_layer}
    Let $x \in [0,1]^d$, fully connected matrix $A \in {\cal N}(0,\sigma_A)^{m \times d}$, bias matrix $b \in \mathbb{R}^{m}$, and $\phi$ is the ReLU activation function. Let $f(x) = \phi(Ax+b)$ denote a single layer network, then for all neighboring inputs $x_1, x_2 \in \R^d$ that differ at most in one entry, we have
    \begin{align*}
        \Pr \Big[ \GS_2(f) \leq 2( \sqrt{md}+\sqrt{\log(1/\delta)} ) \Big] \geq 1-\delta.
     \end{align*} 
\end{lemma}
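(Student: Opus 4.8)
The plan is to follow the same two-stage template as the proof of Lemma~\ref{lem:gs1_single_layer}: first strip off the activation using its Lipschitz property, and then reduce the remaining quantity to a chi-squared tail bound. First I would use that the ReLU map $\phi$ is coordinatewise $1$-Lipschitz, so that for any neighboring $x_1,x_2$,
\begin{align*}
\|f(x_1)-f(x_2)\|_2 = \|\phi(Ax_1+b)-\phi(Ax_2+b)\|_2 \leq \|A(x_1-x_2)\|_2,
\end{align*}
exactly as in the first lines of Lemma~\ref{lem:gs1_single_layer}, with the bias $b$ cancelling. Taking the supremum defining $\GS_2(f)$ therefore reduces the problem to controlling $\sup_{x_1,x_2}\|A(x_1-x_2)\|_2$ over admissible pairs.

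Next, since $x_1,x_2\in[0,1]^d$ differ in at most one coordinate, the vector $x_1-x_2$ has a single nonzero entry of magnitude at most $1$, so $\|x_1-x_2\|_2\leq 1$. Hence $\|A(x_1-x_2)\|_2 \leq \|A\|\cdot\|x_1-x_2\|_2 \leq \|A\| \leq \|A\|_F$, where I use submultiplicativity of the spectral norm and the elementary bound $\|A\|\leq\|A\|_F$. This trades the tighter, column-wise estimate used in the $\ell_1$ case for a single global quantity, the Frobenius norm, which is exactly what produces the $\sqrt{md}$ scaling in the statement. (If one wished to retain the structural fact that only one column is active, one could instead bound a single column norm $\|A_{*,k}\|_2$ and obtain a better $\sqrt m$-type dependence; I note this as an aside, but it is not needed for the claimed bound.)

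The remaining and only quantitative step is to control $\|A\|_F^2=\sum_{i\in[m],\,j\in[d]} A_{i,j}^2$, a sum of $md$ independent ${\cal N}(0,\sigma_A^2)$ squares and hence a scaled chi-squared variable with $md$ degrees of freedom. Applying Lemma~\ref{lem:chi_square_tail} with $k=md$ and $t=\log(1/\delta)$ gives, with probability at least $1-\delta$,
\begin{align*}
\|A\|_F^2 \leq \sigma_A^2\big( md + 2\sqrt{md\log(1/\delta)} + 2\log(1/\delta) \big).
\end{align*}
Using $\sqrt{a+b+c}\leq\sqrt a+\sqrt b+\sqrt c$ together with an AM-GM estimate to absorb the cross term $\sqrt{md\log(1/\delta)}$ into the two pure terms, this simplifies to $\|A\|_F\leq 2\sigma_A(\sqrt{md}+\sqrt{\log(1/\delta)})$, and since $\sigma_A\leq 1$ in our regime (recall $\sigma_A=O(\epsilon_{\dfp}\delta_{\dfp}/m^2)$ in Theorem~\ref{thm:main_thm}) we may drop $\sigma_A$. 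Chaining the inequalities on the same probability-$(1-\delta)$ event then yields $\GS_2(f)\leq 2(\sqrt{md}+\sqrt{\log(1/\delta)})$, which is the claim.

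I expect no serious obstacle: the argument is routine once one commits to passing through the Frobenius norm. The only point requiring care is the constant bookkeeping in the final step — verifying that $\sigma_A\le 1$ so it can be dropped, and confirming that after taking square roots the cross term is dominated tightly enough that the clean form $2(\sqrt{md}+\sqrt{\log(1/\delta)})$ holds with the stated leading constant rather than a slightly larger one.
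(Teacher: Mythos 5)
Your proof is correct, and it follows the paper's template (Lipschitz reduction to strip the ReLU and the bias, then the chi-squared tail of Lemma~\ref{lem:chi_square_tail}), but it diverges at one step in a way worth noting. The paper exploits the fact that $x_1-x_2$ is supported on a single coordinate $k$ and reduces to the column norm $\|A_{*,k}\|_2$, whose square is a chi-squared variable with only $m$ degrees of freedom; it then nonetheless invokes Lemma~\ref{lem:chi_square_tail} with $md$ degrees of freedom to land on the stated $\sqrt{md}$ bound, so the degrees of freedom in the paper's application do not actually match the quantity being bounded (one would need a union bound over the $d$ columns, which gives $\sqrt{m}+\sqrt{\log(d/\delta)}$, a strictly better bound than claimed). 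You instead pass through $\|A\|\le\|A\|_F$, for which $\|A\|_F^2$ genuinely is a $\sigma_A^2$-scaled chi-squared with $md$ degrees of freedom, so your application of the tail bound is the one that honestly produces the $\sqrt{md}$ scaling in the lemma statement; the price is that you discard the single-column structure, which you correctly flag as leaving a $\sqrt{m}$-type improvement on the table. Both routes share the same final loose step of dropping $\sigma_A$ (the paper's last display still carries a factor $\sigma_A$ that is absent from the lemma statement); your explicit appeal to $\sigma_A\le 1$ in the operating regime is the more careful treatment of that point.
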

\begin{proof}
    Let $k$ denote the index that $x_1$ and $x_2$ are different.
    \begin{align*}
            \GS_2(f) 
            = & ~ \sup_{x_1,x_2\in \R^d} \|f(x_1)-f(x_2) \|_2 \\
            = & ~ \sup_{x_1,x_2\in \R^d} \|\phi(Ax_1+b) - \phi(Ax_2+b)\|_2 \\
            \leq & ~ \sup_{x_1,x_2\in \R^d} \|(Ax_1+b) - (Ax_2+b)\|_2 \\
            = & ~ \sup_{x_1,x_2\in \R^d} \|(A(x_1- x_2)\|_2 \\
            = & ~ \| A_{*,k} \|_2  \\
            \leq & ~ \sigma_A\left( 2\sqrt{md\log(1/\delta)}+2\log(1/\delta)+md\right)^{1/2} \\
            \leq & ~ \sigma_A\left( 2\sqrt{2md\log(1/\delta)}+2\log(1/\delta)+md\right)^{1/2} \\
            = & ~ \sigma_A(\sqrt{md}+\sqrt{2\log1/\delta}),
    \end{align*} 
    where the fourth step follows that $x_1$ and $x_2$ differ in the $k$-th entry, and the fifth step follows Lemma~\ref{lem:chi_square_tail}.
\end{proof}
\newpage
\section{Equivalence between pruning and differential privacy}
\label{sec:app_dp}

\subsection{Main results}

\begin{table}\caption{Summary of two results}
\centering
\begin{tabular}{ | l | l | l | l | }
    \hline
  {\bf Statement} & $\epsilon_{\dfp}$ & {\bf Comment} & {\bf Pruning} \\\hline
  Theorem~\ref{thm:for_general_x} & $\GS_1(f)  / (\sigma \sigma_A) \cdot (m/\delta_{\dfp}) $ & General $x$ & Magnitude \\\hline
  Theorem~\ref{thm:for_positive_x} & $\GS_1(f)  / (\sigma \sigma_A) \cdot \log (m/\delta_{\dfp}) $ & Nonnegative $x$ & Folded Magnitude \\\hline
\end{tabular}

\end{table}

\begin{theorem}[Main result I]\label{thm:for_general_x}
    For a single layer neural network $f(x) = \phi ( A x + b )$ where fully connected matrix $A \in {\cal N}(0,\sigma_A^2)^{m \times d}$, vector $b \in \mathbb{R}^{ m }$, and $\phi$ is the ReLU activation function.  We assume all the inputs $x \in \R^d$ satisfying that $\|x\|_2 = 1$. 
    If
    \begin{align*}
    m = \Omega ( \poly(\epsilon_{\ap}^{-1}, \log(1/\delta_{\ap}), \log(1/\delta_{\mathrm{dp}}) , a/ \sigma_A, \sigma \sigma_A)),
    \end{align*}
    then applying magnitude pruning with with truncation threshold $a > 0$ on $A \in \R^{m \times d}$ is an $(\epsilon_{\ap},\delta_{\ap})$-approximation to applying $(\epsilon_{\mathrm{dp}},\delta_{\mathrm{dp}})$-differential privacy on $x$,  where \\$\epsilon_{\mathrm{dp}}=2 \GS_1(f)  (m/\delta_{\mathrm{dp}}) / ( \sigma \sigma_A )$.
\end{theorem}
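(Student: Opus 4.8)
The plan is to follow the two-claim decomposition of the proof sketch: construct an explicit intermediary randomized function $h$, then show separately that $h$ is differentially private and that $h$ is close to the pruned network's output $g$. Let $\wt{A}$ be the weight matrix obtained by magnitude pruning $A$ with threshold $a$, put $\ov{A} = \wt{A} - A$, and define the multiplicative Laplace noise $e = \mathrm{Lap}(1,\sigma)^m \circ (\ov{A}x) \in \R^m$ together with $h(x) = f(x) + e = \phi(Ax+b) + e$. Since $g(x) = \phi((A+\ov{A})x+b)$ and $\phi$ is $1$-Lipschitz, the coordinatewise gap between $g(x)$ and $\phi(Ax+b)$ is controlled by $\ov{A}x$; hence property~2 of the theorem will follow once we bound $\|e - \ov{A}x\|_2$, which is precisely Claim~\ref{cla:part2}, and property~1 is Claim~\ref{cla:part1}.

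First I would establish Claim~\ref{cla:part1} (differential privacy). Fixing neighboring inputs $x,y$ with $\|x-y\|_1 \le 1$, I would write the density of $h$ explicitly from the Laplace density and bound the pointwise ratio $p_h(t\mid x)/p_h(t\mid y)$. Because the noise scale in coordinate $i$ is $\sigma\,|(\ov{A}x)_i|$, two ingredients are needed: (i) an upper bound on the $\ell_1$ sensitivity $\GS_1(f)$, imported from Lemma~\ref{lem:gs1_single_layer} (which gives $\GS_1(f) = O(\sigma_A m + \sigma_A\sqrt{m}\log^{1.5}(md/\delta))$ with high probability), and (ii) a lower bound on $|(\ov{A}x)_i|$ so that the multiplicative noise does not degenerate. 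Ingredient (ii) is exactly where the generalized Carbery--Wright anti-concentration bound of Appendix~\ref{sec:app_anti} enters: it controls the probability mass of coordinates on which $|(\ov{A}x)_i|$ is anomalously small, and that mass is absorbed into the additive slack $\delta_{\dfp}$. Integrating the bounded density ratio over $S$ then yields $\Pr[h(x)\in S] \le e^{\epsilon_{\dfp}}\Pr[h(y)\in S] + \delta_{\dfp}$ with $\epsilon_{\dfp} = 2\GS_1(f)(m/\delta_{\dfp})/(\sigma\sigma_A)$.

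Next I would prove Claim~\ref{cla:part2} (closeness). Setting $z_i = e_i - (\ov{A}x)_i = (\mathrm{Lap}(1,\sigma)-1)\cdot(\ov{A}x)_i$, each $z_i$ is zero-mean and, conditioned on $\ov{A}$, is a Laplace variable scaled by $(\ov{A}x)_i$. I would bound the per-coordinate variance and magnitude using the folded-Gaussian moment computations controlling $(\ov{A}x)_i$ (Lemma~\ref{lem:matrix_vector_multiplication} and Lemma~\ref{lem:anti_concentration_ax}), then apply Bernstein's inequality (Lemma~\ref{lem:bernstein}) to $\frac{1}{m}\sum_{i=1}^m(z_i-\E[z_i])$ to obtain
\begin{align*}
\Pr\Big[\frac{1}{m}\Big|\sum_{i=1}^m (z_i - \E[z_i])\Big| \ge \epsilon_{\ap}^2\Big] \le \delta_{\ap}
\end{align*}
once $m = \Omega(\poly(\epsilon_{\ap}^{-1}, \log(1/\delta_{\ap}), a/\sigma_A, \sigma\sigma_A))$. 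This gives $\frac{1}{\sqrt m}\|e - \ov{A}x\|_2 \le \epsilon_{\ap}$ with probability $1-\delta_{\ap}$, and combining with the Lipschitz reduction above proves $(\epsilon_{\ap},\delta_{\ap})$-closeness of $h$ and $g$.

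The main obstacle is Claim~\ref{cla:part1}, and within it the anti-concentration step. The difficulty is that the noise is multiplicative, $e_i \propto (\ov{A}x)_i$, so privacy can fail on coordinates where $(\ov{A}x)_i$ is anomalously small, and the standard additive-Laplace DP analysis does not apply directly. Quantitatively bounding the measure of such coordinates requires anti-concentration for a \emph{sum of truncated Gaussians}, which does not follow from the original Carbery--Wright theorem and must be derived via the log-concavity argument of Appendix~\ref{sec:app_anti}. Correctly tracking how this bad-event mass combines with the high-probability sensitivity bound to fix the final $(\epsilon_{\dfp},\delta_{\dfp})$ budget (and to determine the required width $m$) is the technically delicate part; the concentration in Claim~\ref{cla:part2} is comparatively routine given the moment bounds already established in Appendices~\ref{sec:app_prob}--\ref{sec:app_conc}.
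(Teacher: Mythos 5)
Your proposal follows the paper's proof essentially verbatim: the same intermediary $h(x)=f(x)+\mathrm{Lap}(1,\sigma)^m\circ(\ov{A}x)$, the same two-claim split in which the Laplace density ratio is bounded via the $\ell_1$-sensitivity lemma together with a per-coordinate lower bound on $|(\ov{A}x)_i|$, and the same Bernstein argument for closeness (noting that, as in the paper's appendix, Bernstein must be applied to the \emph{squared} deviations $z_i=(e_i-(\ov{A}x)_i)^2$ for the $\epsilon_{\ap}^2$ conclusion to actually control $\|e-\ov{A}x\|_2^2$). The only point worth recording is that the appendix writes out the details only for the nonnegative-$x$ variant (Theorem~\ref{thm:for_positive_x}), where the lower bound on $(\ov{A}x)_i$ comes from the folded-Gaussian concentration lemma; your use of the generalized Carbery--Wright anti-concentration bound, with per-coordinate bad-event mass $\delta_{\dfp}/m$ absorbed into the additive slack, is exactly the intended adaptation that yields the $m/\delta_{\dfp}$ (rather than $\log(m/\delta_{\dfp})$) factor in $\epsilon_{\dfp}$ for general $x$.
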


\begin{theorem}[Main result II]\label{thm:for_positive_x}
    For a single layer network $f(x)=\phi(Ax+b)$ where fully connected matrix $A \in \mathcal{N}(0,\sigma_A^2)^{m \times d}$, vector $b \in \mathbb{R}^{m }$, and $\phi$ is the ReLU activation function. We assume all the inputs $x \in \R^d$ satisfying that $\|x\|_2 = 1$ and $x \in \R^d_{+}$. 
    If 
    \begin{align*}
    m = \Omega ( \poly(\epsilon_{\ap}^{-1}, \log(1/\delta_{\ap}), \log(1/\delta_{\mathrm{dp}}) , a/ \sigma_A, \sigma \sigma_A)),
    \end{align*}
    then applying folded magnitude pruning with truncation threshold $a > 0$ on $A \in \R^{m \times d}$ is an $(\epsilon_{\ap},\delta_{\ap})$-approximation to applying $(\epsilon_{\mathrm{dp}},\delta_{\mathrm{dp}})$-differential privacy on $x \in \R^d$, where \\$\epsilon_{\mathrm{dp}}=2 \GS_1(f) \log(m/\delta_{\mathrm{dp}}) / ( \sigma \sigma_A )$.
\end{theorem}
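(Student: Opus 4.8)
The plan is to follow the two-claim decomposition from the proof sketch, building the intermediate randomized function $h$ explicitly and verifying separately that it is differentially private (Claim~\ref{cla:part1}) and that it approximates folded magnitude pruning (Claim~\ref{cla:part2}). Write $\wt{A}$ for the matrix after folded magnitude pruning, set $\ov{A} = \wt{A} - A$, and define the noise coordinatewise by $e_i = L_i\cdot(\ov{A}x)_i$ with $L_i \sim \mathrm{Lap}(1,\sigma)$ i.i.d.; the candidate is $h(x) = f(x)+e$ and the target is $g(x) = \phi(\wt{A}x+b)$. Everything is carried out conditionally on the draw of $A$ (hence of $\ov{A}$) on a high-probability event $\mathcal{E}$ on which both the folded-Gaussian lower bound of Lemma~\ref{lem:anti_concentration_ax} and the $\ell_1$-sensitivity bound of Lemma~\ref{lem:gs1_single_layer} hold simultaneously; the mass of $\mathcal{E}^c$ is folded into $\delta_{\dfp}$ and $\delta_{\ap}$.

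For Claim~\ref{cla:part1} I would fix neighbouring inputs $x,y$ with $\|x-y\|_1 \le 1$ and bound the per-coordinate density ratio of $h$. Since $h(x)_i$ is a Laplace variable centred at $f(x)_i + (\ov{A}x)_i$ with scale $\sigma|(\ov{A}x)_i|$, the log density ratio $\log \frac{p_{h(x)_i}(t)}{p_{h(y)_i}(t)}$ decomposes into a scale term governed by $|(\ov{A}y)_i|/|(\ov{A}x)_i|$ and a location term governed by $|f(x)_i + (\ov{A}x)_i - f(y)_i - (\ov{A}y)_i|$. Here nonnegativity of $x$ is essential: because $\ov{A}x$ is a sum of nonnegative truncated-Gaussian terms, part~2 of Lemma~\ref{lem:anti_concentration_ax} gives $(\ov{A}x)_i \ge 0.02\, a^2/\sigma_A$ simultaneously for all $i\in[m]$ at the cost of only a $\log(m/\delta)$ factor from the union bound, uniformly lower-bounding the noise scale. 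Since the entries of $\ov{A}$ lie in $[0,a]$ we also have $|(\ov{A}(x-y))_i| \le a\|x-y\|_1 \le a$, negligible against $0.02\,a^2/\sigma_A$ once $a/\sigma_A$ is large, which keeps the scale ratio within a constant factor of $1$ and ties the location shift to the $\ell_1$ sensitivity. Summing the coordinatewise log ratios and inserting $\GS_1(f) = O(\sigma_A m + \sigma_A\sqrt m\log^{1.5}(md/\delta))$ yields $\epsilon_{\dfp} = 2\GS_1(f)\log(m/\delta_{\dfp})/(\sigma\sigma_A)$, and integrating the pointwise density bound over $S$ while adding the failure mass of $\mathcal{E}^c$ recovers the additive $\delta_{\dfp}$.

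For Claim~\ref{cla:part2} I would first note that the hypothesis $a/\sigma_A \ge 20\log(m/\delta)$ forces the post-pruning pre-activation into the linear regime of the ReLU: by Lemma~\ref{lem:anti_concentration_ax} the shift $(\ov{A}x)_i \gtrsim a^2/\sigma_A$ dominates the typical size $|(Ax)_i| \lesssim \sigma_A\sqrt{\log m}$, so with high probability $g(x) = Ax + b + \ov{A}x$ with no clipping. Writing $z_i = e_i - (\ov{A}x)_i = (L_i-1)(\ov{A}x)_i$, a centred Laplace variable of scale $\sigma|(\ov{A}x)_i|$, the difference $h(x)-g(x)$ equals $-z_i$ on coordinates where $f$ is active and $(Ax)_i - z_i$ on the inactive ones. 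The first part is controlled by a Bernstein-type concentration of $\frac1m\sum_i (z_i^2 - \E[z_i^2])$ in the spirit of Lemma~\ref{lem:matrix_vector_multiplication}, which drops below $\epsilon_{\ap}^2$ once $m$ passes the stated polynomial threshold; the inactive term $\frac1m\sum_i (Ax)_i^2$ is controlled by the concentration of $\|Ax\|_2^2$ and is negligible because $\sigma_A$ is taken extremely small. Combining the two gives $\frac{1}{\sqrt m}\|h(x)-g(x)\|_2 \le \epsilon_{\ap}$ with probability $1-\delta_{\ap}$.

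The main obstacle I expect is the differential-privacy density computation in Claim~\ref{cla:part1}, precisely because the Laplace noise has \emph{input-dependent} scale $\sigma|(\ov{A}x)_i|$: a naive bound on $\log\frac{p_{h(x)_i}}{p_{h(y)_i}}$ blows up as $(\ov{A}x)_i \to 0$, and it is only the uniform folded-Gaussian lower bound of Lemma~\ref{lem:anti_concentration_ax} that rescues the estimate. Obtaining the $\log(m/\delta_{\dfp})$ rate (rather than the weaker $m/\delta_{\dfp}$ of Theorem~\ref{thm:for_general_x}) hinges entirely on the fact that for nonnegative $x$ this anti-concentration comes with an exponential tail, so the union bound over the $m$ coordinates costs only a logarithm instead of a polynomial factor. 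A secondary difficulty is the careful bookkeeping that splits the total failure probability among the conditioning event $\mathcal{E}$, the privacy slack $\delta_{\dfp}$, and the approximation slack $\delta_{\ap}$, and that confirms the polynomial lower bound on $m$ suffices to drive both the closeness concentration and the union bounds below their targets.
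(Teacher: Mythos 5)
Your proposal is correct and follows essentially the same route as the paper: the same input-scaled Laplace noise $e=\mathrm{Lap}(1,\sigma)^m\circ(\ov{A}x)$, the same two-claim split, with Claim~\ref{cla:part1} resting on the density-ratio bound via Lemma~\ref{lem:gs1_single_layer} and the folded-Gaussian lower bound of Lemma~\ref{lem:anti_concentration_ax}, and Claim~\ref{cla:part2} on a Bernstein bound for $\sum_i(z_i^2-\E[z_i^2])$. If anything you are more careful than the paper at two points it glosses over --- the normalization (scale-ratio) term in the Laplace density quotient and the verification that the ReLU stays in its linear regime so that $g(x)-f(x)\approx\ov{A}x$ --- so no changes are needed.
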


\begin{remark}
Note that $\GS_1(f) = \Theta(m \sigma_A)$. \\
1) if using folded Gaussian and assume $x \in \R_{\geq 0}$, $\epsilon_{\mathrm{dp}}=2 \GS_1(f) \cdot \log(m/\delta_{\mathrm{dp}}) / ( \sigma \sigma_A )$,\\
then we need to pick $\sigma=m$, $\sigma_A = \Theta(1/\sigma)$ and $a = \Theta(\sigma_A)$ . \\
2) if using Gaussian, $\epsilon_{\mathrm{dp}}=2 \GS_1(f) \cdot (m/\delta_{\mathrm{dp}}) / ( \sigma \sigma_A )$,\\ 
then we need to pick $\sigma=m^2$, $\sigma_A = \Theta(1/\sigma)$ and $a = \Theta(\sigma_A)$.
\end{remark}

\subsection{Differential privacy}
\label{def:differential_privacy}
\begin{definition}[Differential Privacy, Definition.1 in \cite{dmns06}]
Let $\mathcal{A}: \mathcal{D}^{n} \rightarrow \mathcal{Y}$ be a randomized algorithm. Let $D_{1}, D_{2} \in \mathcal{D}^{n}$ be two databases that differ in at
most one entry (we call these databases neighbors). Let $\epsilon>0 .$ Define $\mathcal{A}$ to be $\epsilon$ -differentially private if for all neighboring databases
\(D_{1}, D_{2},\) and for all (measurable) subsets \(Y \subset \mathcal{Y},\) we have
\begin{align*}
\frac{ \Pr \left[\mathcal{A}( D_{1} ) \in Y \right] }{\Pr \left[\mathcal{A}\left(D_{2}\right) \in Y\right]} \leq \exp (\epsilon) .
\end{align*}
\end{definition}

\begin{definition}[Global Sensitivity, Definition 2 in \cite{dmns06}] 
Let $f : {\cal D}^n \rightarrow \mathbb{R}^d$, define $\GS_p(f)$, the $\ell_p$ global sensitivity of $f$, for all neighboring databases $D_1, D_2$ as
\begin{align*}
     \GS_p(f) = \sup_{ D_1,D_2 \in {\cal D}^n } \| f(D_1) - f(D_2) \|_p .
\end{align*}
\end{definition}

\begin{theorem}[Laplace Mechanism \cite{dmns06}]
    Let $f$ be defined as before and $\epsilon>0$. Define randomized algorithm $\mathcal{A}$ as
    \begin{align*}
        \mathcal{A}(D) = f(D) + ( \mathrm{Lap}( \GS_1(f) / \epsilon ) )^d ,
    \end{align*}
    where the one-dimensional (zero mean) Laplace distribution $\mathrm{Lap}(b)$ has density $p(x;b)=\frac{1}{2b} \exp(-\frac{|x|}{b})$, and $\mathrm{Lap}(b)^d=(l_1,\dots,l_d) \in \R^d$ where each $l_i$ i.i.d. is sampled from $\mathrm{Lap}(b)$. Then $\mathcal{A}$ is $\epsilon$-differentially private.
\end{theorem}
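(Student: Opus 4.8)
The plan is to reduce the set-level probability ratio to a pointwise bound on probability densities and then integrate. Since $\mathcal{A}(D) = f(D) + (\mathrm{Lap}(\GS_1(f)/\epsilon))^d$ admits a density on $\R^d$ (a shifted product of independent one-dimensional Laplace densities), for any measurable $Y \subseteq \mathcal{Y} = \R^d$ we can write $\Pr[\mathcal{A}(D) \in Y] = \int_Y p_D(z)\,\d z$, where $p_D$ denotes the density of $\mathcal{A}(D)$. Hence it suffices to establish the \emph{uniform} pointwise bound $p_{D_1}(z)/p_{D_2}(z) \leq \exp(\epsilon)$ for every $z \in \R^d$; integrating this inequality against an arbitrary measurable $Y$ then immediately yields $\Pr[\mathcal{A}(D_1) \in Y] \leq \exp(\epsilon)\,\Pr[\mathcal{A}(D_2) \in Y]$, which is the definition of $\epsilon$-differential privacy.

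Writing $b = \GS_1(f)/\epsilon$ for the Laplace scale and using independence across coordinates, the density factors as $p_D(z) = \prod_{i=1}^d \frac{1}{2b}\exp(-|z_i - f(D)_i|/b)$. First I would form the ratio $p_{D_1}(z)/p_{D_2}(z)$, in which the normalizing constants $1/(2b)$ cancel, leaving $\exp\big(\frac{1}{b}\sum_{i=1}^d (|z_i - f(D_2)_i| - |z_i - f(D_1)_i|)\big)$. The key step is then to apply the (reverse) triangle inequality coordinatewise, $|z_i - f(D_2)_i| - |z_i - f(D_1)_i| \leq |f(D_1)_i - f(D_2)_i|$, which bounds the exponent by $\frac{1}{b}\|f(D_1) - f(D_2)\|_1$.

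Finally I would invoke the definition of $\ell_1$ global sensitivity: for neighboring databases $D_1, D_2$ we have $\|f(D_1) - f(D_2)\|_1 \leq \GS_1(f)$. Substituting $b = \GS_1(f)/\epsilon$, the exponent is at most $\frac{\epsilon}{\GS_1(f)} \cdot \GS_1(f) = \epsilon$, so $p_{D_1}(z)/p_{D_2}(z) \leq \exp(\epsilon)$ holds pointwise, and integrating over $Y$ closes the argument. Since this is a classical and clean result, there is no genuinely hard obstacle; the one point deserving care is the reduction in the first paragraph, which is valid precisely because the density ratio bound is uniform in $z$ (so it survives integration against any $Y$), together with the observation that the calibration $b = \GS_1(f)/\epsilon$ is exactly what makes the sensitivity cancel the $1/b$ factor and leave $\epsilon$.
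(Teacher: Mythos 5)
Your proof is correct and complete: it is the standard density-ratio argument (factor the product Laplace density, cancel normalizers, apply the triangle inequality coordinatewise, invoke the definition of $\GS_1$, and integrate the uniform pointwise bound over $Y$). The paper states this classical theorem without proof, citing \cite{dmns06}, but the same technique is exactly what the paper deploys in Part~1 of the proof of Theorem~\ref{thm:for_positive_x}, so your approach matches both the original source and the paper's own usage.
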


\begin{theorem}[Gaussian Mechanism \cite{dr14}]
    For $c>2\sqrt{\log(1/\delta)}$, the Gaussian Mechanism with parameter $\sigma \geq c\cdot \GS_2(f)/\epsilon$ is $(\epsilon,\delta)$-differentially private.
    
\end{theorem}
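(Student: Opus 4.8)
The plan is to prove the claim through the standard \emph{privacy loss random variable} analysis, relying only on the Gaussian tail bound of Lemma~\ref{lem:concen_gaussian} together with the definition of $(\epsilon_{\dfp},\delta_{\dfp})$-differential privacy (Definition~\ref{def:epsilon_delta_dp}). Recall that the Gaussian mechanism outputs $\mathcal{A}(D) = f(D) + v$ with $v \sim {\cal N}(0,\sigma^2 I_d)$. Fix any pair of neighboring databases $D_1,D_2$ and set $\Delta = f(D_1) - f(D_2)$, so that $\|\Delta\|_2 \leq \GS_2(f)$ by the definition of global sensitivity. Let $p_1,p_2$ be the output densities of $\mathcal{A}(D_1),\mathcal{A}(D_2)$ and define the privacy loss $\mathcal{L}(z) = \ln\big(p_1(z)/p_2(z)\big)$. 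The first step is to invoke the standard reduction: if $\Pr_{z\sim\mathcal{A}(D_1)}[\,\mathcal{L}(z) > \epsilon\,] \leq \delta$, then $\mathcal{A}$ is $(\epsilon,\delta)$-differentially private. This follows by splitting any output event $S$ into the region where $\mathcal{L}\leq\epsilon$ (on which $p_1 \leq e^{\epsilon} p_2$ pointwise) and its complement (which carries mass at most $\delta$ under $\mathcal{A}(D_1)$); so the whole task reduces to a single tail bound on $\mathcal{L}$.

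Next I would compute $\mathcal{L}$ explicitly. Writing a sample of $\mathcal{A}(D_1)$ as $z = f(D_1) + v$ and substituting the Gaussian densities gives
\begin{align*}
\mathcal{L}(z) = \frac{1}{2\sigma^2}\big(\|z - f(D_2)\|_2^2 - \|z-f(D_1)\|_2^2\big) = \frac{\langle v,\Delta\rangle}{\sigma^2} + \frac{\|\Delta\|_2^2}{2\sigma^2}.
\end{align*}
Since $v\sim{\cal N}(0,\sigma^2 I_d)$, the scalar $\langle v,\Delta\rangle$ is distributed as ${\cal N}(0,\sigma^2\|\Delta\|_2^2)$, so $\mathcal{L}$ is itself a one-dimensional Gaussian with mean $\|\Delta\|_2^2/(2\sigma^2)$; the multi-dimensional problem collapses onto the single direction $\Delta$. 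This is the key simplification and is exactly why only the $\ell_2$ sensitivity enters the bound.

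The remaining step is the scalar tail estimate. The event $\mathcal{L}(z) > \epsilon$ is equivalent to $\langle v,\Delta\rangle > \sigma^2\epsilon - \tfrac12\|\Delta\|_2^2$, and normalizing by $\sigma\|\Delta\|_2$ makes $\langle v,\Delta\rangle/(\sigma\|\Delta\|_2)$ a standard normal. Lemma~\ref{lem:concen_gaussian} then bounds the probability by $\exp(-s^2/2)$, where
\begin{align*}
s = \frac{\sigma\epsilon}{\|\Delta\|_2} - \frac{\|\Delta\|_2}{2\sigma}.
\end{align*}
Substituting $\sigma \geq c\,\GS_2(f)/\epsilon \geq c\|\Delta\|_2/\epsilon$ forces $\sigma\epsilon/\|\Delta\|_2 \geq c$, while the subtracted term is controlled by the same inequality, giving $\|\Delta\|_2/(2\sigma) \leq \epsilon/(2c)$ and hence $s \geq c - \epsilon/(2c)$. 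The hypothesis $c > 2\sqrt{\log(1/\delta)}$ is calibrated precisely so that $s \geq \sqrt{2\log(1/\delta)}$, whence $\exp(-s^2/2)\leq\delta$, closing the argument.

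I expect the main obstacle to be this final substitution showing $s \geq \sqrt{2\log(1/\delta)}$. The function $s$ is monotone decreasing in $\|\Delta\|_2$, so the worst case is $\|\Delta\|_2 = \GS_2(f)$; even so, one must carefully control the negative correction term $-\|\Delta\|_2/(2\sigma)$, which amounts to bounding $\epsilon/(2c)$ against the margin $c - \sqrt{2\log(1/\delta)}$. This is where the precise constant in $c > 2\sqrt{\log(1/\delta)}$, together with the standard small-$\epsilon$ regime, is needed, since a looser constant would fail to absorb the correction. Everything else --- the density computation, the collapse to one dimension, and the reduction from a bounded-privacy-loss tail to the $(\epsilon,\delta)$ guarantee --- is routine.
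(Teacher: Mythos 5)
The paper does not prove this statement; it is imported verbatim from \cite{dr14}, so the only benchmark is the standard proof in that reference (Theorem A.1 of Dwork--Roth), and your proposal reproduces exactly that argument: the reduction from a tail bound on the privacy loss to $(\epsilon,\delta)$-DP, the collapse of $\mathcal{L}$ onto the direction $\Delta$ giving $\mathcal{L}\sim{\cal N}\bigl(\|\Delta\|_2^2/(2\sigma^2),\ \|\Delta\|_2^2/\sigma^2\bigr)$, and the Gaussian tail estimate at $s=\sigma\epsilon/\|\Delta\|_2-\|\Delta\|_2/(2\sigma)\geq c-\epsilon/(2c)$. The computations check out, and you correctly identify the one real caveat: the final inequality $c-\epsilon/(2c)\geq\sqrt{2\log(1/\delta)}$ requires the implicit standing assumption $\epsilon\in(0,1)$ (or at least $\epsilon\lesssim\log(1/\delta)$), which the theorem statement here omits but which is explicit in \cite{dr14}; under that assumption the paper's constant $c>2\sqrt{\log(1/\delta)}$ is in fact slightly more generous than the $c^2>2\ln(1.25/\delta)$ of the original, so the argument closes. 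No gap beyond making that regime explicit.
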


\subsection{Function approximation}

\begin{definition}[$(\epsilon,\delta)$-approximation] For a pair of functions $f(x)$ and $g(x)$, we say $f$ is an $(\epsilon,\delta)$-approximation of $g$ if for any $x$
    \begin{align*}
        \Pr[ \| f(x) - g(x) \|_2 > \epsilon ] \leq \delta .
    \end{align*}
    
\end{definition}

\subsection{Proof of Theorem~\ref{thm:for_positive_x}}

\begin{proof}

{\bf Sketch.}

The proof can be splitted into two parts. We use $\tilde{A} \in \R^{m \times d}$ to denote the weight matrix after magnitude pruning, and $\bar{A} = \tilde{A} - A \in \R^{m \times d} $. We define vector $e \in \R^m$ as follows
\begin{align*}
e = \mathrm{Lap}(1,\sigma)^m\circ ( \bar{A}x ).
\end{align*}
    \begin{enumerate}
        \item Let $B(x)=f(x)+e \in \R^m$, then $B(x)$ is $( \epsilon_{\mathrm{dp}}, \delta_{\mathrm{dp}})$-differential privacy.
        \item $\Pr[ \frac{1}{\sqrt{m}} \| e-\bar{A}x \|_2 \geq \epsilon_{\ap} ] \leq \delta_{\ap}$, as long as $m = \Omega ( \poly(\epsilon_{\ap}^{-1}, \log(1/\delta_{\ap}), a/\sigma_A, \sigma \sigma_A))$.
    \end{enumerate}

{\bf Part 1.}

Let $y \in \mathbb{R}^m$ and $x_1, x_2$ be neighbouring inputs.  It is sufficient to bound the ratio $\frac{p(y-f(x_1))}{p(y-f(x_2))}$ where $p(\cdot)$ denotes probability density, because once the densities are bounded, integrating $p(\cdot)$ yields the requirement for differential privacy as defined in \ref{def:differential_privacy}.

Since $e_i \sim \mathrm{Lap}(1,\sigma)\cdot (\bar{A}x)_i$, then $p(t:\sigma)=\frac{1}{2\sigma} \exp(- | t / ( \bar{A}x )_i - 1 | / \sigma )$
    \begin{align*}
        \frac{p(y-f(x_1))}{p(y-f(x_2))} 
        = & ~ \frac{\prod_{i=1}^{m}\frac{1}{2\sigma}\exp(- |(y_i-f(x_1)_i)/(\bar{A}x_1)_i-1| / \sigma)}{\prod_{i=1}^{m}\frac{1}{2\sigma}\exp(- |(y_i-f(x_2)_i)/(\bar{A}x_2)_i-1|/\sigma )}\\
        = & ~ \frac{\exp(-\sum_{i=1}^m |(y_i-f(x_1)_i)/(\bar{A}x_1)_i-1|/\sigma)}{\exp(-\sum_{i=1}^m |(y_i-f(x_2)_i)/(\bar{A}x_2)_i-1|/\sigma)} \\
        = & ~ \exp \Big( \frac{1}{\sigma}\sum_{i=1}^m \Big| \frac{y_i-f(x_1)_i}{(\bar{A}x_1)_i}-1 \Big| - \Big| \frac{y_i-f(x_2)_i}{(\bar{A}x_2)_i}-1 \Big| \Big) \\
        \leq & ~ \exp \Big(\frac{1}{\sigma} \sum_{i=1}^m \Big| \frac{y_i-f(x_1)_i}{(\bar{A}x_1)_i}-\frac{y_i-f(x_2)_i}{(\bar{A}x_2)_i} \Big| \Big) \\
        \leq & ~ \exp \Big(2 \sum_{i=1}^m \frac{1}{\sigma \min_{i \in [m]} \{ |(\bar{A}x_1)_i|,|(\bar{A}x_2)_i| \}}|f(x_2)_i-f(x_1)_i| \Big) \\
        \leq & ~ \exp \Big( 2\GS_1(f) / \sigma\min_{i \in [m]} \{ |(\bar{A}x_1)_i|,|(\bar{A}x_2)_i| \} \Big) \\
        \leq & ~ \exp \Big( 2 \GS_1(f) /  \sigma_A \sigma (1/6\cdot (a/\sigma_A)^2 - 1/5 \cdot (a/\sigma_A)^{1.5}\log(m/\delta_{dp})) \Big) \\
        \leq & ~ \exp \Big( 2 (\sigma_A m+4\sigma_A\sqrt{m}\log^{1.5}(md/\delta)) /  \sigma_A \sigma (1/6\cdot (a/\sigma_A)^2 - 1/5 \cdot (a/\sigma_A)^{1.5}\log(m/\delta_{dp})) \Big) \\
        \leq & ~ \exp \Big( 2 (m+4\sqrt{m}\log^{1.5}(md/\delta)) /  \sigma (1/6\cdot (a/\sigma_A)^2 - 1/5 \cdot (a/\sigma_A)^{1.5}\log(m/\delta_{dp})) \Big)
    \end{align*} 
    where the first equality is because the noise is independent for each coordinate, and the first inequality is triangle inequality. The third inequality holds because of the definition of $\GS_1(f)$, and the fourth holds because of Lemma \ref{lem:anti_concentration_ax}.
holds with probability $1-\delta_{\mathrm{dp}}$

According to Lemma \ref{lem:gs1_single_layer}, 
\begin{align*}
\Pr \Big[ \GS_1(f) \leq \sigma_A m+4\sigma_A\sqrt{m}\log^{1.5}(md/\delta) \Big] \geq 1-\delta.
\end{align*}

{\bf Part 2.}
    
    Let $z_i = (e_i-\bar{A}x_i)^2$, thus $z_i \sim \mathrm{Lap}^2(0,b_i)$, where $b_i=\sigma(\bar{A}x)_i$
    We first calculate $\E[z_i^2]$
    \begin{align*}
        \E[z_i^2] = & ~ \int_{-\infty}^{\infty}\frac{1}{2b_i}\exp(-|x|/b_i)x^4 \d x \\
                = & ~ \int_{0}^{\infty}\frac{1}{b_i}\exp(-x/b_i)x^4 \d x \\
                = & ~ b_i^4 \cdot (-\exp(-x)x^4 - \int_{0}^{\infty} -4\exp(-x)x^3 \d x ) \bigg|_{0}^{\infty} \\
                = & ~ b_i^4 \cdot (-\exp(-x)x^4 +4(-\exp(-x)x^3-3(\exp(-x)x^2-2(-\exp(-x)x-\exp(-x)))) )|_{0}^{\infty} \\
                = & ~ 24b_i^4 \\
                \leq & ~ 24\sigma^4 \cdot ( 10a(\sqrt{a/\sigma_A}+1)\log(m/\delta) )^4  ,
    \end{align*}
    where both the third step and the fourth step follow integration by parts. The fifth step follows by plugging in the limits of integration, and the last step follows by Lemma \ref{lem:concentration_of_inner_product}.

    Next, we want to bound $\max(z_i)$, since $z_i = e_i^2 \sim \mathrm{Lap}^2(0,b_i)$
    \begin{align*}
        \Pr[z_i\geq t^2] = & \Pr[|e_i| \geq t]  &\text{$t>0$}\\
            = & 2\cdot\frac{1}{2}\exp(- t / b_i ) \\
            = & \exp(- t/b_i) ,
    \end{align*}
    where the second step follows by plugging the cumulative distribution function of Laplace distribution. 

    Take $t=b_i\log(m/\delta)$, then for each fixed $i \in [m]$, we have $\Pr[z_i\leq \sqrt{b_i\log(m/\delta ) }] = \delta/m$. Thus, with probability $1-\delta$, we have for all $i \in [m]$,
    \begin{align*}
    z_i \leq \max_{i\in [m]} \sqrt{ b_i \log(m/\delta ) } \leq \sqrt{ a^3\sigma/\sigma_A \cdot \log(m/\delta ) }:= M ,
    \end{align*}
    where the second inequality follows by $(\bar{A}x)_i$'s upper bound in Lemma \ref{lem:matrix_vector_multiplication}.

    Using Bernstein inequality, we have
    \begin{align*}
        \Pr \left[ \Big|\sum_{i=1}^{m}(z_i-\E[z_i] ) \Big| \geq t \right] \leq& \exp \Big( -\frac{t^2/2}{\sum_{i=1}^{m}\E[z_i^2]+Mt/3} \Big) \\
        \leq& \exp \Big(-\frac{t^2/2}{24m\sigma^4 \cdot ( 10a(\sqrt{a/\sigma_A}+1)\log(m/\delta) )^4 + a^3\sigma/\sigma_A \cdot \log(m/\delta )t/3} \Big) .
    \end{align*}
    Since $Mt/3$ is dominated by $\sum_{i=1}^{m}\E[z_i^2]$, we choose 
    \begin{align*}
    t = m \epsilon^2,
    \end{align*}
    then as long as 
    \begin{align*}
        m \geq \epsilon^{-4}\log(1/\delta) \Big(48 \cdot ( 10a/\sigma_A(\sqrt{a/\sigma_A}+1)\log(m/\delta) )^4 \Big) (\sigma^4 \sigma_A^4 + \sigma \sigma_A^2) ,
    \end{align*}
    we have
    \begin{align*}
     \Pr \left[ \frac{1}{m} \Big| \sum_{i=1}^{m}(z_i-\E[z_i] ) \Big| \geq \epsilon^2 \right] \leq \delta  .
    \end{align*}
    which is
    \begin{align*}
    \Pr \left[ \frac{1}{\sqrt{m}} \| e - \bar{A} x \|_2 \geq \epsilon \right] \leq \delta .
    \end{align*}
    Note that we need to pick $\sigma = m$, then we need to pick $\sigma_A = 1/m$.
\end{proof}
\newpage
\section{Experiment details}
\label{sec:app_exp}

\subsection{Network architecture and hyperparameters}
Table \ref{tab:exp_details} provides implementation details of the deep neural networks we use in experiments. Most of our experiments are conducted on 8 Nvidia Tesla K80 GPUs. Experiment scripts are written in Python 3.6.

\begin{table}[h]\caption{Implementation details of network architectures and training schemes.}
\label{tab:exp_details}
\centering
\begin{tabular}{ | l | l | l | l | }
    \hline
      & \textbf{MNIST} \cite{lcb10} & \textbf{CIFAR-10} \cite{cifar10} \\ \hline
    $\#$Epoch & 20 & 200\\ \hline
    Network architecture & LeNet-5 \cite{lbbh98} & VGG-19 \cite{sz14} \\\hline 
    Optimizer & SGD  (momentum = 0.9) \cite{qian1999momentum}   & Adam \cite{kingma2014adam}\\  \hline
    Initial learning rate & 0.1 & 0.001 \\ \hline
    Batch size & 64 & 64\\\hline
  \end{tabular}
\end{table}

\subsection{Pruning algorithm}
Algorithm~\ref{alg:mag_prune_full} provides the full version of Algorithm~\ref{alg:mag_prune}. The pruning procedure decides $a^{(t)}$, the magnitude threshold at time $t$, by the gradual pruning technique \cite{zg17}, which we have discussed in Section~\ref{sec:exp} (see  Figure~\ref{fig:grad_prune}).

\begin{algorithm}[h]
    \caption{ Stochastic Gradient Descent with Magnitude-based Pruning, detailed version of Algorithm~\ref{alg:mag_prune}}
    \label{alg:mag_prune_full}
    \begin{algorithmic}[1]{
    \Procedure{\textsc{SGDMagPrune}}{$ \{ x_i, y_i \}_{i \in [n]}, a, \eta$}
    \State \Comment{Loss function $\mathcal{L}: \R^{d_o} \times \R^{d_o} \rightarrow [0,1]$ } 
    \State Let $W^{(1)}$ denote a random initialization of neural network's weights, and $f (W, x)$ denotes the neural network.
    \State Let ${\cal D} = \{ (x_1, y_1), \cdots, (x_n,y_n) \} \subset \R^d \times \R^{d_o}$
    \For{$t = 1 \to T_{\text{train}}$} \Comment{Training stage} 
        \State Sample $(x,y)\sim {\cal D}$ uniformly at random
        \State $W^{(t+1)} \leftarrow W^{(t)} - \eta \cdot \frac{\partial \mathcal{L}( f(W,x) , y)}{ \partial W } |_{W = W^{(t)}} $
    \EndFor 
    \For{$t = T_{\text{train}} \to T_{\text{train}}+T_{\text{prune}}$}  \Comment{Pruning stage} 
        \State Sample a data $(x,y)$ from ${\cal D}$ uniformly at random
        \State $\wt{W}^{(t)} \leftarrow \textsc{ThresholdPrune}(W^{(t)},a^{(t)})$
        \State $W^{(t+1)} = \tilde{W}^{(t)} - \eta \cdot \frac{\partial \mathcal{L}( f(W,x) , y)}{ \partial W} |_{W = \wt{W}^{(t)}} $
    \EndFor 
    \State $T_{\text{end}} \gets T_{\text{train}}+T_{\text{prune}}$
    \State $\wt{W}^{(T_{\text{end}})} \leftarrow \textsc{ThPrune}(W^{(T_{\text{end}})},a^{ ( T_{\text{end}} ) })$
  
    \EndProcedure
    \Procedure{\textsc{ThPrune}}{$W,a$} 
        \For{ $l \in [L]$ }
        \For{$i,j$}
            \State $(\tilde{W}_l)_{i,j} \leftarrow 
            \begin{cases}
            (W_l)_{i,j}, & \text{~if~} |(W_l)_{i,j}| > a; \\
            0, & \text{~otherwise~}.
            \end{cases} 
            $ 
        \EndFor
        \EndFor
        \State \Return $\wt{W}$
    \EndProcedure}
    \end{algorithmic}
\end{algorithm}

\subsection{Visualization of inverted images}
\paragraph{Inversion from different layers.} Figure \ref{fig:mnist_single_image} and Figure \ref{fig:cifar_single_image} visualize the inverted images obtained by running the attack (see Section~\ref{sec:inversion_alg}) on $\Phi'_\text{prune}$ from different layers with different target sparsities of magnitude pruning. Each column implies the increasing difficulty of inverting deeper layers. Each row indicates that with a given layer, running inversion to generate $x$ becomes harder as the model is pruned with a higher target sparsity. As shown, for layer `Conv5' in LeNet-5, if we prune the model with sparsity $0.9$, then the attack fails by producing an almost all-black inversion. Similar phenomenon is observed when inverting layer `Conv5-1' in VGG-19 with sparsity $0.9$.

\begin{figure*}[!t]
    \centering
    \includegraphics[width=0.95\textwidth]{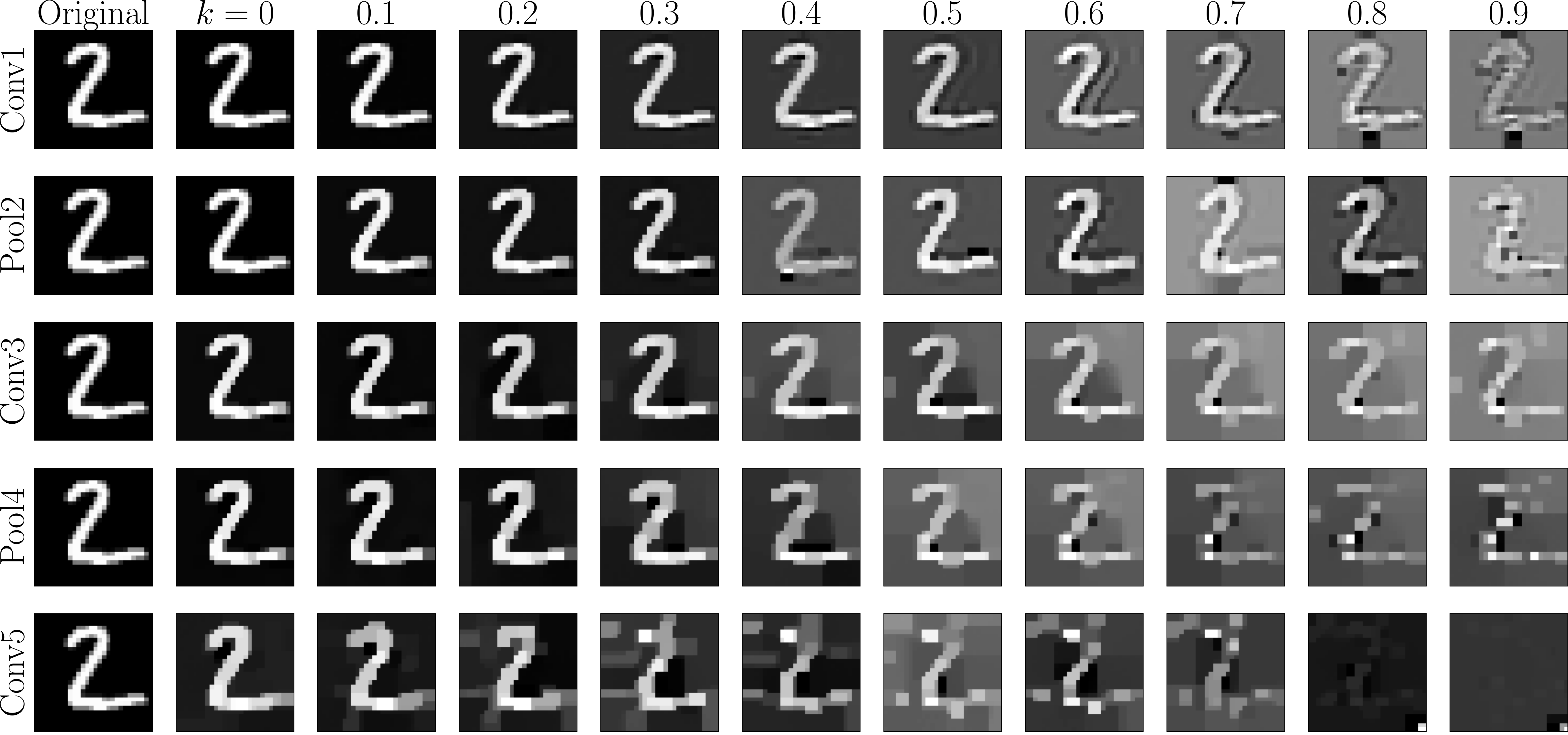}
    \caption{Inverted MNIST digits (a digit 2) from different layers with different sparsity levels. $k$ stands for sparsity.}\label{fig:mnist_single_image}
\end{figure*}

\begin{figure*}[!t]
    \centering
    \includegraphics[width=0.95\textwidth]{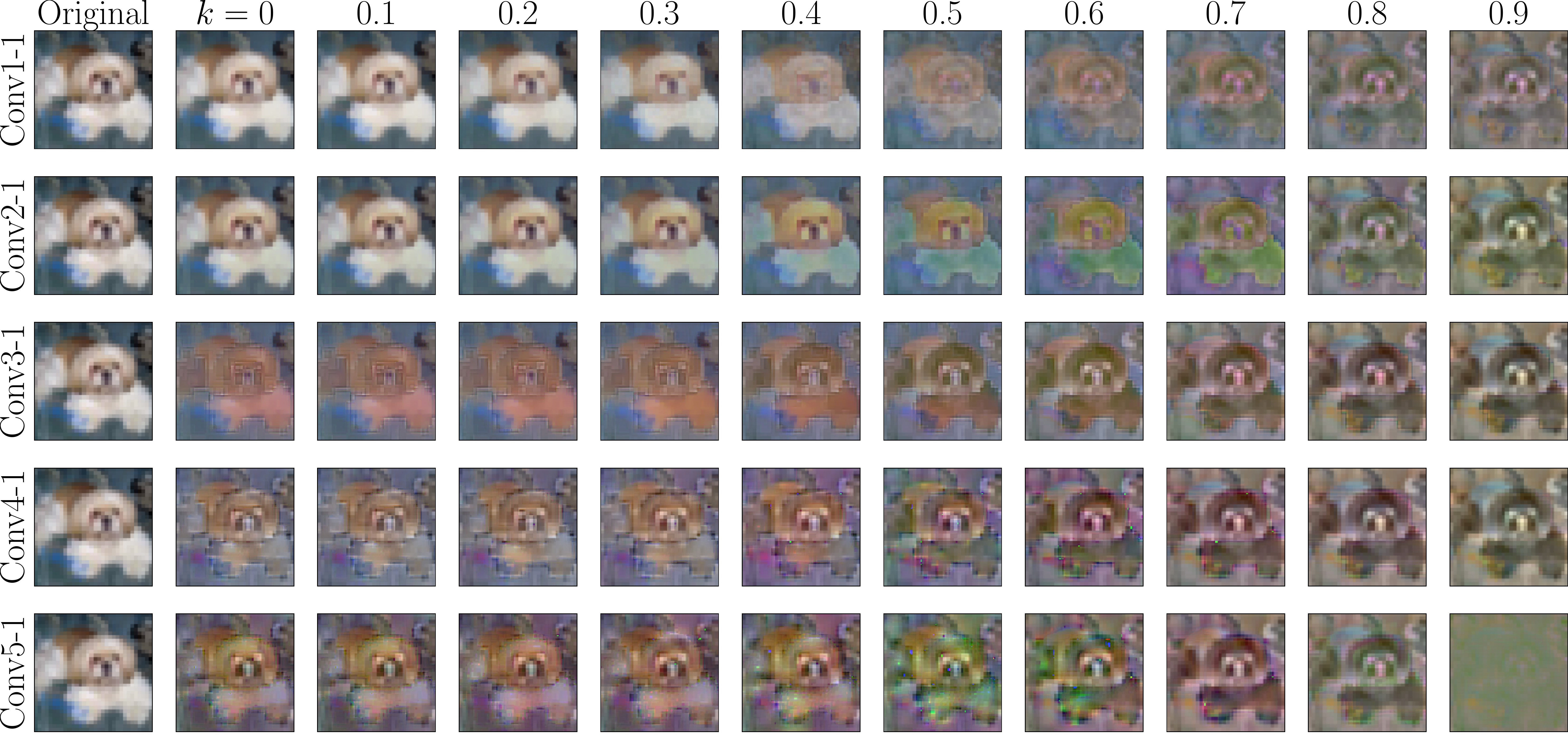}
    \caption{Inverted CIFAR-10 images (a dog) from different layers with different sparsity levels. $k$ stands for sparsity.}\label{fig:cifar_single_image}
\end{figure*}

\paragraph{Inversion from the same layer.} Figure~\ref{fig:multi_img_same_layer_mnist} and Figure~\ref{fig:multi_img_same_layer_cifar} show inverted images obtained by running multiple inversion attacks on the same layer (we show one sample from each class). Increasing the target sparsity of pruning makes the inversion attack harder for all classes. 

\begin{figure*}[!t]
\centering
\includegraphics[width=0.95\linewidth]{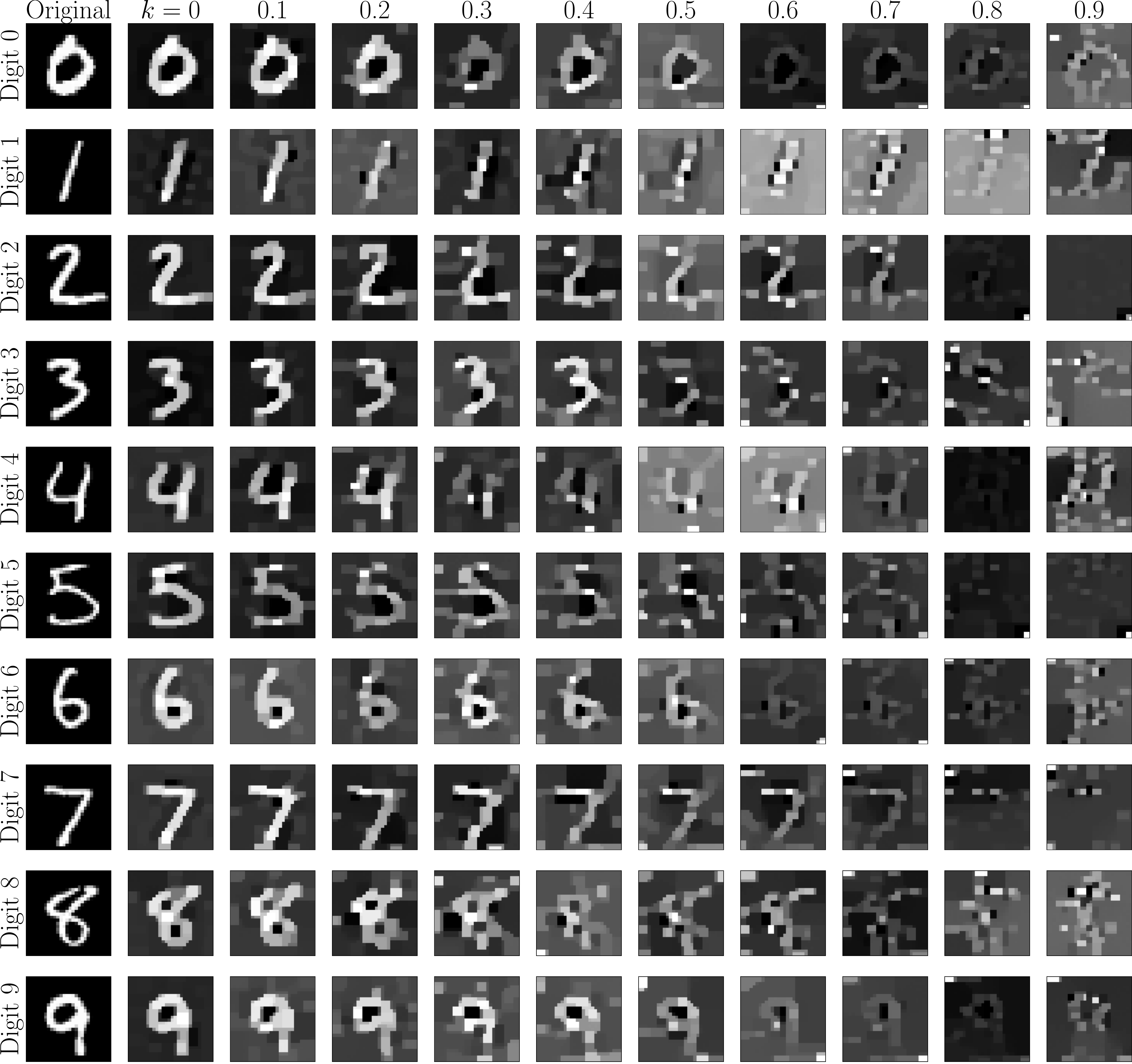}
\caption{Inversion from layer `Conv5' of LeNet-5 (MNIST). Each row shows a sample from each class.}
\label{fig:multi_img_same_layer_mnist}
\end{figure*}

\begin{figure*}[!t]
\centering
\includegraphics[width=0.95\linewidth]{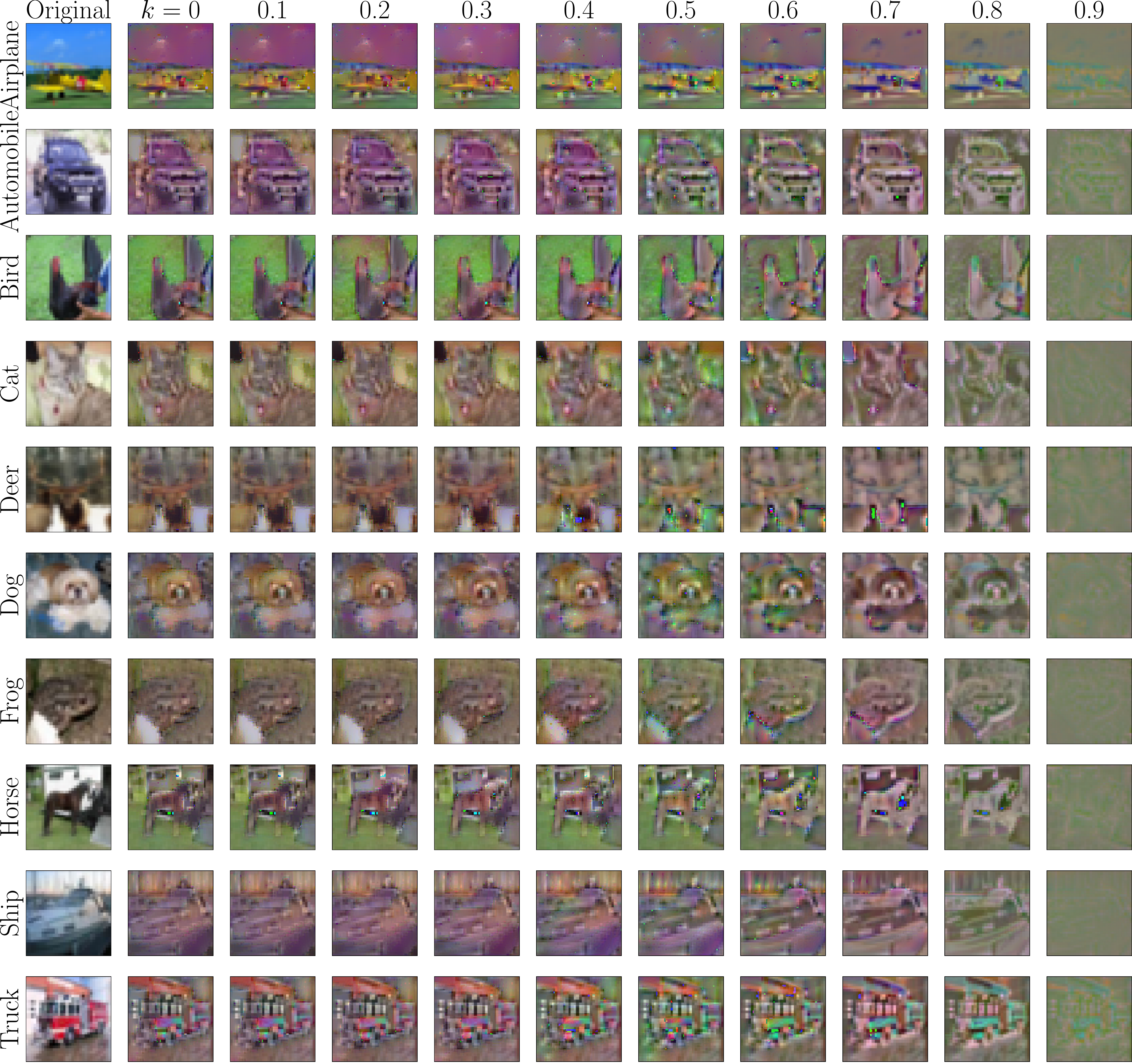}
\caption{Inversion from layer `Conv5-1' of VGG-19 (CIFAR-10). Each row shows a sample from each class.}
\label{fig:multi_img_same_layer_cifar}
\end{figure*}

\subsection{More on relative privacy leakage of pruning and adding noise}
Figure~\ref{fig:mnist_4metric} and Figure~\ref{fig:cifar_4metric} show the privacy leakage differences between magnitude-based pruning and adding differentially private noise for the experiments on MNIST and CIFAR-10 respectively (full version of Figure~\ref{fig:sim_mini_mnist} and Figure~\ref{fig:sim_mini_cifar}). As shown, given the same accuracy requirement, pruning helps preserve more privacy than adding noise.  When accuracy decreases from 0.9 to 0.1, the similarity curves of pruning and adding noise `converge'. This is because when the majority of the network parameters are 0's, or the hidden-layer output is dominated by the noise, running the inversion attack will hardly succeed in obtaining information about the original input.

Note that we did not provide the range for INFE similarity results because they all have minimum 0 and maximum 1 (even random guessing can make a correct prediction) and thus not informative.

\begin{figure*}[!t]
    \centering
        \subfloat{\includegraphics[width=0.46\linewidth]{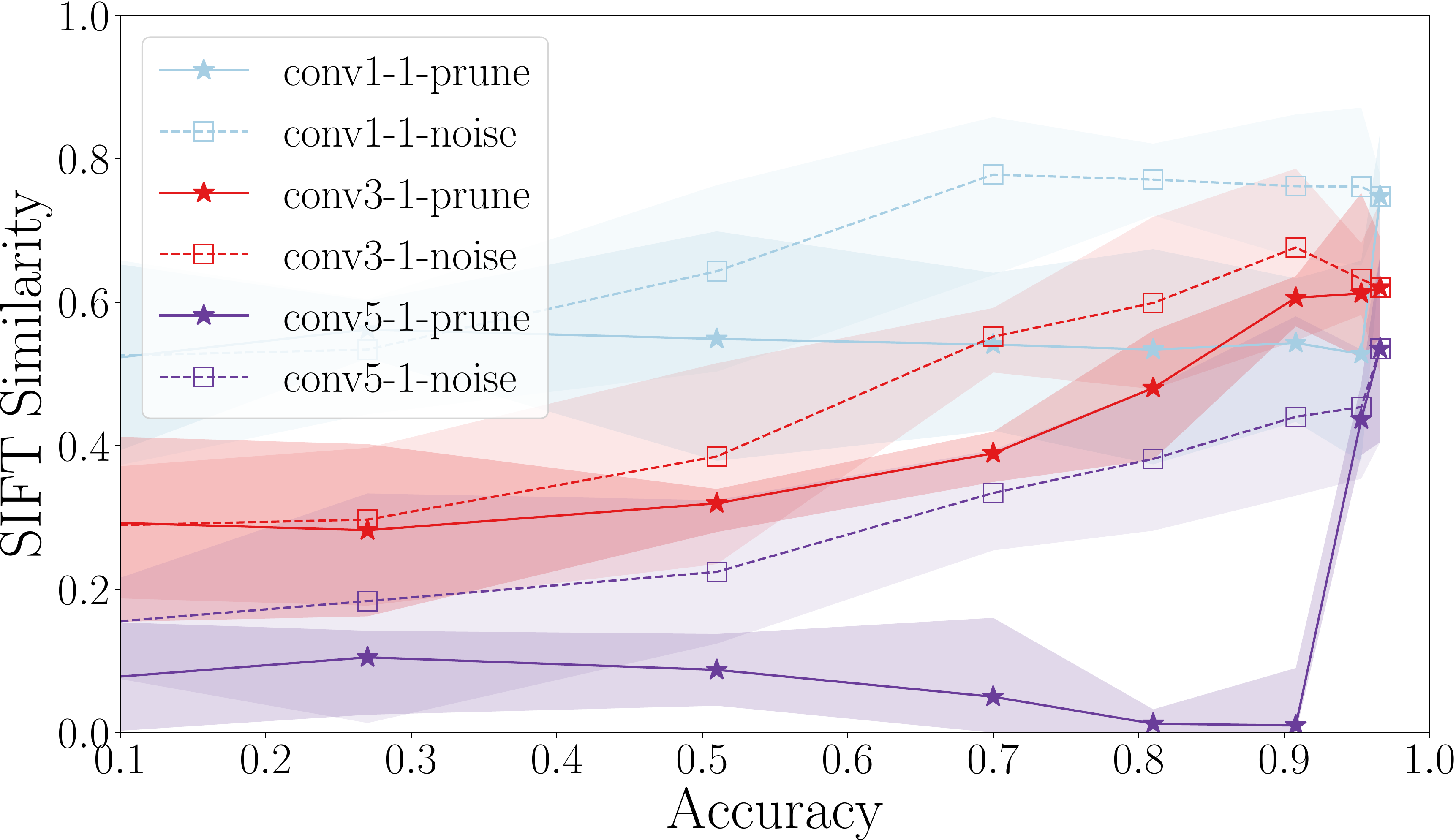}}
        \hspace{4mm}
        \subfloat{\includegraphics[width=0.46\linewidth]{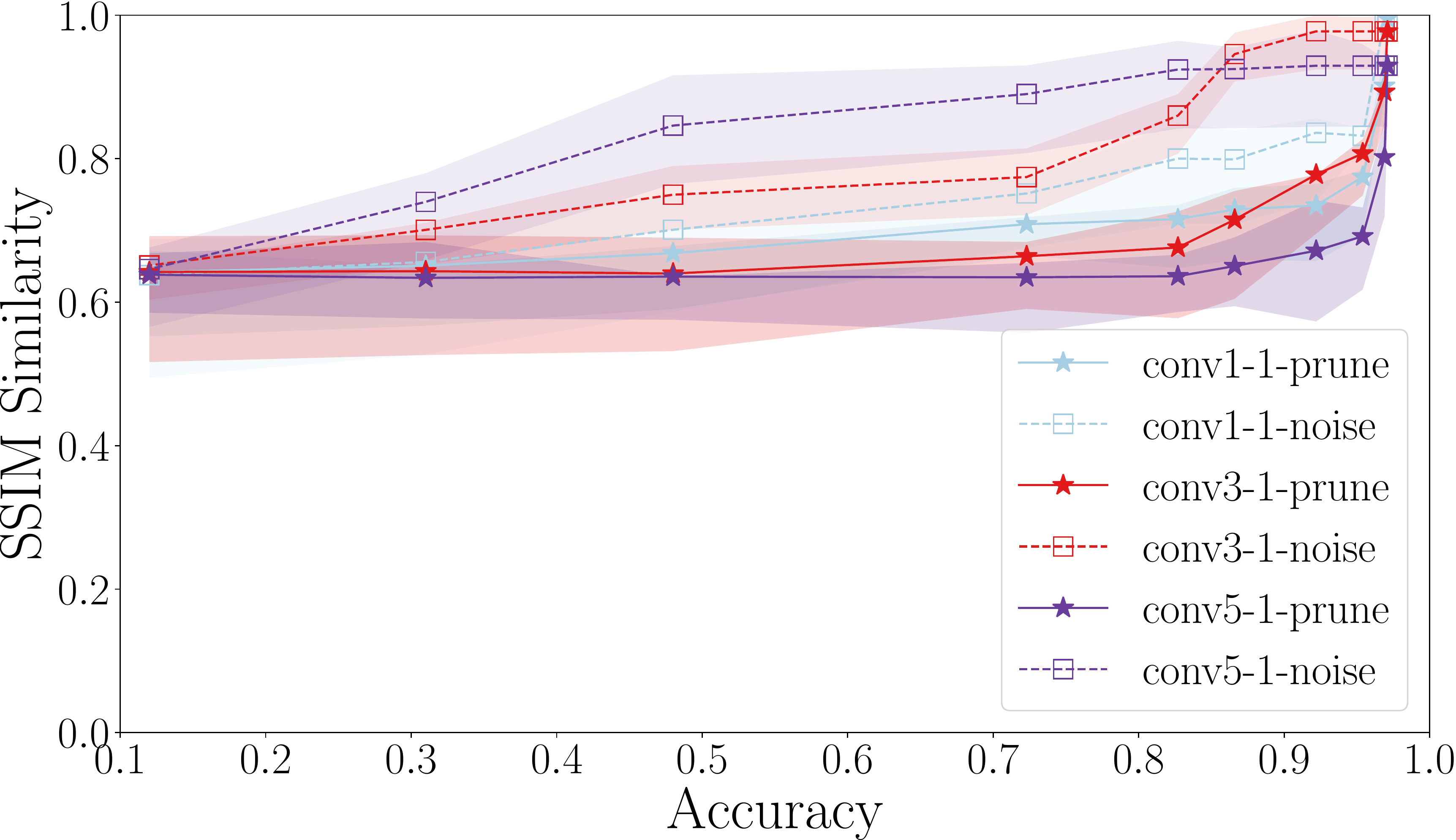}}
        \hspace{4mm}
        \subfloat{\includegraphics[width=0.46\linewidth]{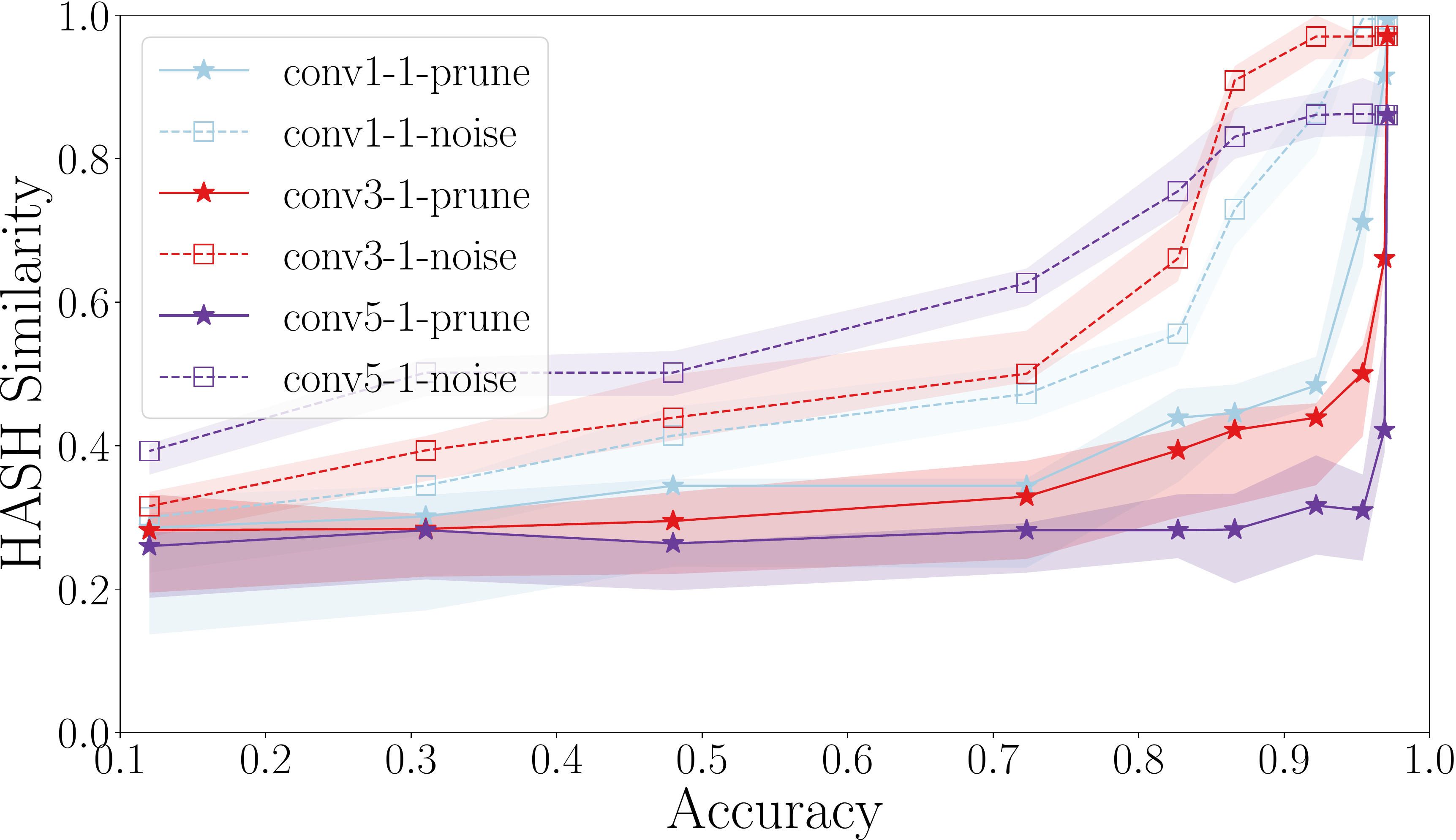}} 
        \hspace{4mm}
        \subfloat{\includegraphics[width=0.46\linewidth]{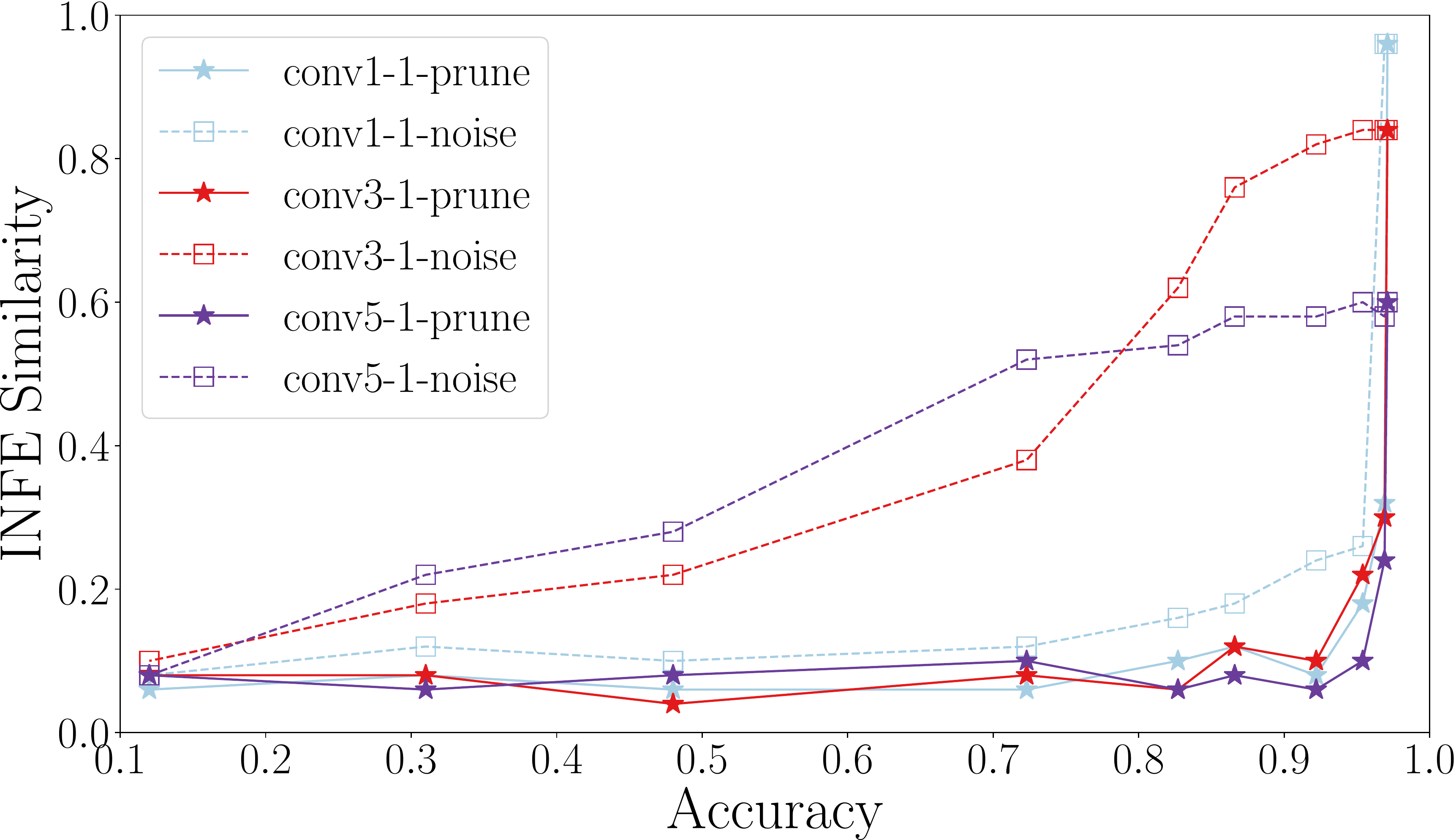}}
    \caption{Similarity between inversion and original images by applying pruning and adding noise on different layers (LeNet-5 + MNIST). Shadow represents value range.}
    \label{fig:mnist_4metric}
\end{figure*}

\begin{figure*}[!t]
    \centering
        \subfloat{\includegraphics[width=0.46\linewidth]{cifar_SIFT.pdf}}
        \hspace{4mm}
        \subfloat{\includegraphics[width=0.46\linewidth]{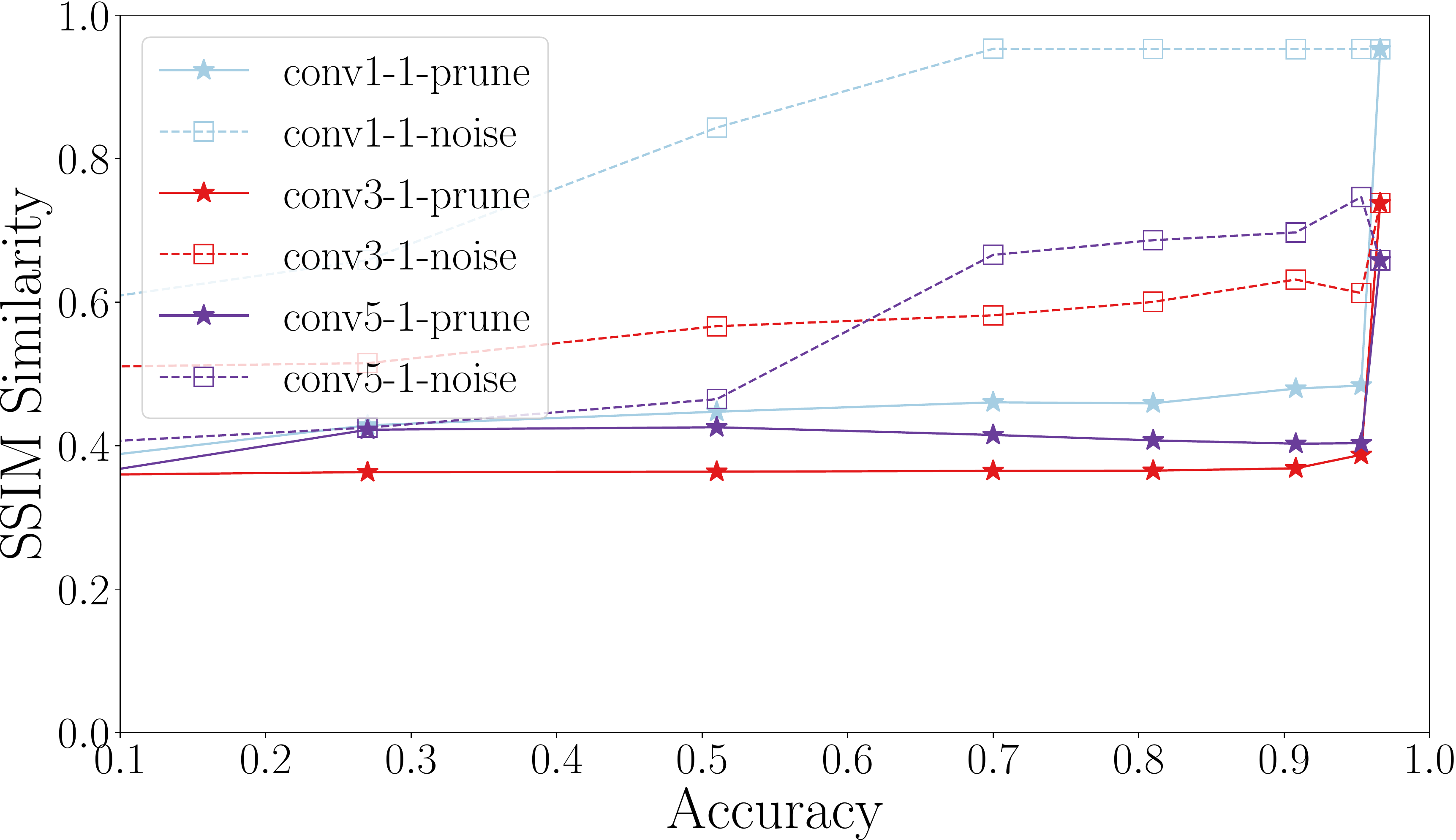}}
        \hspace{4mm}
        \subfloat{\includegraphics[width=0.46\linewidth]{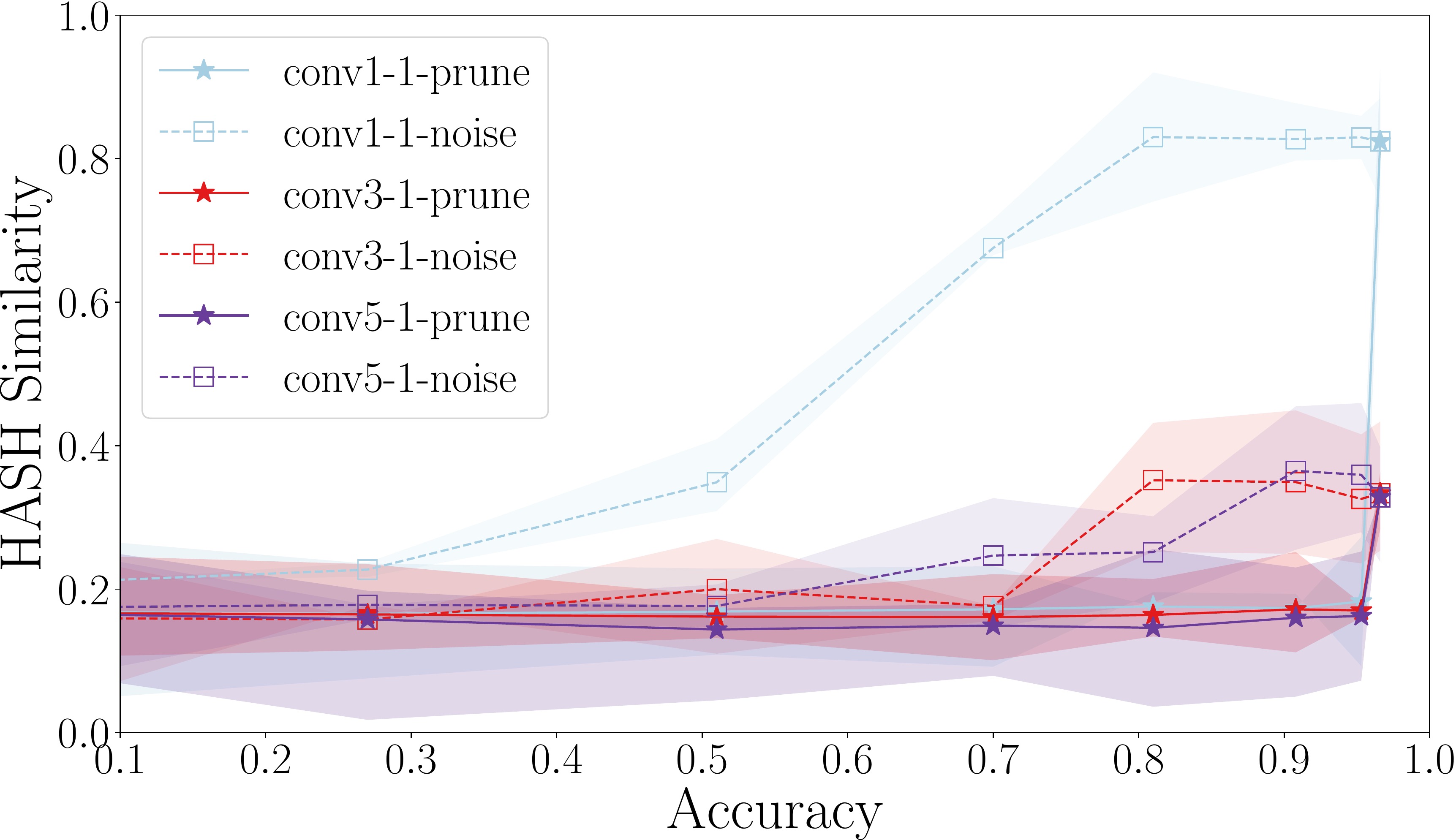}} 
        \hspace{4mm}
        \subfloat{\includegraphics[width=0.46\linewidth]{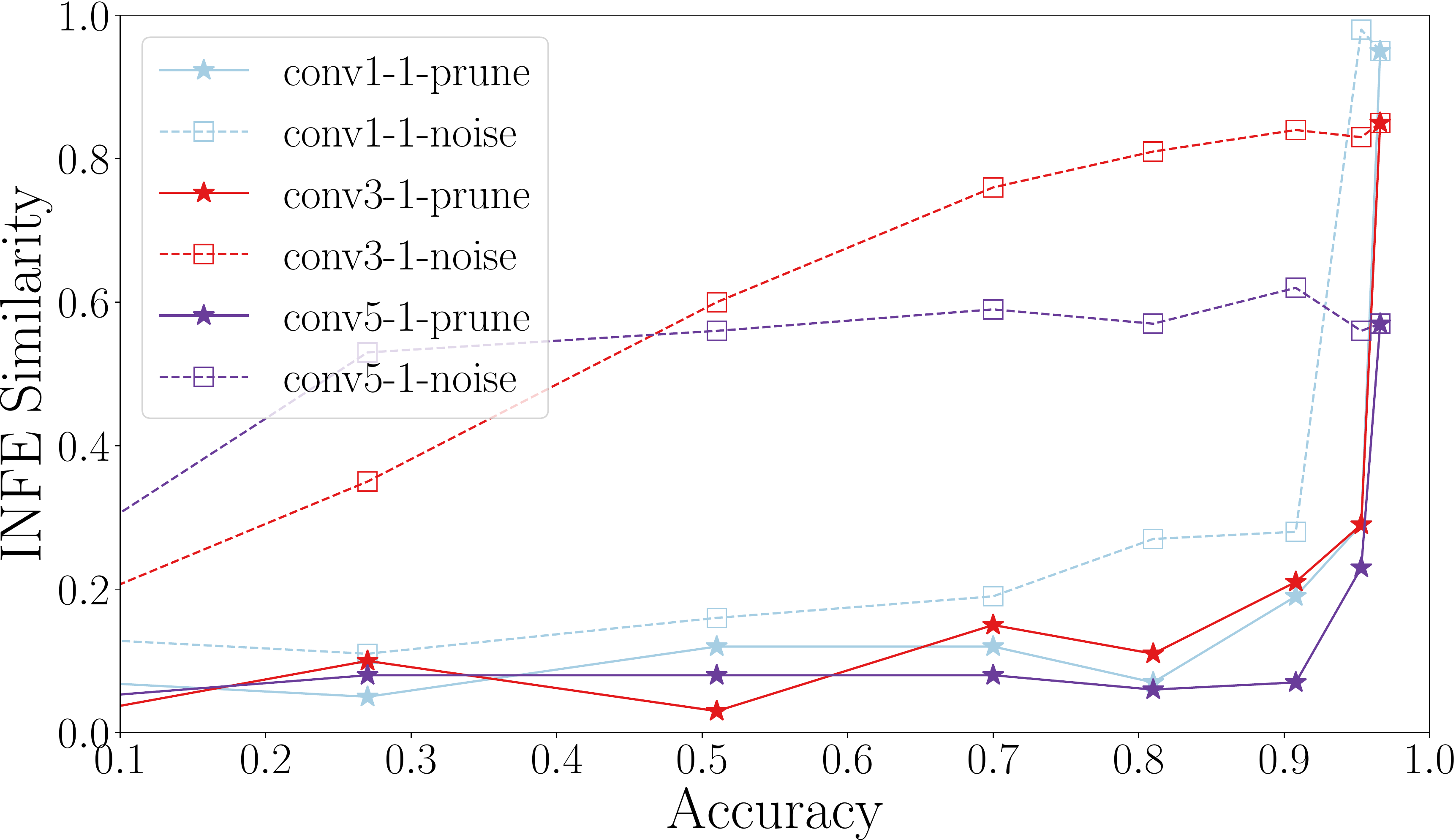}}
    \caption{Similarity between inversion and original images by applying pruning and adding noise on different layers (VGG-19 + CIFAR-10). Shadow represents value range.}
    \label{fig:cifar_4metric}
\end{figure*}

\end{document}